\newcommand\BibTeX{{\rmfamily B\kern-.05em \textsc{i\kern-.025em b}\kern-.08em
T\kern-.1667em\lower.7ex\hbox{E}\kern-.125emX}}
\begin{document}

\runninghead{A Mathematical Framework for NeSy}

\title{A Mathematical Framework \\ 
and a Suite of Learning Techniques \\ 
for Neural-Symbolic Systems}

\author{Charles Dickens\affilnum{1}, Connor Pryor\affilnum{1}, Changyu Gao\affilnum{2}, Alon Albalak\affilnum{3}, Eriq Augustine\affilnum{1}, William Wang\affilnum{3}, Stephen Wright\affilnum{2}, and Lise Getoor\affilnum{1}}

\affiliation{\affilnum{1}University of California Santa Cruz\\
\affilnum{2}University of Wisconsin Madison\\
\affilnum{3}University of California Santa Barbara}

\corrauth{Charles Dickens, Department of Computer Science and Engineering
University of California,
Santa Cruz, 
CA 95060, 
USA}

\email{cadicken@ucsc.edu}

\begin{abstract}

The field of Neural-Symbolic (NeSy) systems is growing rapidly. 
Proposed approaches show great promise in achieving symbiotic unions of neural and symbolic methods.  
However, a unifying framework is needed to organize common NeSy modeling patterns and develop general learning approaches.
In this paper, we introduce Neural-Symbolic Energy-Based Models (NeSy-EBMs), a unifying mathematical framework for discriminative and generative NeSy modeling.
Importantly, NeSy-EBMs allow the derivation of general expressions for gradients of prominent learning losses, and we introduce a suite of four learning approaches that leverage methods from multiple domains, including bilevel and stochastic policy optimization.
Finally, we ground the NeSy-EBM framework with Neural Probabilistic Soft Logic (NeuPSL), an open-source NeSy-EBM library designed for scalability and expressivity, facilitating the real-world application of NeSy systems.
Through extensive empirical analysis across multiple datasets, we demonstrate the practical advantages of NeSy-EBMs in various tasks, including image classification, graph node labeling, autonomous vehicle situation awareness, and question answering.

\end{abstract}

\keywords{Neural-Symbolic AI, Energy-based Models, Deep Learning}

\maketitle

\section{Introduction}
\label{sec:introduction}

The promise of mutually beneficial neural and symbolic integrations has motivated significant advancements in machine learning research.
Much of the recent progress has been achieved in the neural-symbolic (NeSy) computing literature \citep{garcez:book02, garcez:book09, garcez:jal19}.
NeSy is a large and rapidly growing community that has been hosting regular workshops since 2005 \citep{nesy05} and began holding conferences in 2024 \citep{nesy24}.
At a high level, NeSy research aims to build algorithms and architectures that combine neural and symbolic components \citep{xu:icml18, yang:ijcai20, cohen:jair20, manhaeve:ai21, wang:icml19, badreddine:ai22, ahmed:neurips22, pryor:ijcai23}.
With the continued growth of the field, NeSy requires a solid theoretical foundation built on a unifying framework.
Such a framework should support understanding and organizing the strengths and limitations of existing NeSy approaches, while guiding design decisions to better match the requirements of specific applications.
Furthermore, it should enable the development of general-purpose and widely applicable NeSy inference and learning algorithms.

In this paper, we introduce \emph{Neural-Symbolic Energy-Based Models} (NeSy-EBMs), a mathematical framework for NeSy.
NeSy-EBMs are a family of Energy-Based Models (EBMs) \citep{lecun:book06} defined by energy functions that are compositions of neural and symbolic components.
The neural component consists of a collection of deep models, and its output is provided to the symbolic component, which measures the compatibility of variables using domain knowledge and constraints.
This formulation serves as a foundation for characterizing NeSy modeling paradigms and for developing general-purpose inference and learning algorithms.
Moreover, by grounding NeSy in the well-established EBM perspective, NeSy-EBMs connects NeSy to the broader machine learning literature.

The insights derived from the NeSy-EBM framework motivate the development of a comprehensive system that supports key modeling paradigms.
To this end, we introduce Neural Probabilistic Soft Logic (NeuPSL), an open-source, expressive, and efficient library for constructing NeSy-EBMs.
NeuPSL uses the principled and comprehensive semantics of Probabilistic Soft Logic (PSL) \citep{bach:jmlr17} to create a NeSy-EBM symbolic component.
Then, the neural component can be built using any deep modeling library and seamlessly integrated with the PSL symbolic component.
Further, to ensure differentiability properties and provide principled forms of gradients for learning, we present a new formulation and regularization of PSL inference as a constrained quadratic program.

Further, we develop a suite of principled neural and symbolic parameter learning techniques for NeSy.
NeSy-EBM predictions are typically obtained by finding a state of variables with high compatibility (i.e., low energy).
The high compatibility state is found by minimizing the energy function via an optimization algorithm, for instance, an interior point method for continuous variables \citep{nocedal:wright:book06} or a branch-and-bound strategy for discrete problems \citep{papadimtriou:book98}. 
The complex prediction process makes finding a gradient or descent direction of a standard machine learning loss with respect to the parameters difficult.
To formalize these challenges and propose solutions, we introduce a categorization of learning losses based on the complexity of the relation to the NeSy-EBM energy function.
We derive general expressions for gradients of the categorized learning losses with respect to the neural and symbolic parameters when the loss is differentiable. 
Additionally, we introduce four NeSy-EBM learning algorithms: one for learning the neural and symbolic weights separately and three for end-to-end learning.
Our end-to-end learning algorithms make use of ideas from the bilevel optimization and reinforcement learning literature.
Moreover, we discuss the strengths and limitations of each algorithm and describe its applicability to various modeling paradigms.

We empirically investigate the utility of NeSy-EBM's for four use-cases:
1) constraint satisfaction and joint reasoning, 2) fine-tuning and adaptation, 3) few-shot and zero-shot reasoning, and 4) semi-supervised learning.
We simultaneously analyze multiple NeSy-EBM modeling paradigms and learning algorithms in an extensive empirical analysis across numerous variations of seven datasets.
We show compelling results for real-world applications, including graph node classification, computer vision object detection, and natural language question answering.
Notably, NeSy-EBMs are shown to enhance neural network prediction accuracy, enforce constraints, and improve label and data efficiency in semi-supervised and low-data settings, respectively. 

This paper integrates and expands on our prior work on NeSy integrations and applications via the NeSy-EBM framework \citep{pryor:ijcai23, dickens:icml24, dickens:make24}.
The strengths of NeSy-EBM models have been demonstrated on a variety of tasks, including dialog structure induction \citep{pryor:acl23}, natural language \citep{pan:emnlp23, dickens:make24} and visual question answering \citep{yi:neurips19}, autonomous vehicle situation awareness \citep{giunchiglia:ml23}, human activity recognition \citep{arrotta:acmimwut24}, recommendation \citep{carraro:aixia22}, and autonomous agent navigation and exploration \citep{zhou:icml23}.
Additionally, the NeSy-EBM framework has enabled a deeper understanding of the connections and capabilities of NeSy systems \citep{dickens:make24}.
Moreover, general NeSy inference and learning algorithms have been developed \citep{dickens:icml24} along with new open-source NeSy implementations \citep{pryor:ijcai23}.
The NeSy-EBM framework has become a powerful tool for formalizing connections and capabilities of NeSy models and for developing new NeSy architectures and learning algorithms.
In this work, we organize and advance our prior work to 
1) present NeSy-EBMs, a mathematical framework for NeSy, 
2) introduce NeuPSL, an open-source tool for building NeSy-EBMs, 
3) develop a suite of general NeSy learning techniques, and
4) simultaneously demonstrate the value of NeSy and analyze our learning techniques with an extensive empirical analysis.

This paper is organized as follows.
In \secref{sec:related-work} we discuss related work on NeSy frameworks and NeSy applications.
In \secref{sec:nesy-ebms}, we formally define NeSy-EBMs and introduce three practical NeSy modeling paradigms.
Next, in \secref{sec:neupsl-and-deep-hlmrfs}, we introduce NeuPSL, a scalable and expressive NeSy-EBM implementation.
In \secref{sec:nesy-ebm-learning}, we present a suite of NeSy learning techniques.
Then, in \secref{sec:experiments}, we use NeuPSL to build NeSy-EBMs for our empirical analysis of NeSy use cases, modeling paradigms, and learning algorithms.
Finally, we discuss limitations, takeaways, and future work in \secref{sec:limitations} and \secref{sec:conclusion}.

\section{Related Work}
\label{sec:related-work}

There is a long, rich history of research on the integration of symbolic knowledge and reasoning with neural networks, which has rapidly evolved in the past decade.
In this work, we establish a unifying framework for achieving such integration by connecting two foundational areas of machine learning research: Neural-Symbolic (NeSy) AI and energy-based modeling (EBMs).
The remainder of this section provides an overview of NeSy frameworks and applications.
Additionally, we provide an extended related
work in \appref{appendix:extended-related-work}, covering EBMs, and bilevel optimization.

\subsection{Neural-Symbolic Frameworks}
\label{sec:related-work-nesy-frameworks}

NeSy  empowers neural models with domain knowledge and reasoning through integrations with symbolic systems \citep{garcez:book02, garcez:book09, garcez:jal19, deraedt:ijcai20, besold:nesyai22}.
Various taxonomies have been proposed to categorize NeSy literature.
\citenoun{bader:wwst05}, \citenoun{garcez:jal19}, and most recently \citenoun{besold:nesyai22} provide extensive surveys using characteristics such as knowledge representation, neural-symbolic connection, and applications to compare and describe methods.
Similarly, the works of \citenoun{deraedt:ijcai20} and \citenoun{lamb:ijcai20} propose taxonomies to connect NeSy to statistical relational learning and graph neural networks, respectively.
Focused taxonomies are described by \citenoun{giunchiglia:ijcai22} and \citenoun{krieken:ai22} for deep learning with constraints and symbolic knowledge representations and \citenoun{dash:sr22} for integrating domain knowledge into deep neural networks.
\citenoun{marconato:neurips23} characterizes the common reasoning mistakes made by NeSy models, and \citenoun{marconato:arxiv24} presents an ensembling technique that calibrates the model’s concept-level confidence to attempt to identify these mistakes.
Recently, \citenoun{wan:arxiv24} explored various NeSy AI approaches primarily focusing on workloads on hardware platforms, examining runtime characteristics and underlying compute operators.
Finally, \citenoun{krieken:arxiv24} propose a language for NeSy called ULLER that aims to unify the representation of major NeSy systems, with the long-term goal of developing a shared Python library.
Each of these surveys and taxonomies contributes to the comparison, understanding, and organization of the diverse collection of NeSy methodologies.
We contribute to these efforts by introducing a common mathematical framework (\secref{sec:nesy-ebms}) and describe a collection NeSy modeling paradigms (\secref{sec:nesy-ebms-a-taxonomy-of-modeling-paradigms}).

We organize our exposition of related NeSy AI frameworks into three research areas: learning from constraints, differentiable reasoning layers, and reasoner agnostic systems.
The first subsection discusses NeSy learning losses.
Whereas, the second and third subsections cover NeSy approaches to both learning and inference.
In the following subsections, we define each of the research areas and describe prominent examples of NeSy models belonging to the area. 

\subsubsection{Learning from Constraints}
\label{sec:related-work-learning-from-constraints}

Learning from constraints is using domain knowledge and common sense to construct a learning loss function \citep{giunchiglia:ijcai22, krieken:ai22}.
This approach encodes the knowledge captured by the loss into the weights of the network.
A key motivation is to ensure the compatibility of predictions with domain knowledge and common sense.
Moreover, learning with constraints avoids potentially expensive post-prediction interventions that would be necessary with a model that is not aligned with domain knowledge.  
However, consistency with domain knowledge and sound reasoning are not guaranteed during inference for NeSy models in this class.
This is because there is no symbolic reasoning performed to obtain predictions from the system. 

\citenoun{demeester:emnlp16}, \citenoun{rocktaschel:neurips17}, \citenoun{diligenti:icmla17}, \citenoun{bovsnjak:icml17}, and \citenoun{xu:icml18} are prominent examples of the learning-with-constraints NeSy paradigm.
\citenoun{demeester:emnlp16} incorporates domain knowledge and common sense into natural language and knowledge base representations by encouraging partial orderings over embeddings via a regularization of the learning loss. 
Similarly, \citenoun{rocktaschel:neurips17} leverage knowledge represented as a differentiable loss derived from logical rules to train a matrix factorization model for relation extraction.
\citenoun{diligenti:icmla17} use fuzzy logic to measure how much a model's output violates constraints, which is minimized during learning.
\citenoun{xu:icml18} introduces a loss function that represents domain knowledge and common sense by using probabilistic logic semantics.
More recently, \citenoun{giunchiglia:ml23} introduced an autonomous event detection dataset with logical requirements, and \citenoun{stoian:nesy23} shows that incorporating these logical requirements during the learning improves generalization.

\subsubsection{Differentiable Reasoning Layers}
\label{sec:related-work-differentiable-reasoning-layers}

Another successful area of NeSy is in differentiable reasoning layers.
The primary difference between this family of NeSy approaches and learning from constraints is that an explicit representation of knowledge and reasoning is maintained in the model architecture during both learning and inference.
Moreover, a defining aspect of differentiable reasoning layers is the instantiation of knowledge and reasoning components as differentiable computation graphs.
Differentiable reasoning layers support automatic differentiation during learning and symbolic reasoning during inference.

Pioneering works in differentiable reasoning include those of \citenoun{wang:icml19}, \citenoun{cohen:jair20}, \citenoun{yang:ijcai20}, \citenoun{manhaeve:ai21}, \citenoun{derkinderen:ijar24}, \citenoun{badreddine:ai22}, \citenoun{ahmed:neurips22} and \citenoun{ahmed:aistats23}.
\citenoun{wang:icml19} integrates logical reasoning and deep models by introducing a differentiable smoothed approximation to a maximum satisfiability (MAXSAT) solver as a layer. 
\citenoun{cohen:jair20} introduces a probabilistic first-order logic called TensorLog.
This framework compiles tractable probabilistic logic programs into differentiable layers.
A TensorLog system is end-to-end differentiable and supports efficient parallelizable inference.
Similarly, \citenoun{yang:ijcai20} and \citenoun{manhaeve:ai21} compile tractable probabilistic logic programs into differentiable functions with their frameworks NeurASP and DeepProblog, respectively.
NeurASP and DeepProblog use answer set programming \citep{brewka:acm11} and ProbLog \citep{deraedt:ijcai07} semantics, respectively.
\cite{winters:aaai22} proposes DeepStochLog, a NeSy framework based on stochastic definite clause grammars that define a probability distribution over possible derivations.
Recently, \cite{maene:neurips24} proposes DeepSoftLog, a superset of ProbLog, adding embedded terms that result in probabilistic rather than fuzzy semantics.
The logic tensor network (LTN) framework proposed by \citenoun{badreddine:ai22} uses neural network predictions to parameterize functions representing symbolic relations with real-valued or fuzzy logic semantics.
The fuzzy logic functions are aggregated to define a satisfaction level.
Predictions can be obtained by evaluating the truth value of all possible outputs and returning the highest-valued configuration.
\citenoun{badreddine:arxiv23} has expanded upon LTNs and presents a configuration of fuzzy operators for grounding formulas end-to-end in the logarithm space that is more effective than previous proposals.
Recently, \citenoun{ahmed:neurips22} introduced a method for compiling differentiable functions representing knowledge and logic using the semantics of probabilistic circuits (PCs) \citep{choi:unpub20}.
Their approach, called semantic probabilistic layers (SPLs), performs exact inference over tractable probabilistic models to enforce constraints over the predictions and uses the PC framework to ensure that the NeSy model is end-to-end trainable.

As pointed out by \citenoun{cohen:jair20}, answering queries in many (probabilistic) logics is equivalent to the weighted model counting problem, which is \#P-complete or worse.
Similarly, the MAXSAT problem studied by \citenoun{wang:icml19} is NP-hard.
Thus, since deep neural networks can be evaluated in time polynomial in their size, no polysize network can implement general logic queries unless \#P=P, or MAXSAT solving, unless NP=P.
For this reason, researchers have made progress towards building more efficient differentiable reasoning systems by, for example, restricting the probabilistic logic to tractable families \citep{cohen:jair20, ahmed:neurips22, maene:arxiv24}, or performing approximate inference \citep{wang:icml19, manhaeve:icpkrr21, krieken:neurips23}.

\subsubsection{Reasoner Agnostic Systems}
\label{sec:related-work-reasoner-agnostic-systems}

More recently, researchers have sought to build NeSy frameworks with more general reasoning and knowledge representation capacities with expressive mathematical program blocks for reasoning.
Mathematical programs are capable of representing cyclic dependencies across variables and ensuring the satisfaction of prediction constraints during learning and inference.
Moreover, the system's high-level inference and training algorithms are agnostic to the solver used for the mathematical program.

Prominent reasoner-agnostic systems include the works of \citenoun{amos:icml17}, \citenoun{agrawal:neurips19}, \citenoun{vlastelica:iclr20}, and \citenoun{cornelio:iclr23}.
\citenoun{amos:icml17} integrate linearly constrained quadratic programming problems (LCQP) as layers in deep neural networks with their OptNet framework, and show that the solutions to the LCQP problems are differentiable with respect to the program parameters. 
The progress of OptNet was continued by the work of \citenoun{agrawal:neurips19} with the application of domain-specific languages (DSLs) for instantiating the LCQP program layers.
DSLs provide a syntax for specifying LCQPs representing knowledge and constraints, making optimization layers more accessible.
\citenoun{vlastelica:iclr20} propose a method for computing gradients of solutions to mixed integer linear programs based on a continuous interpolation of the program's objective.
In contrast to the works of \citenoun{amos:icml17} and \citenoun{agrawal:neurips19}, the approach introduced by \citenoun{vlastelica:iclr20} supports integer constraints and achieves this by approximating the true gradient of the program output.
\citenoun{cornelio:iclr23} takes a different approach from these three methods by employing reinforcement learning techniques to support more general mathematical programs.
Specifically, the neural model's predictions are interpreted as a state in a Markov decision process.
Actions from a policy are taken to identify components that violate constraints to obtain a new state.
The new state is provided to a solver, which corrects the violations, and a reward is computed.
The solver is not assumed to be differentiable, and the REINFORCE algorithm \citep{williams:ml92} with a standard policy loss is used to train the system end-to-end without the need to backpropagate through the solver.

\subsection{Applications}
\label{sec:related-work-applications}

We highlight five proven applications NeSy: 
1) constraint satisfaction and joint reasoning, 
2) post-training, 
3) few-shot and zero-shot reasoning, 
4) semi-supervised learning, and 
5) reasoning with noisy data.
This list of use cases is not exhaustive. 
However, the efficacy of the NeSy approach in these applications is well established, and we will illustrate four of these use cases in our empirical evaluation.
The following subsections define the problem and the high-level motivation for utilizing NeSy techniques in such settings.
Additionally, we discuss collections of existing NeSy systems for each application. 

\subsubsection{Constraint Satisfaction and Joint Reasoning}
\label{sec:related-work-applications-constraint-satisfaction-and-joint-reasoning}

In real-world settings, a deployed model's predictions must meet well-defined requirements.
Additionally, leveraging known patterns or dependencies in the output can significantly improve a model's accuracy and trustworthiness.
{\em Constraint satisfaction} is finding a prediction that satisfies all requirements.
NeSy systems perform constraint satisfaction by reasoning across their output to provide a structured prediction, typically using some form of joint reasoning.
In other words, NeSy systems integrate constraints and knowledge into the prediction process.

A commonly used example of constraint satisfaction and joint reasoning with NeSy techniques is puzzle-solving.
Many NeSy frameworks are introduced with an evaluation on visual Sudoku and its variants \citep{wang:icml19, augustine:nesy22}.
In the visual Sudoku problem, puzzles are constructed with handwritten digits, and a model must classify the digits and infer numbers to fill in the empty cells using the rules of Sudoku.
Empirical evaluations of NeSy systems that perform constraint satisfaction and joint reasoning on visual Sudoku problems can be found in  \cite{wang:icml19}, \cite{augustine:nesy22}, \cite{pryor:ijcai23}, and \cite{morra:nesy23}.
Similarly, \citenoun{vlastelica:iclr20} introduces the shortest path finding problem as a NeSy task.
Images of terrain maps are partitioned into a grid, and the model must find a continuous lowest-cost path between two points.
The works of \citenoun{vlastelica:iclr20} and \cite{ahmed:neurips22} perform constraint satisfaction and joint reasoning with NeSy models for shortest path finding.

Constraint satisfaction and joint reasoning with NeSy models are also effective for real-world natural language tasks.
For instance, \citenoun{sachan:neurips18} introduces the Nuts\&Bolts NeSy system to build a pipeline for parsing physics problems.
The NeSy system jointly infers a parsing from multiple components that incorporates domain knowledge and prevents the accumulation of errors that would occur from a naive composition.
In another work, \citenoun{zhang:icml23} propose GeLaTo (generating language with tractable constraints) for imposing constraints on text generated from language models.
GeLaTo generates text tokens by autoregressively sampling from a distribution constructed from a pre-trained language model and a tractable probabilistic model encoding the constraints.
More recently, \cite{pan:emnlp23} introduced the Logic-LM framework for integrating LLMs with symbolic solvers to improve complex problem-solving.
Logic-LM formulates a symbolic model using an LLM that uses prompts of the syntax and semantics of the symbolic language.
Finally, \citenoun{abraham:arxiv24} introduced CLEVR-POC, which requires leveraging logical constraints to generate plausible answers to questions about a hidden object in a given partial scene.
They then demonstrated remarkable performance improvements over neural methods by integrating an LLM with a visual perception network and a formal logical reasoner.

Computer vision systems also benefit from the constraint satisfaction and joint reasoning capabilities of NeSy models.
For instance, semantic image interpretation (SII) is the task of extracting structured descriptions from images.
\citenoun{donadello:ijcai17} implemented a NeSy model for SII using the Logic Tensor Network (LTN) \citep{badreddine:ai22} framework for reasoning about ``part-of'' relations between objects with logical formulas.
Similarly, \citenoun{yi:neurips19} propose a NeSy visual question-answering framework (NS-VQA). 
The authors employ deep representation learning for visual recognition to recover a structured representation of a scene and then language understanding to formulate a program from a question.
A symbolic solver executes the formulated program to obtain an answer.
\citenoun{sikka:techreport20} introduced Deep Adaptive Semantic Logic (DASL) for predicting relationships between pairs of objects in an image given the bounding boxes and object category labels, i.e., visual relationship detection.
The DASL system allows a modeler to express knowledge using first-order logic and to combine domain-specific neural components into a single deep network.
A DASL model is trained to maximize a measured truth value of the knowledge.

\subsubsection{Post-training}
\label{sec:related-work-applications-post-training}

We are in the era of foundation models in AI \citep{bommasani:arxiv22}.
It is now commonplace to adjust a model that is pre-trained on large amounts of general data (typically using self-supervision) for downstream tasks. 
Post-training is the process of updating the parameters of a pre-trained model to perform in a new domain \citep{devlin:arxiv19, hu:iclr22}. 
Fine-tuning and alignment are two example post-training techniques that adjust the pre-trained model parameters by minimizing a learning objective over a dataset, both of which are specialized for the downstream tasks.
These are necessary steps in the modern AI development process.

NeSy frameworks are used in post-training to design principled learning objectives that integrate knowledge and constraints relevant to the downstream task and the application domain.
\citenoun{giunchiglia:ijcai22} provides a recent survey of the use of logically specified background knowledge to train neural models. 
NeSy learning losses are applied in the work of \citenoun{giunchiglia:ml23} to post-train a neural system for autonomous vehicle situation awareness \citep{singh:tpa2021}. 
In another computer vision task, \citenoun{arrotta:acmimwut24} develop a NeSy loss for training a neural model to perform context-aware human activity recognition.
NeSy post-training has also been explored in the natural language processing literature.
Recently, \citenoun{ahmed:neurips23} proposed the pseudo-semantic loss for detoxifying large language models.
The authors disallow a list of toxic words and show this intuitive approach steers a language model's generation away from harmful language and achieves state-of-the-art detoxification scores.
\citenoun{feng:naacl24} has explored directly learning the reasoning process of logical solvers within the LLM to avoid parsing errors.
Finally, \citenoun{cunnington:arXiv24} introduced NeSyGPT, which post-trains a vision-language foundation model to extract symbolic features from raw data before learning some answer set program.

\subsubsection{Few-Shot and Zero-Shot Reasoning}
\label{sec:related-work-applications-few-shot-and-zero-shot-reasoning}

Training data for a downstream task may be limited or even nonexistent.
In {\em few-shot} settings, only a few examples are available, while in {\em zero-shot} settings, no explicit training data is provided for the task.
In these settings, few-shot and zero-shot reasoning techniques are used to enable a model to generalize beyond the limited available training data.
Leveraging pre-trained models and domain knowledge are key ideas for succeeding in few-shot and zero-shot contexts.

NeSy techniques have been successfully applied for various few-shot and zero-shot settings.
Integrating symbolic knowledge and reasoning enables better generalization from a small number of examples.
NeSy systems can utilize symbolic knowledge to make deductions about unseen classes or tasks.
For instance, providing recommendations for new items or users can be viewed as a few-shot or zero-shot problem. 
\citenoun{kouki:recsys15} introduce the HyPER (hybrid probabilistic extensible recommender) framework for incorporating and reasoning over a wide range of information sources.
By combining multiple information sources via logical relations, the authors outperformed the state-of-the-art approaches of the time.
More recently, \cite{carraro:aixia22} developed an LTN-based recommender system to overcome data sparsity.
This model uses background knowledge to generalize predictions for new items and users quickly.
Few-shot and zero-shot reasoning tasks are also prevalent in object navigation.
The ability to navigate to novel objects and unfamiliar environments is vital for the practical use of embodied agents in the real world.
In this context, \citenoun{zhou:icml23} presents a method for  ``exploration with soft commonsense constraints" (ESC).
ESC first employs a pre-trained vision and language model for semantic scene understanding, then a language model to reason from the spatial relations, and finally PSL to leverage symbolic knowledge and reasoning to guide exploration. 
In natural language processing, \citenoun{pryor:acl23} infers the latent dialog structure of a goal-oriented conversation using domain knowledge to overcome the challenges of limited data and out-of-domain generalization.
\citenoun{sikka:techreport20} (mentioned above) also finds that the few-shot and zero-shot capabilities of NeSy models help in visual relationship detection.
Specifically, the addition of commonsense reasoning and knowledge improves performance by over $10\%$ in data-scarce settings.

\subsubsection{Semi-Supervised Learning}
\label{sec:related-work-applications-semi-supervised-learning}

Semi-supervised approaches facilitate learning from labeled as well as unlabeled data by combining the goals of supervised and unsupervised machine learning.
We refer the reader to the excellent recent survey on semi-supervised approaches by \citenoun{vanengelen:ml20}.
In short, supervised methods fit a model to predict an output label given a corresponding input, while unsupervised methods infer the underlying structure in the data.
The ability to leverage both labeled and unlabeled data leads to performance improvements, better generalization, and reduced labeling costs.

NeSy is a functional approach to semi-supervised learning that leverages knowledge and domain constraints to train a model.
This is achieved with loss functions that encode domain knowledge and structure and depend only on the input and output; that is, they do not require a label. 
Early work on semi-supervision with knowledge was carried out by \citenoun{chang:acl07}, who unify and leverage task-specific constraints to encode structure in the input and output data and possible labels.
They evaluate their semi-supervised learning method on the task of named entity recognition in citations as well as advertisements. 
More recently, \citenoun{ahmed:uai22} introduced the neuro-symbolic entropy regularization loss to encourage model confidence in predictions satisfying a set of constraints on the output.
They demonstrate that the regularization improves model performances in the task of entity relation extraction in text.
Additionally, \citenoun{stoian:nesy23} studied the effect of various t-norms used to soften the logical constraints for the symbolic component and demonstrated on a challenging road event detection dataset with logical requirements \citep{giunchiglia:ml23} that the incorporation of a symbolic loss drastically improves performance.

\section{A Mathematical Framework for NeSy}
\label{sec:nesy-ebms}
In this section, we introduce Neural-symbolic energy-based models (NeSy-EBMs): a unifying mathematical framework for NeSy.
Intuitively, NeSy-EBMs formalize the neural-symbolic interface as a composition of functions.
In other words, NeSy-EBMs organize modules by roles, specifically perception and reasoning.
The theory and notation introduced in this section are used throughout the rest of this paper.

\subsection{Neural Symbolic Energy-Based Models}
\label{sec:nesy-ebms-definition}

NeSy-EBMs are a family of EBMs \citep{lecun:book06} that integrate deep architectures with explicit encodings of symbolic relations via an energy function.
EBM energy functions measure the compatibility of variables, where low energy states correspond to high compatibility.
For NeSy-EBMs, high compatibility indicates that the variables are consistent with domain knowledge and common sense.
In the following section, the formal NeSy-EBM definition is grounded with intuitive examples of NeSy modeling paradigms.

As diagrammed in \figref{fig:nesy-ebm-energy-function}, a NeSy-EBM energy function composes a neural component with a symbolic component, represented by the functions $\mathbf{g}_{nn}$ and $\mathbf{g}_{sy}$, respectively.
The neural component is a deep model (or collection of deep models) parameterized by weights from a domain $\mathcal{W}_{nn}$, that takes a neural input from a domain $\mathcal{X}_{nn}$ and outputs a real-valued vector of dimension $d_{nn}$.
The symbolic component encodes domain knowledge and is parameterized by weights from a domain $\mathcal{W}_{sy}$.
It maps the inputs of a domain $\mathcal{X}_{sy}$, target (or output) variables from $\mathcal{Y}$, and neural outputs from $\textrm{Range}(\mathbf{g}_{nn})$ to a scalar value.
In other words, the symbolic component measures the compatibility of targets, inputs, and neural outputs with domain knowledge.
Intuitively, the neural component has the capacity and responsibility to perform low-level perception or generation, while the symbolic component has the role of performing high-level symbolic reasoning.

\begin{definition}
    \label{def:nesy-ebm-energy-function}
    A \textbf{NeSy-EBM energy function} is a mapping parameterized by neural and symbolic weights from domains $\mathcal{W}_{nn}$ and $\mathcal{W}_{sy}$, respectively, and quantifies the compatibility of a target variable from a domain $\mathcal{Y}$ and neural and symbolic inputs from the domains $\mathcal{X}_{nn}$ and $\mathcal{X}_{sy}$, respectively, with a scalar value: 
    \begin{align}
        E: \mathcal{Y} \times \mathcal{X}_{sy} \times \mathcal{X}_{nn} \times \mathcal{W}_{sy} \times \mathcal{W}_{nn} \to \mathbb{R}.
        \label{eq:nesy-ebm-energy-function}
    \end{align}
    
    A NeSy-EBM energy function is a composition of a \textbf{neural} and \textbf{symbolic component}.
    Neural weights parameterize the neural component, which outputs a real-valued vector of dimension $d_{nn}$:
    \begin{align}
        \mathbf{g}_{nn} : \mathcal{X}_{nn} \times \mathcal{W}_{nn} \to \mathbb{R}^{d_{nn}}.
    \end{align}
    The symbolic component maps the symbolic variables, symbolic parameters, and a real-valued vector of dimension $d_{nn}$ to a scalar value:
    \begin{align}
        g_{sy}: \mathcal{Y} \times \mathcal{X}_{sy} \times \mathcal{W}_{sy} \times \mathbb{R}^{d_{nn}} \to \mathbb{R}.
    \end{align}
    The NeSy-EBM energy function is
    \begin{align}
        & E: (\mathbf{y}, \mathbf{x}_{sy}, \mathbf{x}_{nn}, \mathbf{w}_{sy}, \mathbf{w}_{nn}) \mapsto g_{sy}(\mathbf{y}, \mathbf{x}_{sy}, \mathbf{w}_{sy}, \mathbf{g}_{nn}(\mathbf{x}_{nn}, \mathbf{w}_{nn})). \nonumber \qed
    \end{align}
\end{definition}

\begin{figure}[t]
    \centering
    \includegraphics[width=0.6 \textwidth]{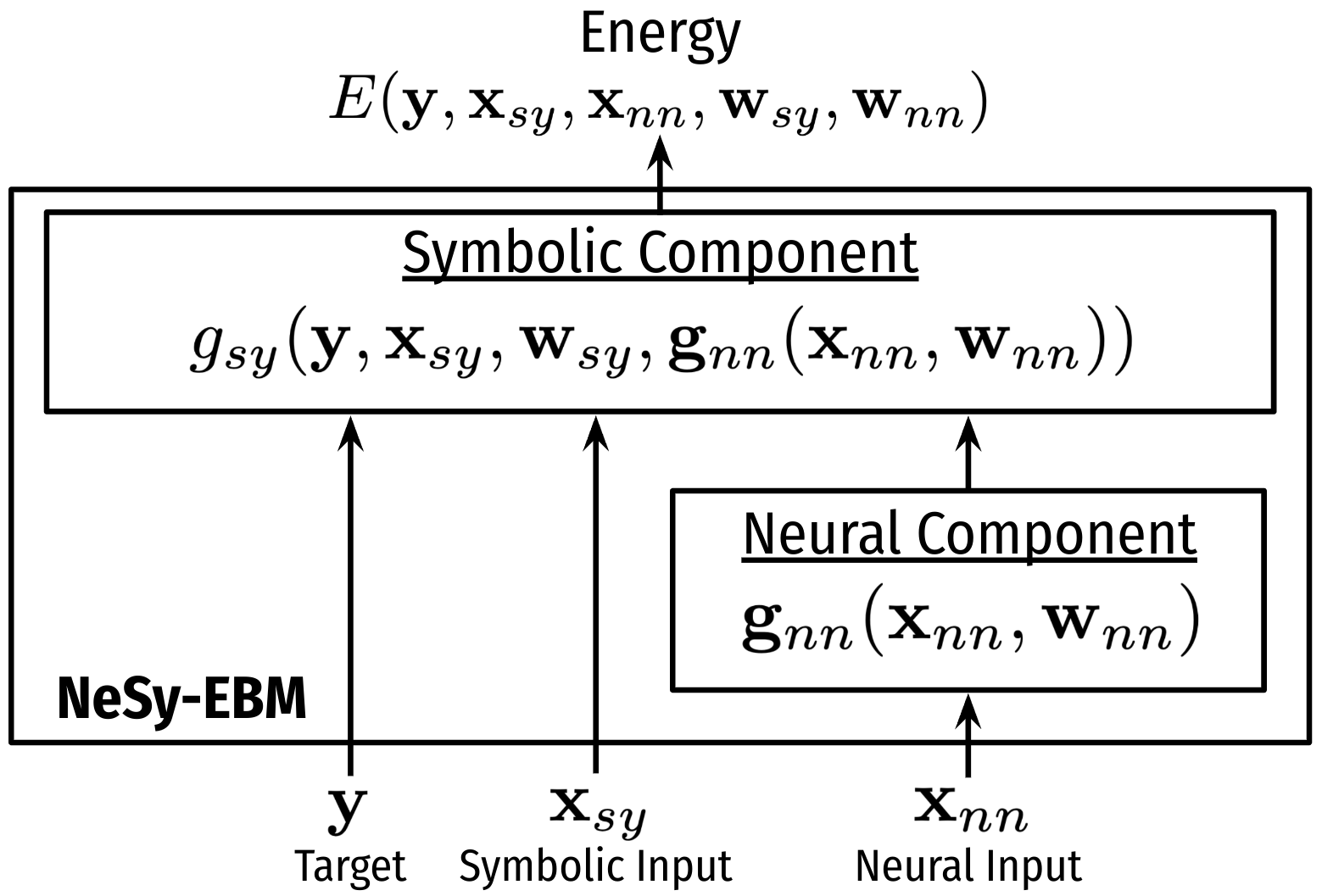}
    \caption{A neural-symbolic energy-based model.}
    \label{fig:nesy-ebm-energy-function}
\end{figure}

Given inputs and parameters $(\mathbf{x}_{sy}, \mathbf{x}_{nn}, \mathbf{w}_{sy}, \mathbf{w}_{nn}) \in \mathcal{X}_{sy} \times \mathcal{X}_{nn} \times \mathcal{W}_{sy} \times \mathcal{W}_{nn}$, NeSy-EBM energy functions can be used to define several inference tasks, for instance:
\begin{itemize}
    \item \emph{Prediction, classification, and decision making}: Find targets minimizing the energy function.
    \begin{align}
        \label{eq:nesy-ebm-prediction}
        \argmin_{\hat{\mathbf{y}} \in \mathcal{Y}} E(\hat{\mathbf{y}}, \mathbf{x}_{sy}, \mathbf{x}_{nn}, \mathbf{w}_{sy}, \mathbf{w}_{nn}).
    \end{align}
    \item \emph{Ranking}: Sort a set of targets in order of increasing energy.
    \begin{align}
        \label{eq:nesy-ebm-ranking}
        & E(\mathbf{y}^{\mathbf{r}_{1}}, \mathbf{x}_{sy}, \mathbf{x}_{nn}, \mathbf{w}_{sy}, \mathbf{w}_{nn}) \leq \cdots \leq E(\mathbf{y}^{\mathbf{r}_{p}}, \mathbf{x}_{sy}, \mathbf{x}_{nn}, \mathbf{w}_{sy}, \mathbf{w}_{nn}) 
    \end{align}
    \item \emph{Detection}: Determine if a target, $\mathbf{y}$, is below a threshold $\tau$.
    \begin{align}
        \label{eq:nesy-ebm-detection}
        & D(\mathbf{y}, \mathbf{x}_{sy}, \mathbf{x}_{nn}, \mathbf{w}_{sy}, \mathbf{w}_{nn}; \tau) := \begin{cases}
            1 & E(\mathbf{y}, \mathbf{x}_{sy}, \mathbf{x}_{nn}, \mathbf{w}_{sy}, \mathbf{w}_{nn}) \leq \tau \\
            0 & o.w.
        \end{cases} 
    \end{align}
    \item \emph{Density estimation}: Estimate the conditional probability of a target, $\mathbf{y}$.
    The energy function is used to define a probability density, such as a Gibbs distribution.
    \begin{align}
        \label{eq:nesy-ebm-density-estimation}
        & P(\mathbf{y} \vert \mathbf{x}_{sy} \mathbf{x}_{nn}; \mathbf{w}_{sy}, \mathbf{w}_{nn}) := \frac{e^{-\beta E(\mathbf{y}, \mathbf{x}_{sy}, \mathbf{x}_{nn}, \mathbf{w}_{sy}, \mathbf{w}_{nn})}}{\int_{\hat{\mathbf{y}} \in \mathcal{Y}} e^{-\beta E(\hat{\mathbf{y}}, \mathbf{x}_{sy}, \mathbf{x}_{nn}, \mathbf{w}_{sy}, \mathbf{w}_{nn})}},
    \end{align}
    where $\beta$ is the positive inverse temperature parameter. 
    \item \emph{Generation}: Sample a target variable state using a distribution defined by the energy function.
    \begin{align}
        \label{eq:nesy-ebm-generation}
        \mathbf{y} \sim P(\mathbf{y} \vert \mathbf{x}_{sy} \mathbf{x}_{nn}; \mathbf{w}_{sy}, \mathbf{w}_{nn}).
    \end{align}
\end{itemize}

In this paper, we focus on the first and most common task in this list: prediction, classification, and decision-making \eqref{eq:nesy-ebm-prediction}.
Prediction with NeSy-EBMs captures various forms of reasoning, including probabilistic, logical, arithmetic, and their combinations.
It can represent standard applications of prominent NeSy systems, including, DeepProbLog \citep{manhaeve:ai21}, LTNs \citep{badreddine:ai22}, Semantic Probabilistic Layers \citep{ahmed:neurips22}, and NeuPSL \citep{pryor:ijcai23}, to name a few.

\subsection{Modeling Paradigms for NeSy}
\label{sec:nesy-ebms-a-taxonomy-of-modeling-paradigms}
Using the NeSy-EBM framework, this subsection introduces three typical NeSy modeling paradigms determined by the nature of the neural-symbolic interface.
The paradigms are characterized by the integration of the neural component within the symbolic component to define the prediction program in \eqnref{eq:nesy-ebm-prediction}.

To formalize the modeling paradigms, we introduce an additional layer of abstraction we refer to as \emph{symbolic potentials}, denoted by $\psi$.
Further, we collect symbolic potentials into \emph{symbolic potential sets}, denoted by $\mathbf{\Psi}$.
Symbolic potentials organize the arguments of the symbolic component by the role they play in formulating the prediction program in \eqref{eq:nesy-ebm-prediction}.
\begin{definition}
   \label{def:symbolic-potentials}
A \textbf{symbolic potential} $\psi$ is a function of variables from a domain $V_{\psi}$ and parameters from a domain $Params_{\psi}$,  outputting a scalar value: 
\begin{align}
        \psi: V_{\psi} \times Params_{\psi} \to \mathbb{R}.
\end{align}
A \textbf{symbolic potential set}, denoted by $\mathbf{\Psi}$, is a set of potential functions indexed by $\mathbf{J}_{\mathbf{\Psi}}$. \qed
\end{definition}

With this formalization, a modeling paradigm is defined by specifying a set of symbolic potentials along with their respective domains.
We introduce three key modeling paradigms in the following subsections: deep symbolic variables (DSVar), deep symbolic parameters (DSPar), and deep symbolic potentials (DSPot).
In the following section, we present a novel NeSy framework that supports all of the outlined modeling paradigms.
Additionally, \appref{sec:appendix-nesy-approaches-as-nesy-ebms} formalizes three widely used NeSy approaches—DeepProbLog \citep{manhaeve:ai21}, Logic Tensor Networks \citep{badreddine:ai22}, and Semantic Loss \citep{xu:icml18}—within these modeling paradigms.
While these modeling paradigms capture the fundamental characteristics of many NeSy systems, some approaches may not fit neatly into these categories.

\begin{figure}[t]
    \begin{subfigure}{\textwidth}
        \centering
        \includegraphics[width=0.9 \textwidth]{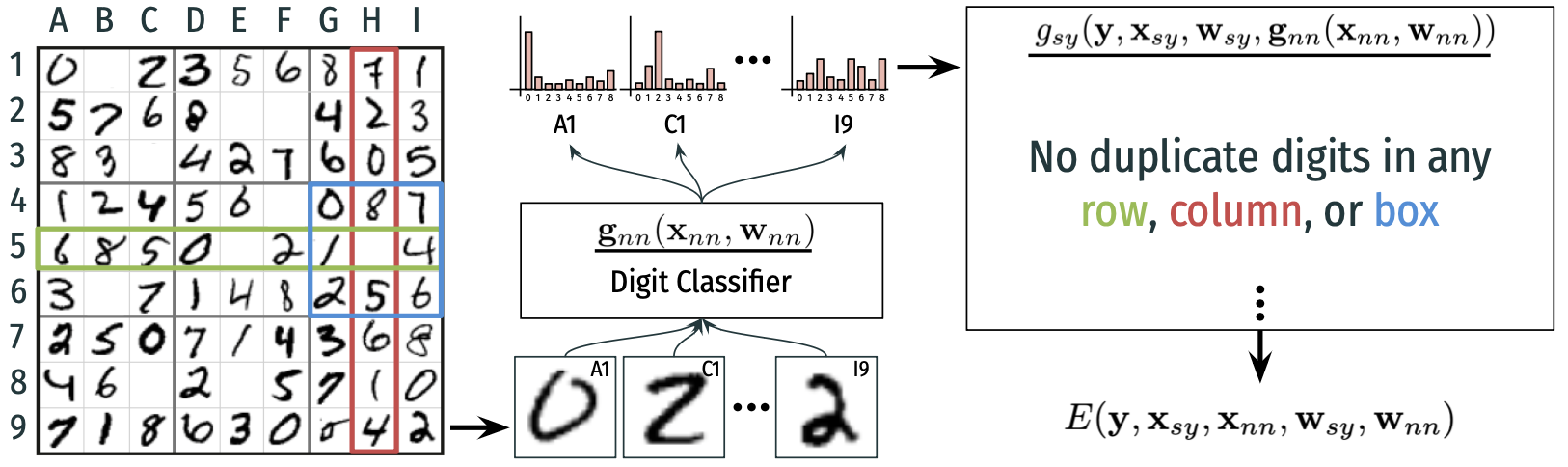}
        \caption{}
        \label{fig:visual-sudoku-nesy-ebm}
    \end{subfigure}
    \begin{subfigure}{\textwidth}
        \centering
        \includegraphics[width=0.9 \textwidth]{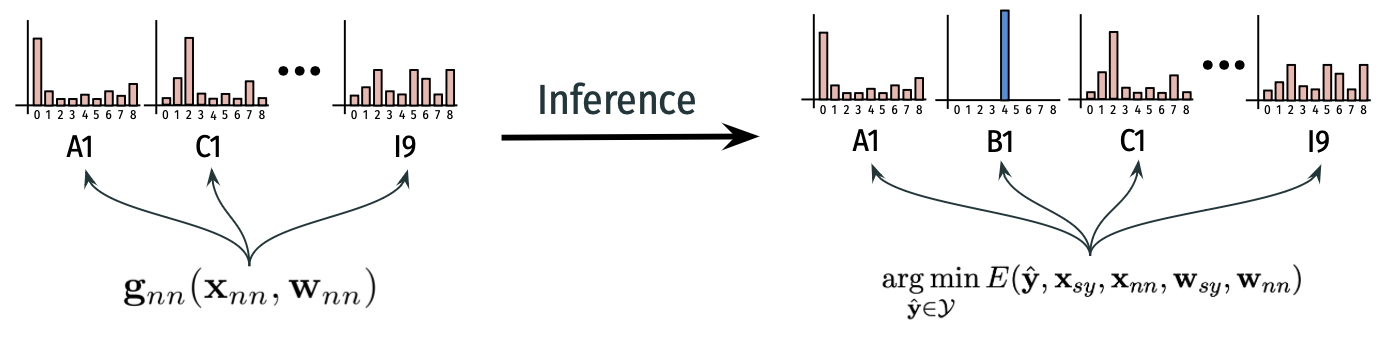}
        \caption{}
        \label{fig:visual-sudoku-dsvar}
    \end{subfigure}
    \begin{subfigure}{\textwidth}
        \centering
        \includegraphics[width=0.9 \textwidth]{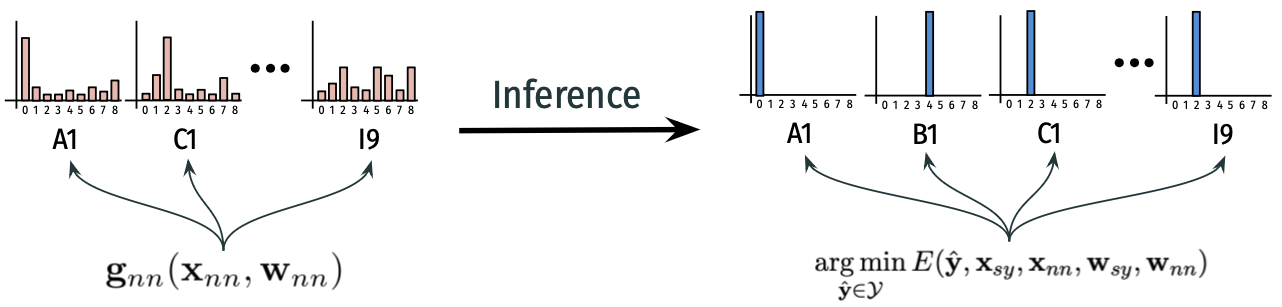}
        \caption{}
        \label{fig:visual-sudoku-dspar}
    \end{subfigure}
    \caption{
        (a) A NeSy-EBM for solving a Sudoku board constructed from handwritten digits.
        The neural component classifies handwritten digits.
        Then, the symbolic component uses the digit classifications and Sudoku rules to quantify the compatibility of the inputs, neural predictions, and targets.
        (b) In the DSVar modeling paradigm inference process, the neural component predicts squares with digits, while the symbolic component measures incompatibility and predicts the latent (blank) squares.
        (c) In the DSPar modeling paradigm inference process, the neural component predicts squares with digits, and the symbolic component can alter these predictions to adhere to symbolic constraints.
    }
    \label{fig:visual-sudoku-nesy-ebm-modeling-paradigms}
\end{figure}

\subsubsection{Deep Symbolic Variables}
\label{sec:deep-symbolic-variables}

The deep symbolic variables (DSVar) paradigm trains neural components efficiently with a loss that captures domain knowledge.
Concisely, the neural component directly predicts the values of target or latent variables in a symbolic potential.\footnote{This section focuses on deep symbolic variables in the context of target variables. Extending to latent variables is straightforward.}
In other words, there is a one-to-one mapping from the neural output to the targets.
However, note that the mapping is not necessarily {\em onto}, that is, there may be target or latent variables without a corresponding neural output.
Prominent NeSy approaches exemplifying this paradigm include logic tensor networks \cite{badreddine:ai22}, learning with logical constraints \cite{giunchiglia:ijcai22}, semantic-based regularization \cite{diligenti:jmlr17}, and deep logic models \cite{marra:ecmlkdd19}.
\begin{definition}
    \label{def:dsvar}
    In the \textbf{deep symbolic variables} (DSVar) modeling paradigm the symbolic potential set is a singleton $\mathbf{\Psi} = \{\psi\}$ with a trivial index set $\mathbf{J}_{\mathbf{\Psi}} = \{1\}$ such that $\mathbf{\Psi}_{1} = \psi$.
    Further, the neural prediction is treated as a variable by the symbolic potential; thus $V_{\psi} = \mathcal{Y} \times \mathcal{X}_{sy} \times \mathbb{R}^{d_{nn}}$.
    Then, the symbolic parameters are the symbolic weights, $Params_{\psi} = \mathcal{W}_{sy}$.
    The neural component controls the NeSy-EBM prediction via this function:
    \begin{align}
    \label{eq:deep-symbolic-variables-indicator}
        I_{\mathcal{Y}}(\mathbf{y}, \mathbf{g}_{nn}(\mathbf{x}_{nn}, \mathbf{w}_{nn})) := \begin{cases}
            0 & \mathbf{y}_{i} = \left [ \mathbf{g}_{nn}(\mathbf{x}_{nn}, \mathbf{w}_{nn}) \right]_{i}, \, \forall i \in \{1, \cdots, d_{nn}\} \\
            \infty & \textrm{o.w.}
        \end{cases}, 
    \end{align}
    where $\mathbf{y}_{i}$ and $\mathbf{g}_{nn}(\mathbf{x}_{nn}, \mathbf{w}_{nn})_{i}$ denote the $i$'th entry of the variable and neural output vectors, respectively.
    Then, the symbolic component expressed via the symbolic potential is: 
    \begin{align}
        & g_{sy} (\mathbf{y}, \mathbf{x}_{sy}, \mathbf{w}_{sy}, \mathbf{g}_{nn}(\mathbf{x}_{nn}, \mathbf{w}_{nn})) \\
        & := \psi(\left[\mathbf{y}, \mathbf{x}_{sy}, \mathbf{g}_{nn}(\mathbf{x}_{nn}, \mathbf{w}_{nn})\right], \mathbf{w}_{sy}) + I_{\mathcal{Y}}(\mathbf{y}, \mathbf{g}_{nn}(\mathbf{x}_{nn}, \mathbf{w}_{nn})), \nonumber
    \end{align}
    where $[\cdot]$ denotes concatenation. \qed
\end{definition}

The DSVar modeling paradigm typically yields the most straightforward prediction program compared to the other modeling paradigms.
This is because the neural model fixes a subset of the decision variables, making the prediction program smaller.
This is achieved by adding the function (\eqnref{eq:deep-symbolic-variables-indicator}) in the definition above to the symbolic potential, so that infinite energy is assigned to variable values that do not match the predictions of the neural model.
However, for the same reason that this modeling paradigm typically has a simpler prediction program, the symbolic component cannot be used to resolve constraint violations made by the neural component.
Rather, DSVar models rely on learning to train a neural component to adhere to constraints.
The DSVar paradigm is demonstrated in the following example.
\begin{example}
    \label{example:deep-symbolic-variables}
    Visual Sudoku \citep{wang:icml19} puzzle solving is the problem of recognizing handwritten digits in non-empty puzzle cells and reasoning with the rules of Sudoku (no repeated digits in any row, column, or box) to fill in empty cells.
    \figref{fig:visual-sudoku-nesy-ebm-modeling-paradigms} shows a partially complete Sudoku puzzle created with MNIST images \citep{lecun:ieee98} and a NeSy-EBM designed for visual Sudoku solving.
    The neural component is a digit classifier predicting the label of MNIST images, and the symbolic component quantifies rule violations.
    
    Formally, the target variables, $\mathbf{y}$, are the categorical labels of both the handwritten digits and the empty entries in the puzzle, i.e., the latent variables.
    The symbolic inputs, $\mathbf{x}_{sy}$, indicate whether two puzzle positions are in the same row, column, or box.
    The neural model, $\mathbf{g}_{nn}(\mathbf{x}_{nn}, \mathbf{w}_{nn})$, is the categorical label of the handwritten digits predicted by the neural component.
    Then, the symbolic parameters, $\mathbf{w}_{sy}$, are used to shape the single symbolic potential function, $\psi$, that quantifies the amount of Sudoku rule violations. 
\end{example}

The DSVar modeling paradigm is specifically designed to allow the neural component to directly influence the random variables within the symbolic model.
Although this paradigm allows direct influence on the predictions of a symbolic model, its scope is strictly confined to random variables.
In scenarios where the neural model must exert indirect influence on variables or interact with other elements of the symbolic model, such as entire symbolic potentials or parameters associated with individual constraints, a different modeling paradigm becomes necessary.
The following subsection introduces a paradigm that extends the neural component's influence, enabling connections to other parameters or constants within the symbolic model, beyond just the random variables.

\commentout{
    The DSVar modeling paradigm is applied to fit neural parameters with a knowledge-informed loss in a semi-supervised setting in our empirical analysis.
    However, neural model predictions cover a subset of the target values, and the model cannot resolve rule violations.
    Therefore, when the neural model predicts digit labels that violate a Sudoku rule, the predicted target variables will also violate the rule.
}

\subsubsection{Deep Symbolic Parameters}
\label{sec:deep-symbolic-parameters}

The deep symbolic parameters (DSPar) modeling paradigm allows targets and neural predictions to be unequal or represent different concepts.
Prominent NeSy frameworks supporting this technique include DeepProbLog \citep{manhaeve:ai21}, and semantic probabilistic layers \citep{ahmed:neurips22}.
Succinctly, the neural component is applied as a parameter in the symbolic potential.
This paradigm allows the symbolic component to correct constraints violated by the neural component during prediction.
\begin{definition}
    In the \textbf{deep symbolic parameters} (DSPar) modeling paradigm, the symbolic potential set is a singleton $\mathbf{\Psi} = \{\psi\}$ with a trivial index set $\mathbf{J}_{\mathbf{\Psi}} = \{1\}$ such that $\mathbf{\Psi}_{1} = \psi$.
    Further, the neural prediction is treated as a parameter by the symbolic potential, thus $Params_{\psi} = \mathcal{W}_{sy} \times \mathbb{R}^{d_{nn}}$.
    Then the symbolic variables are the targets and the symbolic inputs: $V_{\psi} = \mathcal{Y} \times \mathcal{X}_{sy}$.
    The symbolic component expressed via the single symbolic potential is: 
    \begin{align}
        g_{sy} (\mathbf{y}, \mathbf{x}_{sy}, \mathbf{w}_{sy}, \mathbf{g}_{nn}(\mathbf{x}_{nn}, \mathbf{w}_{nn})) := \psi(\left[ \mathbf{y}, \mathbf{x}_{sy} \right], \left[\mathbf{w}_{sy}, \mathbf{g}_{nn}(\mathbf{x}_{nn}, \mathbf{w}_{nn}) \right]). \qed \nonumber
    \end{align}
\end{definition}

This paradigm is demonstrated in the following example.
\begin{example}
    \label{example:deep-symbolic-parameters}
    Again, consider the Visual Sudoku puzzle-solving problem illustrated in \figref{fig:visual-sudoku-nesy-ebm-modeling-paradigms}.
    As in the DSVar model, the neural component of the DSPar model is a digit classifier predicting the label of MNIST images.
    However, the digit classifications of the neural component are used as initial predictions in the symbolic component, as a prior for a probabilistic model.
    Then, the symbolic component is used to quantify rule violations as well as the difference between neural outputs and target variables.
    
    The target variables, $\mathbf{y}$, are the categorical labels of both the handwritten digits and the puzzle's empty entries.
    The symbolic inputs, $\mathbf{x}_{sy}$, indicate whether two puzzle positions are in the same row, column, or box.
    The neural model, $\mathbf{g}_{nn}(\mathbf{x}_{nn}, \mathbf{w}_{nn})$ consists of the categorical labels of the handwritten digits predicted by the neural component.
    The symbolic parameters  $\mathbf{w}_{sy}$ are used to shape the single symbolic potential function $\psi$ that quantifies the amount of Sudoku rule violations. 
\end{example}

The DSPar modeling paradigm is widely applicable.
For instance, the DSPar modeling paradigm is applied for constraint satisfaction, fine-tuning, few-shot, and semi-supervised settings in our empirical analysis.
However, note that the DSVar and DSPar models have only a single fixed symbolic potential.
This property makes these paradigms well-suited for dedicated tasks but less applicable to open-ended settings, where the relevant domain knowledge depends on context.
To address this challenge, the following modeling paradigm leverages generative modeling to perform in open-ended tasks.

\subsubsection{Deep Symbolic Potentials}
\label{sec:deep-symbolic-potentials}

Deep-symbolic potentials (DSPot), the most advanced paradigm we propose, enhances deep models with symbolic reasoning tools.
The Logic-LM pipeline proposed by \citenoun{pan:emnlp23} is an excellent example of this modeling paradigm.
At a high level, the neural component is a generative model that samples symbolic potentials from a set to define the symbolic component.
Specifically, input data is used as context to retrieve relevant domain knowledge and formulate a program to perform inference in open-ended problems.
\begin{definition}
    In the \textbf{deep symbolic potentials} modeling paradigm, the symbolic potential set $\mathbf{\Psi}$ is the set of all potential functions that can be created by a NeSy framework.
    $\mathbf{\Psi}$ is indexed by the output of the neural component, i.e., $\mathbf{J}_{\Psi} = Range(\mathbf{g}_{nn})$ and $\mathbf{\Psi}_{\mathbf{g}_{nn}(\mathbf{x}_{nn}, \mathbf{w}_{nn})}$ is the potential function indexed by the neural prediction.
    The variable and parameter domains of the sampled symbolic potential are $V_{\psi} = \mathcal{Y} \times \mathcal{X}_{sy}$, and $Params_{\psi} = \mathcal{W}_{sy}$, respectively.
    The symbolic component expressed via the symbolic potential is:  
    \begin{align}
        g_{sy} (\mathbf{y}, \mathbf{x}_{sy}, \mathbf{w}_{sy}, \mathbf{g}_{nn}(\mathbf{x}_{nn}, \mathbf{w}_{nn})) := \mathbf{\Psi}_{\mathbf{g}_{nn}(\mathbf{x}_{nn}, \mathbf{w}_{nn})}(\left[\mathbf{y}, \mathbf{x}_{sy} \right], \mathbf{w}_{sy}). \qed \nonumber
    \end{align}
\end{definition}

\begin{figure}[t]
    \centering
    \includegraphics[width=0.95 \textwidth]{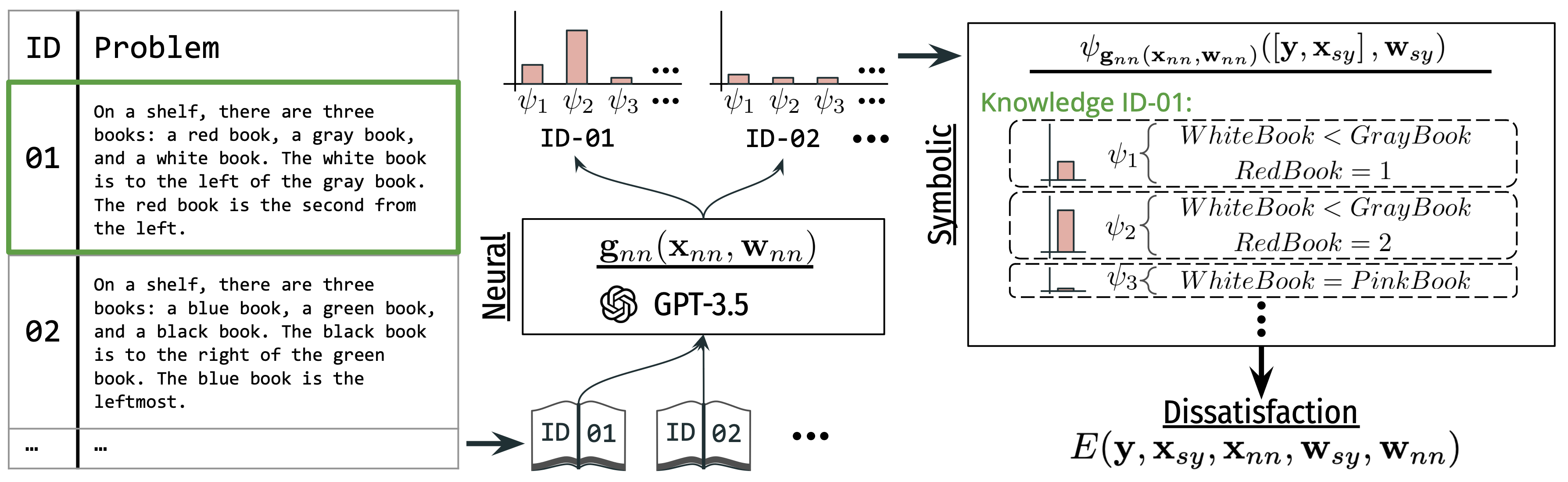}
    \caption{
    A deep symbolic potential model for answering questions about a set of objects' order described in natural language.
    The neural component is an LLM that generates syntax to create a symbolic potential.
    The symbolic potential is used to perform deductive reasoning and answer the question.
    See \exref{example:deep-symbolic-potentials} for details.}
    \label{fig:question-answering-dspot}
\end{figure}

This paradigm is demonstrated in the following example.
\begin{example}
    \label{example:deep-symbolic-potentials}
    Question answering is the problem of giving a response to a question posed in natural language.
    \figref{fig:question-answering-dspot} shows a set of word problems asking for the order of a set of objects given information expressed in natural language and a NeSy-EBM designed for question answering.
    The neural component is a large language model (LLM) that is prompted with a word problem and tasked with generating a program within the syntax of a symbolic framework.
    The symbolic framework uses the generated program to instantiate a symbolic component used to perform deductive reasoning.
    
    Formally, the target variables, $\mathbf{y}$, represent object positions, and there is no symbolic input, $\mathbf{x}_{sy}$, in this example.
    The neural input, $\mathbf{x}_{nn}$, is a natural language prompt that includes the word problem.
    The neural model, $\mathbf{g}_{nn}(\mathbf{x}_{nn}, \mathbf{w}_{nn})$, is an LLM that generates syntax for a declarative symbolic modeling framework that creates the symbolic potential.  
    For instance, the symbolic potential generated by the neural model $\mathbf{\Psi}_{\mathbf{g}_{nn}(\mathbf{x}_{nn}, \mathbf{w}_{nn})}(\left[\mathbf{y}, \mathbf{x}_{sy} \right], \mathbf{w}_{sy})$ could be the total amount of violation of arithmetic constraints representing ordering.
    Finally, the symbolic parameters, $\mathbf{w}_{sy}$, shape the symbolic potential function.
\end{example}

In our view, DSPot is the only applicable paradigm for truly open-ended tasks.
Moreover, DSPot enhances generative models, such as LLMs, with consistent symbolic reasoning capabilities.
This feature is demonstrated in constraint satisfaction and joint reasoning experiments in our empirical analysis.
DSPot's limitation is that the neural component must learn to sample from a large potential set.
For instance, in the example, an LLM must reliably generate syntax to define a symbolic potential for solving the word problem.
LLMs require a substantial amount of computational resources to train and then fine-tune for a specific NeSy framework.
Furthermore, the inference time is dependent on the sampled symbolic potential. 
If the neural component samples a complex symbolic potential, inference may be slow.

\section{Neural Probabilistic Soft Logic and Deep Hinge-Loss Markov Random Fields}
\label{sec:neupsl-and-deep-hlmrfs}

We introduce \textit{Neural Probabilistic Soft Logic} (NeuPSL), an expressive framework for constructing a broad class of NeSy-EBMs by extending the probabilistic soft logic (PSL) probabilistic programming language \citep{bach:jmlr17}.
NeuPSL is designed to be expressive and efficient to support every modeling paradigm and easily be used for a range of applications. 
We begin by presenting the essential syntax and semantics of NeuPSL, encompassing Deep Hinge-Loss Markov Random Fields (deep HL-MRF), the underlying probabilistic graphical model (see \cite{bach:jmlr17} for an in-depth introduction to PSL syntax and semantics).
Then, we present a new formulation and regularization of (Neu)PSL inference as a constrained quadratic program.
Our formulation is utilized to guarantee differentiability properties and provide principled gradients to support end-to-end neural and symbolic parameter learning, instantiating the learning algorithms introduced in \secref{sec:nesy-ebm-learning}.

\subsection{Neural Probabilistic Soft Logic}
\label{sec:neupsl-and-deep-hlmrfs-neupsl}

NeuPSL is a declarative language used to construct NeSy-EBMs.
Fundamentally, NeuPSL provides a syntax for encoding dependencies between relations and attributes of entities and for integrating neural components in a symbolic model.
Specifically, dependencies and neural component compositions are expressed as first-order logical or arithmetic statements referred to as \emph{rules}.
Each rule is a template for instantiating, i.e., \emph{grounding}, potentials or constraints to define the NeuPSL energy function.
Every rule is grounded over a set of domains, $\mathbf{D} = \{D_1, D_2, \cdots\}$, where each of the domains $D_{i}$ is a finite set of elements referred to as \textit{constants}.
For instance, referring to the visual Sudoku problem described in \exref{example:deep-symbolic-parameters}, the constant ``A1'' can denote the cell at position $A1$ in a Sudoku puzzle, and the constant ``1'' can denote the digit $1$.
Constants are grouped and aligned with a corresponding domain from $\mathbf{D}$ using placeholders or \emph{variables}.
Relations between constants are \textit{predicates}.
In NeuPSL, a predicate is referenced using its unique identifier.
For instance, $\pslpred{CellDigit}$ is a predicate that can represent whether a cell contains a specified digit.
Another example is the predicate $\pslpred{SudokuViolation}$ representing whether a Sudoku rule is violated given the digits in two specified cells.
Finally, the predicate $\pslpred{NeuralClassifier}$ is a predicate that represents the predicted digit in a cell made by a neural network classifier.
Predicates with specified constant domains are \textit{atoms}.
NeuPSL extends PSL with \textit{deep atoms}: atoms backed by a deep model.

\begin{definition}{Atom.} An \textbf{atom}, $A$, is a predicate associated with a list of $k > 0$ domains $D'_1, \cdots, D'_k$ from $\mathbf{D}$:
\begin{align*}
    A : (D'_1 \times \cdots \times D'_k) \rightarrow [0, 1]
\end{align*}
where $k$ is the corresponding predicate's arity.

A \textbf{deep atom}, $DA$, with domains $D'_1, \cdots, D'_k$ from $\mathbf{D}$ is an atom parameterized by a set of weights $\mathbf{w}_{nn}$ from a domain $\mathcal{W}_{nn}$
\begin{align*}
    DA : (D'_1 \times \cdots \times D'_k; \mathbf{w}_{nn}) \rightarrow [0, 1].
\end{align*}

A \textbf{ground atom} is an atom with constant arguments. \qed
\end{definition}

With the above definition, we can now formally define a NeuPSL rule.

\begin{definition}{Rule.}
A \textbf{rule}, $R$, is a function of $s \geq 1$ variables $v_{1}, \cdots, v_{s}$ from the domains $D'_1, \cdots, D'_s \in \mathbf{D}$, respectively:
\begin{align*}
    R: (D'_1 \times \cdots \times D'_s) & \rightarrow [0,1] \\
    v_{1}, \cdots, v_{s} & \mapsto R(v_{1}, \cdots, v_{s})
\end{align*}
Moreover, a rule is a composition of $l \geq 1$ atoms, $A_1, \cdots A_l$.

All rules are either associated with a non-negative weight and a value $q \in \{1,2\}$, or are unweighted. 
The weight (or absence of) and value $q$ of a rule determine the structure of the potentials the rule instantiates.
A weighted rule is known as a \textbf{soft rule}, and an unweighted rule is known as a \textbf{hard rule}.

A \textbf{logical rule} is expressed as a logical implication of atoms. 

An \textbf{arithmetic rule} is expressed as a linear inequality of atoms.

A \textbf{ground rule} is a rule with constant arguments, i.e., a rule with only ground atoms. \qed
\end{definition}

For instance, the following is an example of two rules for solving visual Sudoku with NeuPSL.
{\small
\begin{align*}
    1.0: \quad & \pslpred{NeuralClassifier}(\pslarg{Pos}, \pslarg{Digit}) = \pslpred{CellDigit}(\pslarg{Pos}, \pslarg{Digit}) \\ 
    & \pslpred{CellDigit}(\pslarg{Pos1}, \pslarg{Digit1}) \psland
    \pslpred{SudokuViolation}(\pslarg{Pos1}, \pslarg{Pos2}, \pslarg{Digit1}, \pslarg{Digit2}) \\ 
    & \quad \pslthen \pslneg \pslpred{CellDigit}(\pslarg{Pos2}, \pslarg{Digit2}) \quad .
\end{align*}
}%
The first rule in the example above is soft as it is weighted with weight $1.0$.
Moreover, the first rule is arithmetic and encodes a dependency between the digit label predicted by a neural classifier and the atom $\pslpred{CellDigit}(\pslarg{Pos}, \pslarg{Digit})$, i.e., if the neural classifier predicts the digit $\pslarg{Digit}$ is in position $\pslarg{Pos}$, then the $\pslarg{Digit}$ is in position $\pslarg{Pos}$.
The second rule is a hard, logical rule that encodes the rules of Sudoku.
Moreover, the second rule can be read as follows: if the digit $\pslarg{Digit1}$ is in position $\pslarg{Pos1}$ and the $\pslarg{Digit2}$ in $\pslarg{Pos2}$ causes a Sudoku rule violation, then $\pslarg{Digit2}$ is not in $\pslarg{Pos2}$.

Rules are grounded by performing every distinct substitution of the variables in the atoms for constants in their respective domain.
For example, every substitution for the $\pslarg{Pos}$ and $\pslarg{Digit}$ variable arguments from the domains of non-empty Sudoku puzzle cell positions, $A1, \cdots, I9$, and digits $1, \cdots, 9$ is realized to ground the first rule:
\begin{align*}
    1.0: \quad & \pslpred{NeuralClassifier}(``A1", ``1") = \pslpred{CellDigit}(``A1", ``1")\\ 
    \vdots \quad & \\
    1.0: \quad & \pslpred{NeuralClassifier}(``19", ``9") = \pslpred{CellDigit}(``I9", ``9")\\ 
\end{align*}
Similarly, every substitution for the $\pslarg{Pos1}$, $\pslarg{Pos2}$, $\pslarg{Digi1}$, and $\pslarg{Digit2}$ variable arguments from the domains of all Sudoku puzzle cell positions, $A1, \cdots, I9$, and digits $1, \cdots, 9$ is realized to ground the second rule:
\begin{align*}
    & \pslpred{CellDigit}(``A1", ``1") \psland
    \pslpred{SudokuViolation}(``A1", ``A2", ``1", ``1") \\ 
    & \quad \pslthen \pslneg \pslpred{CellDigit}(``A2", ``1") \quad .\\
    & \quad \vdots \\
    & \pslpred{CellDigit}(``I9", ``9") \psland
    \pslpred{SudokuViolation}(``I9", ``I8", ``9", ``9") \\ 
    & \quad \pslthen \pslneg \pslpred{CellDigit}(``I8", ``9") \quad .
\end{align*}
\subsection{Deep-Hinge Loss Markov Random Fields}
The rule instantiation process described in the previous subsection results in a set of ground atoms.
Each ground atom is mapped to either an observed variable, $x_{sy,i}$, target variable, $y_{i}$, or a neural function with inputs $\mathbf{x}_{nn}$ and parameters $\mathbf{w}_{nn, i}$: $g_{nn, i}(\mathbf{x}_{nn}, \mathbf{w}_{nn, i})$.
Specifically, all atoms instantiated from a deep atom are mapped to a neural function, and the observed and target atom partitions are pre-specified. 
Further, variables are aggregated into the vectors $\mathbf{x}_{sy}=[x_{sy_i}]_{i = 1}^{n_{x}}$ and $\mathbf{y}=[y_{i}]_{i = 1}^{n_y}$ and neural outputs are aggregated into the vector $\mathbf{g}_{nn} = [g_{nn, i}]_{i = 1}^{n_{g}}$.

The ground rules and variables are used to define linear inequalities in a standard form: $\ell(\mathbf{y}, \mathbf{x}_{sy}, \mathbf{g}_{nn}(\mathbf{x}_{nn}, \mathbf{w}_{nn})) \leq 0$, where $\ell$ is a linear function of its arguments.
To achieve this, logical rules are first converted into disjunctive normal form.
Then, the rules are translated into linear inequalities using an extended interpretation of the logical operators, namely Łukasiewicz logic \citep{klir:book95}.
Similarly, arithmetic rules define one or more standard form inequalities that preserve the rules' dependencies via algebraic operations.

Linear inequalities instantiated from hard ground rules are constraints in NeuPSL.
Further, linear inequalities instantiated from soft ground rules define \emph{potential functions} of the form:
\begin{align}
    \phi(\mathbf{y}, \mathbf{x}_{sy}, \mathbf{g}_{nn}(\mathbf{x}_{nn}, \mathbf{w}_{nn})) := (\max\{\ell(\mathbf{y}, \mathbf{x}_{sy}, \mathbf{g}_{nn}(\mathbf{x}_{nn}, \mathbf{w}_{nn})), 0\})^{q}.
    \label{eq:deep_hlmrf_potential_1}
\end{align}
Intuitively, the value of potential is the, possibly squared, level of dissatisfaction of the linear inequality created by the ground rule.
Further, each potential is associated with the weight of its instantiating rule.
Weight sharing among the potentials is formalized by defining a partitioning using the instantiating rules, i.e., every potential instantiated by the same rule belongs to the same partition and shares a weight.
The potentials and weights from the instantiation process are used to define a tractable class of graphical models, which we refer to as \emph{deep hinge-loss Markov random fields} (Deep HL-MRF):

\begin{definition}[Deep Hinge-Loss Markov Random Field]
    Let $\mathbf{g}_{nn} = [g_{nn, i}]_{i = 1}^{n_{g}}$ be functions with corresponding weights $\mathbf{w}_{nn}=[\mathbf{w}_{nn, i}]_{i = 1}^{n_{g}}$ and inputs $\mathbf{x}_{nn}$ such that $g_{nn, i}:(\mathbf{w}_{nn, i}, \mathbf{x}_{nn}) \mapsto [0, 1]$.
    Let $\mathbf{y} \in [0, 1]^{n_{y}}$ and $\mathbf{x}_{sy} \in [0, 1]^{n_{x}}$.
    A \textbf{deep hinge-loss potential} is a function of the form:
    {\small
    \begin{align}
        & \phi(\mathbf{y}, \mathbf{x}_{sy}, \mathbf{g}_{nn}(\mathbf{x}_{nn}, \mathbf{w}_{nn})) := (\max\{\mathbf{a}_{\phi, \mathbf{y}}^T \mathbf{y} + \mathbf{a}_{\phi, \mathbf{x}_{sy}}^T \mathbf{x}_{sy} + \mathbf{a}_{\phi, \mathbf{g}_{nn}}^T \mathbf{g}_{nn}(\mathbf{x}_{nn}, \mathbf{w}_{nn}) + b_{\phi}, 0\})^{q}
        \label{eq:deep_hlmrf_potential}
    \end{align}
    }%
    where $\mathbf{a}_{\phi, \mathbf{y}} \in \mathbb{R}^{n_{y}}$, $\mathbf{a}_{\phi, \mathbf{x}_{sy}} \in \mathbb{R}^{n_{x}}$, and $\mathbf{a}_{\phi, \mathbf{g}_{nn}} \in \mathbb{R}^{n_{g}}$ are variable coefficient vectors, $b_{\phi} \in \mathbb{R}$ is a vector of constants, and $q \in \{1, 2\}$.
    Let $\mathcal{T} = [\tau_i]_{i = 1}^{r}$ denote an ordered partition of a set of $m$ deep hinge-loss potentials. 
    Further, define 
    \begin{align}
        \mathbf{\Phi}(\mathbf{y} , \mathbf{x}_{sy}, \mathbf{g}_{nn}(\mathbf{x}_{nn}, \mathbf{w}_{nn})) := \left[ \sum_{k \in \tau_i} \phi_{k}(\mathbf{y}, \mathbf{x}_{sy}, \mathbf{g}_{nn}(\mathbf{x}_{nn}, \mathbf{w}_{nn})) \right]_{i = 1}^{r}.
    \end{align}
    Let $\mathbf{w}_{sy}$ be a vector of $r$ non-negative symbolic weights corresponding to the partition $\mathcal{T}$.
    Then, a \textbf{deep hinge-loss energy function} is:
    \begin{align}
        \label{eq:deep_hlmrf_energy_function}
        & E(\mathbf{y}, \mathbf{x}_{sy}, \mathbf{x}_{nn}, \mathbf{w}_{sy}, \mathbf{w}_{nn}) := \mathbf{w}_{sy}^T \mathbf{\Phi}(\mathbf{y} , \mathbf{x}_{sy}, \mathbf{g}_{nn}(\mathbf{x}_{nn}, \mathbf{w}_{nn})).
    \end{align}
    Let $\mathbf{a}_{c_k, \mathbf{y}} \in \mathbb{R}^{n_{y}}$, $\mathbf{a}_{c_k, \mathbf{x}_{sy}} \in \mathbb{R}^{n_{x}}$, $\mathbf{a}_{c_k, \mathbf{g}_{nn}} \in \mathbb{R}^{n_{g}}$, and $b_{c_k} \in \mathbb{R}$ for each $k \in {1, \dotsc, q}$ and $q \geq 0$ be vectors defining linear inequality constraints and a feasible set:
    {\small
    \begin{align*}
    & \mathbf{\Omega}(\mathbf{x}_{sy}, \mathbf{g}_{nn}(\mathbf{x}_{nn}, \mathbf{w}_{nn})) := \\
    & \quad \Big \{
    \mathbf{y} \in [0, 1]^{n_y} \, \vert \, 
            \mathbf{a}_{c_k, \mathbf{y}}^T \mathbf{y} + \mathbf{a}_{c_k, \mathbf{x}_{sy}}^T \mathbf{x}_{sy} + \mathbf{a}_{c_k, \mathbf{g}_{nn}}^T \mathbf{g}_{nn}(\mathbf{x}_{nn}, \mathbf{w}_{nn}) 
            + b_{c_k} \leq 0 
            \, , \forall \, k=1,\dotsc,q \,
    \Big \}.
    \end{align*}
    }%
    Then a \textbf{deep hinge-loss Markov random field} defines the conditional probability density:
    {
    \begin{align}
        P(\mathbf{y} \vert \mathbf{x}_{sy}, \mathbf{x}_{nn}) :=  
        \begin{cases}
            \frac{\exp (-E(\mathbf{y}, \mathbf{x}_{sy}, \mathbf{x}_{nn}, \mathbf{w}_{sy}, \mathbf{w}_{nn}))}{\int_{\hat{\mathbf{y}}} \exp(-E(\hat{\mathbf{y}}, \mathbf{x}_{sy}, \mathbf{x}_{nn}, \mathbf{w}_{sy}, \mathbf{w}_{nn})) d\hat{\mathbf{y}}} & \mathbf{y} \in \mathbf{\Omega}(\mathbf{x}_{sy}, \mathbf{g}_{nn}(\mathbf{x}_{nn}, \mathbf{w}_{nn})) \\
            0 & o.w.
        \end{cases}
    \end{align}
    \label{def:deep_hlmrf}
    }%
    \qed
\end{definition}
NeuPSL models are NeSy-EBMs with an extended-value deep HL-MRF energy function capturing the constraints that define the feasible set.
In other words, the symbolic component of NeuPSL is infinity if the targets are outside of the deep HL-MRF feasible set, else it is equal to the deep HL-MRF energy function:
{
\begin{align}
    & g_{sy}(\mathbf{y}, \mathbf{x}_{sy}, \mathbf{w}_{sy}, \mathbf{g}_{nn}(\mathbf{x}_{nn}, \mathbf{w}_{nn})) \\ 
    & \quad = \begin{cases}
       \mathbf{w}_{sy}^T \mathbf{\Phi}(\mathbf{y} , \mathbf{x}_{sy}, \mathbf{g}_{nn}(\mathbf{x}_{nn}, \mathbf{w}_{nn})) & \mathbf{y} \in \mathbf{\Omega}(\mathbf{x}_{sy}, \mathbf{g}_{nn}(\mathbf{x}_{nn}, \mathbf{w}_{nn})) \\
       \infty & o.w.
    \end{cases} \nonumber
    \label{eq:neupsl_symbolic_component}
\end{align}
}%
Further, NeuPSL prediction is finding the MAP state of the deep HL-MRF conditional distribution.
Note that in deep HL-MRFs, the partition function is constant over the target variables.
Moreover, as the exponential function is monotonically increasing, prediction is equivalent to finding the minimizer of the negative log probability of the deep HL-MRF joint distribution.
This reduces to minimizing the deep HL-MRF energy function constrained to the feasible set.
Therefore, deep HL-MRF MAP inference is equivalent to minimizing the NeuPSL symbolic component in \eqref{eq:neupsl_symbolic_component}:
\begin{align}
    \argmax_{\mathbf{y} \in \mathbb{R}^{n_{\mathbf{y}}}} P(\mathbf{y} \vert \mathbf{x}_{sy}, \mathbf{x}_{nn}) 
    & \equiv \argmin_{\mathbf{y} \in \mathbb{R}^{n_{\mathbf{y}}}} g_{sy}(\mathbf{y}, \mathbf{x}_{sy}, \mathbf{w}_{sy}, \mathbf{g}_{nn}(\mathbf{x}_{nn}, \mathbf{w}_{nn})) \\
    & \equiv \argmin_{\mathbf{y} \in \mathbb{R}^{n_{\mathbf{y}}}} \mathbf{w}_{sy}^T \mathbf{\Phi}(\mathbf{y} , \mathbf{x}_{sy}, \mathbf{g}_{nn}(\mathbf{x}_{nn}, \mathbf{w}_{nn})) \nonumber \\
    & \quad \quad \textrm{s.t.} \quad \mathbf{y} \in \mathbf{\Omega}(\mathbf{x}_{sy}, \mathbf{g}_{nn}(\mathbf{x}_{nn}, \mathbf{w}_{nn})) 
    \label{eq:inference_primal}
\end{align}

Deep HL-MRF potentials are non-smooth and convex.
Thus, as Deep HL-MRF energy functions are non-negative weighted sums of the potentials, they are also non-smooth and convex.
Moreover, Deep HL-MRFs feasible sets are, by definition, convex polyhedrons.
Therefore, Deep HL-MRF inference, as defined above in \eqref{eq:inference_primal}, is a non-smooth convex linearly constrained program.
A natural extension of the definition above that is often used in practice adds support for integer constraints on the target variables. 
This change is useful in discrete problems and for leveraging hard logic semantics.
However, adding integer constraints breaks the convexity property of MAP inference.
Nevertheless, for many problems of practical scale, global minimizers or high-quality approximations of the MAP inference problem with integer constraints can be quickly found with modern solvers.   

\subsection{A Smooth Formulation of Deep HL-MRF Inference}
\label{sec:lcqp_inference}

This subsection introduces a primal and dual formulation of Deep HL-MRF MAP inference as a linearly constrained convex quadratic program (LCQP) (see \appref{appendix:extended_neupsl} for details).
The primal and dual LCQP formulation has theoretical and practical advantages.
Theoretically, the new formulation will be utilized to prove the continuity and curvature properties of Deep HL-MRFs that are valuable for learning.
Practically, LCQP solvers (e.g. Gurobi \citep{gurobi:misc24}) can be employed to achieve highly efficient MAP inference.
Moreover, features of modern solvers, including support for integer constraints, can be leveraged to improve prediction.

In summary, $m$ slack variables with lower bounds and $2 \cdot n_{\mathbf{y}} + m$ linear constraints are defined to represent the target variable bounds and deep hinge-loss potentials.
All $2 \cdot n_{\mathbf{y}} + m$ variable bounds, $m$ potentials, and $q \geq 0$ constraints are collected into a $(2 \cdot n_{\mathbf{y}} + q + 2 \cdot m) \times (n_{\mathbf{y}} + m)$ dimensional matrix $\mathbf{A}$ and a vector of $(2 \cdot n_{\mathbf{y}} + q + 2 \cdot m)$ elements that is an affine function of the neural predictions and symbolic inputs $\mathbf{b}(\mathbf{x}_{sy}, \mathbf{g}_{nn}(\mathbf{x}_{nn}, \mathbf{w}_{nn}))$.
Moreover, the slack variables and a $(n_{\mathbf{y}} + m) \times (n_{\mathbf{y}} + m)$ positive semi-definite diagonal matrix, $\mathbf{D}(\mathbf{w}_{sy})$, and a $(n_{\mathbf{y}} + m)$ dimensional vector, $\mathbf{c}(\mathbf{w}_{sy})$, are created using the symbolic weights to define a quadratic objective.
Further, we gather the original target variables and the slack variables into a vector $\mathbf{\nu} \in \mathbb{R}^{n_{\mathbf{y}} + m}$.
Altogether, the regularized convex LCQP reformulation of Deep HL-MRF MAP inference is:
{\small
\begin{align}
    V(\mathbf{w}_{sy}, \mathbf{b}(\mathbf{x}_{sy}, \mathbf{g}_{nn}(\mathbf{x}_{nn}, \mathbf{w}_{nn}))) & := \label{eq:regularized_lcqp_primal} \\ 
    \min_{\mathbf{\nu} \in \mathbb{R}^{n_{\mathbf{y}} + m}} \, 
        \mathbf{\nu}^T (\mathbf{D}(\mathbf{w}_{sy}) + \epsilon \mathbf{I}) & \mathbf{\nu} + \mathbf{c}(\mathbf{w}_{sy})^T \mathbf{\nu}  \quad
        \textrm{s.t. } \,       
     \mathbf{A} \mathbf{\nu} + \mathbf{b}(\mathbf{x}_{sy}, \mathbf{g}_{nn}(\mathbf{x}_{nn}, \mathbf{w}_{nn})) \leq 0 \nonumber,
\end{align}
}%
where $\epsilon \geq 0$ is a scalar regularization parameter added to the diagonal of $\mathbf{D}$ to ensure strong convexity.
The function $V(\mathbf{w}_{sy}, \mathbf{b}(\mathbf{x}_{sy}, \mathbf{g}_{nn}(\mathbf{x}_{nn}, \mathbf{w}_{nn})))$ in \eqref{eq:regularized_lcqp_primal} is the \emph{optimal value-function} of the LCQP formulation of NeuPSL inference.

By Slater's constraint qualification, we have strong duality when there is a feasible solution to \eqref{eq:regularized_lcqp_primal} \cite{boyd:book04}.
In this case, an optimal solution to the dual problem yields an optimal solution to the primal problem.
The Lagrange dual problem of \eqref{eq:regularized_lcqp_primal} is:
{
\begin{align}
    & \min_{\mathbf{\mu} \in \mathbb{R}_{\geq 0}^{2 \cdot (n_{\mathbf{y}} + m) + q}}
        \quad h(\mathbf{\mu}; \mathbf{w}_{sy}, \mathbf{b}(\mathbf{x}_{sy}, \mathbf{g}_{nn}(\mathbf{x}_{nn}, \mathbf{w}_{nn}))) \label{eq:dual_lcqp_inference} \\
        & \quad := \frac{1}{4} \mathbf{\mu}^T \mathbf{A} (\mathbf{D}(\mathbf{w}_{sy}) + \epsilon \mathbf{I})^{-1} \mathbf{A}^T \mathbf{\mu} \nonumber \\
        & \quad \quad + \frac{1}{2} (\mathbf{A} (\mathbf{D}(\mathbf{w}_{sy}) + \epsilon \mathbf{I})^{-1} \mathbf{c}(\mathbf{w}_{sy}) \nonumber \\
        & \quad \quad - 2 \mathbf{b}(\mathbf{x}_{sy}, \mathbf{g}_{nn}(\mathbf{x}_{nn}, \mathbf{w}_{nn})))^T \mathbf{\mu}, \nonumber
\end{align}
}%
where $\mathbf{\mu}$ is the vector of dual variables and $h(\mathbf{\mu}; \mathbf{w}_{sy}, \mathbf{b}(\mathbf{x}_{sy}, \mathbf{g}_{nn}(\mathbf{x}_{nn}, \mathbf{w}_{nn})))$ is the LCQP dual objective function.
As $(\mathbf{D}(\mathbf{w}_{sy}) + \epsilon \mathbf{I})$ is diagonal, it is easy to invert, and thus it is practical to work in the dual space and map dual to primal variables.
The dual-to-primal variable mapping is:
{
\begin{align}
    \mathbf{\nu} \gets - \frac{1}{2} (\mathbf{D}(\mathbf{w}_{sy}) + \epsilon \mathbf{I})^{-1} (\mathbf{A}^T \mathbf{\mu} + \mathbf{c}(\mathbf{w}_{sy})).
\end{align}
}%
On the other hand, the primal-to-dual mapping is more computationally expensive and requires calculating a pseudo-inverse of the constraint matrix $\mathbf{A}$.

We use the LCQP formulation in \eqref{eq:regularized_lcqp_primal} to establish continuity and curvature properties of the NeuPSL energy minimizer and the optimal value-function provided in the following theorem:
\begin{theorem}
    \label{thm:continuity_properties}
    Suppose for any setting of $\mathbf{w}_{nn} \in \mathbb{R}^{n_g}$ there is a feasible solution to NeuPSL inference \eqref{eq:regularized_lcqp_primal}.
    Further, suppose $\epsilon > 0$, $\mathbf{w}_{sy} \in \mathbb{R}_{+}^{r}$, and $\mathbf{w}_{nn} \in \mathbb{R}^{n_g}$.
    Then:
    \begin{itemize}[leftmargin=*,noitemsep,topsep=0pt]
        \item The minimizer of \eqref{eq:regularized_lcqp_primal}, $\mathbf{y}^*(\mathbf{w}_{sy}, \mathbf{w}_{nn})$, is a $O(1 / \epsilon)$ Lipschitz continuous function of $\mathbf{w}_{sy}$.
        \item $V(\mathbf{w}_{sy}, \mathbf{b}(\mathbf{x}_{sy}, \mathbf{g}_{nn} (\mathbf{x}_{nn}, \mathbf{w}_{nn})))$, is concave over $\mathbf{w}_{sy}$ and convex over $\mathbf{b}(\mathbf{x}_{sy}, \mathbf{g}_{nn} (\mathbf{x}_{nn}, \mathbf{w}_{nn}))$.
        \item $V(\mathbf{w}_{sy}, \mathbf{b}(\mathbf{x}_{sy}, \mathbf{g}_{nn} (\mathbf{x}_{nn}, \mathbf{w}_{nn})))$ is differentiable with respect to $\mathbf{w}_{sy}$. Moreover,
        {
        \begin{align*}
            \nabla_{\mathbf{w}_{sy}} V(\mathbf{w}_{sy}, & \mathbf{b}(\mathbf{x}_{sy}, \mathbf{g}_{nn} (\mathbf{x}_{nn}, \mathbf{w}_{nn}))) = \mathbf{\Phi}(\mathbf{y}^{*}(\mathbf{w}_{sy}, \mathbf{w}_{nn}), \mathbf{x}_{sy}, \mathbf{g}_{nn} (\mathbf{x}_{nn}, \mathbf{w}_{nn})).
        \end{align*}
        }%
        Furthermore, $\nabla_{\mathbf{w}_{sy}} V(\mathbf{w}_{sy}, \mathbf{b}(\mathbf{x}_{sy}, \mathbf{g}_{nn} (\mathbf{x}_{nn}, \mathbf{w}_{nn})))$ is Lipschitz continuous over $\mathbf{w}_{sy}$.
        \item If there is a feasible point $\nu$ strictly satisfying the $i'th$ inequality constraint of \eqref{eq:regularized_lcqp_primal}, i.e., $\mathbf{A}[i] \mathbf{\nu} + \mathbf{b}(\mathbf{x}_{sy}, \mathbf{g}_{nn} (\mathbf{x}_{nn}, \mathbf{w}_{nn}))[i] < 0$, then $V(\mathbf{w}_{sy}, \mathbf{b}(\mathbf{x}_{sy}, \mathbf{g}_{nn} (\mathbf{x}_{nn}, \mathbf{w}_{nn})))$ is subdifferentiable with respect to the $i'th$ constraint constant $\mathbf{b}(\mathbf{x}_{sy}, \mathbf{g}_{nn} (\mathbf{x}_{nn}, \mathbf{w}_{nn}))[i]$. Moreover, 
        {
        \begin{align*}
            \partial_{\mathbf{b}[i]} V(\mathbf{w}_{sy}, & \mathbf{b}(\mathbf{x}_{sy}, \mathbf{g}_{nn} (\mathbf{x}_{nn}, \mathbf{w}_{nn}))) 
            \\ &
            = \{\mathbf{\mu}^{*}[i] \, \vert \, \mathbf{\mu}^{*} \in \argmin_{\mathbf{\mu} \in \mathbb{R}_{\geq 0}^{2 \cdot (n_{\mathbf{y}} + m) + q}}
            h(\mathbf{\mu}; \mathbf{w}_{sy}, \mathbf{b}(\mathbf{x}_{sy}, \mathbf{g}_{nn} (\mathbf{x}_{nn}, \mathbf{w}_{nn}))) \}.
        \end{align*}
        }%
        Furthermore, if $\mathbf{g}_{nn} (\mathbf{x}_{nn}, \mathbf{w}_{nn})$ is a smooth function of $\mathbf{w}_{nn}$, then so is $\mathbf{b}(\mathbf{x}_{sy}, \mathbf{g}_{nn} (\mathbf{x}_{nn}, \mathbf{w}_{nn}))$, and the set of regular subgradients of $V(\mathbf{w}_{sy}, \mathbf{b}(\mathbf{x}_{sy}, \mathbf{g}_{nn} (\mathbf{x}_{nn}, \mathbf{w}_{nn})))$ is:
        {
        \begin{align}
            \label{eq:value_function_subgradient}
            \hat{\partial}_{\mathbf{w}_{nn}} & V(\mathbf{w}_{sy}, \mathbf{b}(\mathbf{x}_{sy}, \mathbf{g}_{nn} (\mathbf{x}_{nn}, \mathbf{w}_{nn}))) 
            \\ & 
            \supset \nabla_{\mathbf{w}_{nn}} \mathbf{b}(\mathbf{x}_{sy}, \mathbf{g}_{nn} (\mathbf{x}_{nn}, \mathbf{w}_{nn}))^T \partial_{\mathbf{b}} V(\mathbf{w}_{sy}, \mathbf{b}(\mathbf{x}_{sy}, \mathbf{g}_{nn} (\mathbf{x}_{nn}, \mathbf{w}_{nn}))) 
            \nonumber
            .
        \end{align}
        }%
    \end{itemize}
\end{theorem}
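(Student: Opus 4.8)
The plan is to handle the four bullets in sequence, exploiting two structural facts about \eqref{eq:regularized_lcqp_primal}: with $\epsilon > 0$ the primal objective is $2\epsilon$-strongly convex in $\mathbf{\nu}$ (its Hessian is $2(\mathbf{D}(\mathbf{w}_{sy}) + \epsilon \mathbf{I}) \succeq 2\epsilon \mathbf{I}$) over a \emph{bounded} polyhedral feasible set that does not depend on $\mathbf{w}_{sy}$, and, by the construction in \appref{appendix:extended_neupsl}, the matrix $\mathbf{D}(\mathbf{w}_{sy})$ and vector $\mathbf{c}(\mathbf{w}_{sy})$ are \emph{linear} in the symbolic weights while the neural output enters only through the right-hand side $\mathbf{b}$. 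For the first bullet, I would fix weights $\mathbf{w}_{sy}^{1}, \mathbf{w}_{sy}^{2}$ with corresponding unique minimizers $\mathbf{\nu}^{1}, \mathbf{\nu}^{2}$ over the common feasible set, write the first-order optimality (variational inequality) for each, and add them. Strong convexity then yields $2\epsilon \Vert \mathbf{\nu}^{1} - \mathbf{\nu}^{2} \Vert^{2} \leq \langle 2(\mathbf{D}(\mathbf{w}_{sy}^{2}) - \mathbf{D}(\mathbf{w}_{sy}^{1})) \mathbf{\nu}^{2} + \mathbf{c}(\mathbf{w}_{sy}^{2}) - \mathbf{c}(\mathbf{w}_{sy}^{1}), \mathbf{\nu}^{1} - \mathbf{\nu}^{2} \rangle$, and Cauchy--Schwarz together with the linearity of $\mathbf{D}, \mathbf{c}$ and boundedness of $\mathbf{\nu}^{2}$ gives $\Vert \mathbf{\nu}^{1} - \mathbf{\nu}^{2} \Vert = O(\tfrac{1}{\epsilon}) \Vert \mathbf{w}_{sy}^{1} - \mathbf{w}_{sy}^{2} \Vert$. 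Restricting to the target block $\mathbf{y}^{*}$ of $\mathbf{\nu}^{*}$ proves the Lipschitz claim.

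The second and third bullets follow from classical value-function arguments. Concavity in $\mathbf{w}_{sy}$ is immediate: for fixed $\mathbf{\nu}$ the objective is affine in $\mathbf{w}_{sy}$, so $V$ is a pointwise infimum of affine functions. Convexity in $\mathbf{b}$ follows from the standard perturbation argument --- a convex combination of points feasible for $\mathbf{b}^{1}$ and $\mathbf{b}^{2}$ is feasible for the corresponding convex combination of right-hand sides, so $V(\lambda\mathbf{b}^{1} + (1-\lambda)\mathbf{b}^{2}) \leq \lambda V(\mathbf{b}^{1}) + (1-\lambda) V(\mathbf{b}^{2})$. For differentiability in $\mathbf{w}_{sy}$, I would specialize \thmref{thm:value-function-gradient} (the envelope theorem): because the objective is $\mathbf{w}_{sy}^{T}\mathbf{\Phi}(\mathbf{y}, \cdots)$ plus a $\mathbf{w}_{sy}$-independent regularizer, its partial derivative in $\mathbf{w}_{sy}$ at the minimizer is exactly $\mathbf{\Phi}(\mathbf{y}^{*}, \cdots)$; uniqueness of the minimizer and its continuity (first bullet) make the superdifferential of the concave map $V$ a singleton, yielding differentiability with the stated gradient, and Lipschitz continuity of $\nabla_{\mathbf{w}_{sy}} V$ then reduces to Lipschitzness of $\mathbf{y}^{*}$ composed with the (locally Lipschitz) hinge potentials $\mathbf{\Phi}$ on the bounded feasible set.

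The final bullet is the delicate one, and I would route it through the dual \eqref{eq:dual_lcqp_inference}. Since a feasible point exists, Slater's condition gives strong duality and a nonempty optimal dual set. The classical convex sensitivity result (e.g. \cite{bonnans:book00, rockafellar:book70}) identifies the subdifferential of the optimal value with respect to the right-hand side, $\partial_{\mathbf{b}} V$, with the set of optimal multipliers; the hypothesis that some $\mathbf{\nu}$ \emph{strictly} satisfies the $i$th constraint is precisely the qualification ensuring that the $i$th component of this multiplier set is a nonempty, bounded interval, giving the stated $\partial_{\mathbf{b}[i]} V = \{\mathbf{\mu}^{*}[i]\}$. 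To pass to $\mathbf{w}_{nn}$, I would note that smoothness of $\mathbf{g}_{nn}$ makes $\mathbf{b}$ a smooth (affine-in-$\mathbf{g}_{nn}$) function of $\mathbf{w}_{nn}$, and then invoke the chain rule for \emph{regular} (Fr\'echet) subgradients of a convex outer function composed with a smooth inner map, which yields the inclusion $\hat{\partial}_{\mathbf{w}_{nn}} V \supset \nabla_{\mathbf{w}_{nn}} \mathbf{b}^{T} \partial_{\mathbf{b}} V$ of \eqref{eq:value_function_subgradient}.

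I expect the main obstacle to be this last step: the nonsmooth chain rule only yields an \emph{inclusion} (not equality) for regular subgradients absent an additional qualification, so I must be careful to claim only $\supset$, and I must track the sign conventions in the dual so the multipliers $\mathbf{\mu}^{*} \geq 0$ carry the correct sign into $\partial_{\mathbf{b}} V$. Secondary care is needed in the first bullet to justify boundedness of the slack block of $\mathbf{\nu}^{2}$ (so the $O(1/\epsilon)$ constant is uniform), which follows from the box constraints $\mathbf{y} \in [0,1]^{n_{y}}$ and the fact that each slack equals a bounded hinge value at optimality.
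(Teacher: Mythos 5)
Your proposal is correct and, for the last three bullets, follows essentially the same route as the paper: concavity in $\mathbf{w}_{sy}$ as a pointwise infimum of affine functions, convexity in $\mathbf{b}$ via the convex-combination-of-feasible-points perturbation argument (the paper gives this first-principles argument alongside a dual-based one), differentiability in $\mathbf{w}_{sy}$ via a Danskin/envelope argument keyed to uniqueness of the minimizer, and the final bullet via convex sensitivity identifying $\partial_{\mathbf{b}[i]}V$ with the optimal multipliers under the strict-feasibility qualification, followed by the Rockafellar--Wets chain rule for regular subgradients, correctly claiming only the inclusion $\supset$.

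The one place you genuinely diverge is the first bullet. The paper obtains Lipschitz stability of the minimizer by invoking a quantitative perturbation result of Bonnans and Shapiro (restated as \thmref{thm:bonnans-shapiro-00}): it forms the difference function $\Delta_{\mathbf{w}_{sy}}(\nu) = f(\nu,\mathbf{w}_{2,sy},\mathbf{w}_{nn}) - f(\nu,\mathbf{w}_{1,sy},\mathbf{w}_{nn})$, bounds its Lipschitz constant by $(1+2C)\Vert\mathbf{w}_{2,sy}-\mathbf{w}_{1,sy}\Vert$ on the bounded feasible region, and divides by the second-order growth constant $\epsilon$. You instead add the two variational inequalities at $\mathbf{\nu}^{1}$ and $\mathbf{\nu}^{2}$ and use strong monotonicity of the quadratic's gradient plus Cauchy--Schwarz. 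Both arguments hinge on the same two facts you identify --- the feasible set is independent of $\mathbf{w}_{sy}$, and the slack block is uniformly bounded via the hinge-value bound, which the paper also establishes explicitly --- and both deliver the same $O(1/\epsilon)$ constant. Your version is more self-contained (no external stability theorem needed), at the cost of requiring smoothness of the objective in $\mathbf{\nu}$, which the LCQP of course satisfies; the paper's version generalizes more readily to nonsmooth objectives satisfying only a quadratic growth condition. No gaps.
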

\begin{proof}
    See \appref{appendix:continuity_of_inference}.
\end{proof}

\thmref{thm:continuity_properties} establishes the continuity properties of the NeuPSL optimal value-function, complementing the results in the following section, specifically in \thmref{thm:value-function-gradient}.
Further, it provides a simple explicit form of the value-function gradient with respect to the symbolic weights and a regular subgradient with respect to the neural weights.
Thus, \thmref{thm:continuity_properties} supports the principled application of the end-to-end learning algorithms presented in the following sections for training both the symbolic and neural weights of a NeuPSL model.

\section{A Suite of Learning Techniques for NeSy}
\label{sec:nesy-ebm-learning}
Having identified a variety of modeling and inference paradigms, we turn to learning.
This section formalizes the NeSy-EBM learning problem, identifies challenges, and proposes effective solutions.
At a high level, NeSy-EBM learning is finding weights of an energy function that associates higher compatibility scores (lower energy) to targets and neural outputs near their true labels provided in training data.
Further, predictions with NeSy-EBMs are obtained by minimizing a complex mathematical program, raising several obstacles to learning.
For instance, NeSy-EBM predictions may not be differentiable with respect to the model parameters, and a direct application of automatic differentiation may not be possible or may fail to produce principled descent directions for the learning objective.
Moreover, we will show that even when predictions are differentiable, their gradients are functions of properties of the energy function at its minimizer that are prohibitively expensive to compute. 
We create general and principled learning frameworks for NeSy-EBMs that address these challenges.

This section is organized into four subsections.
We begin with preliminary notation and a general definition of NeSy-EBM learning.
Then, we present a classification of learning losses and establish theoretical differentiability results for NeSy-EBMs.
The learning losses motivate and organize the exposition of four NeSy-EBM learning frameworks, one for learning the neural and symbolic weights separately and three for end-to-end learning.
\subsection{NeSy-EBM Learning}

We use the following notation and general definition of NeSy-EBM learning throughout this section.
The training dataset, denoted by $\mathcal{S}$, is comprised of $P$ samples and indexed by $\{1, \cdots, P\}$.
Each sample, $\mathcal{S}_{i}$ where $i \in \{1, \cdots, P\}$, is a tuple of \emph{inputs}, \emph{labels}, and \emph{latent variable domains}.
Sample \emph{inputs} consist of neural inputs, $\mathbf{x}^{i}_{nn}$ from $\mathcal{X}_{nn}$, and symbolic inputs, $\mathbf{x}^{i}_{sy}$ from $\mathcal{X}_{sy}$.
Similarly, sample \emph{labels} consist of neural and symbolic labels, which are truth values corresponding to a subset of the neural predictions and target variables, respectively.
Neural labels, denoted by $\mathbf{t}^{i}_{nn}$, are $d^{i}_{nn} \leq d_{nn}$ dimensional real vectors from a domain $\mathcal{T}^{i}_{nn}$, i.e., $\mathbf{t}^{i}_{nn} \in \mathcal{T}^{i}_{nn} \subseteq \mathbb{R}^{d^{i}_{nn}}$.
Target labels, denoted by $\mathbf{t}^{i}_{\mathcal{Y}}$, are from a domain $\mathcal{T}^{i}_{\mathcal{Y}}$ that is a $d_{\mathcal{T}^{i}_{\mathcal{Y}}} \leq d_{\mathcal{Y}}$ subspace of the target domain $\mathcal{Y}$, i.e., $\mathbf{t}^{i}_{\mathcal{Y}} \in \mathcal{T}^{i}_{\mathcal{Y}}$.
Lastly, the neural and symbolic \emph{latent variable domains} are subspaces of the range of the neural component and the target domain, respectively, corresponding to the set of unlabeled variables.
The range of the neural component, $\mathbb{R}^{d^{nn}}$, is a superset of the Cartesian product of the neural latent variable domain, denoted by $\mathcal{Z}^{i}_{nn}$, and $\mathcal{T}^{i}_{nn}$, i.e., $\mathbb{R}^{d^{nn}} \supseteq \mathcal{T}^{i}_{nn} \times \mathcal{Z}^{i}_{nn}$.
Similarly, the target domain $\mathcal{Y}$ is a superset of the Cartesian product of the latent variable domain, denoted by $\mathcal{Z}^{i}_{\mathcal{Y}}$, and $\mathcal{T}^{i}_{\mathcal{Y}}$, i.e., $\mathcal{Y} \supseteq \mathcal{T}^{i}_{\mathcal{Y}} \times \mathcal{Z}^{i}_{\mathcal{Y}}$.
With this notation, the training dataset is expressed as follows:
{
\begin{align}
    \mathcal{S} := \{ & (\mathbf{t}_{\mathcal{Y}}^{1}, \mathbf{t}_{nn}^{1}, \mathcal{Z}^{1}_{nn}, \mathcal{Z}^{1}_{\mathcal{Y}}, \mathbf{x}^{1}_{sy}, \mathbf{x}^{1}_{nn}), \cdots, (\mathbf{t}_{\mathcal{Y}}^{P}, \mathbf{t}_{nn}^{P}, \mathcal{Z}^{P}_{nn}, \mathcal{Z}^{P}_{\mathcal{Y}}, \mathbf{x}^{P}_{sy}, \mathbf{x}^{P}_{nn})\}.
\end{align}
}%

A learning objective, denoted by $\mathcal{L}$, is a functional that maps an energy function and a training dataset to a scalar value.
Formally, let $\mathcal{E}$ be a family of energy functions indexed by weights from $\mathcal{W}_{sy} \times \mathcal{W}_{nn}$:
{
\begin{align}
    \mathcal{E} := \{ E(\cdot, \cdot, \cdot, \mathbf{w}_{sy}, \mathbf{w}_{nn}) \, \vert \, (\mathbf{w}_{sy}, \mathbf{w}_{nn}) \in \mathcal{W}_{sy} \times \mathcal{W}_{nn} \}.
\end{align}
}%
Then, a learning objective is the function:
\begin{align}
    \mathcal{L} : \mathcal{E} \times \{\mathcal{S}\} \to \mathbb{R}.
\end{align}
Learning objectives follow the standard empirical risk minimization framework and are separable over elements of $\mathcal{S}$ as a sum of \emph{per-sample loss functionals} denoted by $L^{i}$ for each $i \in \{1, \cdots, P\}$.
A loss functional for the sample $\mathcal{S}_{i} \in \mathcal{S}$ is the function: 
\begin{align}
    L^{i} : \mathcal{E} \times \{\mathcal{S}_{i}\} \to \mathbb{R}.
\end{align}
A regularizer, denoted by $\mathcal{R}: \mathcal{W}_{sy} \times \mathcal{W}_{nn} \to \mathbb{R}$, is added to the learning objective and NeSy-EBM learning is the following minimization problem:
{
\begin{align}
    & \argmin_{(\mathbf{w}_{sy}, \mathbf{w}_{nn}) \in \mathcal{W}_{sy} \times \mathcal{W}_{nn}} \mathcal{L}(E(\cdot, \cdot, \cdot, \mathbf{w}_{sy}, \mathbf{w}_{nn}), \mathcal{S}) + \mathcal{R}(\mathbf{w}_{sy}, \mathbf{w}_{nn}) \label{eq:nesy-ebm-learning} \\
    & = \argmin_{(\mathbf{w}_{sy}, \mathbf{w}_{nn}) \in \mathcal{W}_{sy} \times \mathcal{W}_{nn}} \frac{1}{P} \sum_{i = 1}^{P} L^{i}(E(\cdot, \cdot, \cdot, \mathbf{w}_{sy}, \mathbf{w}_{nn}), \mathcal{S}_{i}) + \mathcal{R}(\mathbf{w}_{sy}, \mathbf{w}_{nn}) .
    \nonumber
\end{align}
}%

\subsection{Learning Losses}
\label{sec:learning-losses}

A NeSy-EBM learning loss functional, $L^{i}$, is separable into three parts: \emph{neural}, \emph{value-based}, and \emph{minimizer-based} losses.
In this subsection, we formally define each of the three loss types.
At a high level, the neural loss measures the quality of the neural component independent from the symbolic component. 
Then, the value-based and minimizer-based losses measure the quality of the NeSy-EBM as a whole.
Moreover, value-based and minimizer-based losses are functionals mapping a parameterized energy function and a training sample to a real value and are denoted by $L_{Val}: \mathcal{E} \times \mathcal{S} \to \mathbb{R}$ and $L_{Min}: \mathcal{E} \times \mathcal{S} \to \mathbb{R}$, respectively.
The learning loss components are aggregated via summation:

\begin{align}
    L^{i}(E(\cdot, & \cdot, \cdot, \mathbf{w}_{sy}, \mathbf{w}_{nn}), \mathcal{S}_{i})  \\
    = & \quad L_{NN}(\mathbf{g}_{nn}(\mathbf{x}^{i}_{nn}, \mathbf{w}_{nn}), \mathbf{t}^{i}_{nn}) & \textrm{Neural} \nonumber \\
    & \quad + L_{Val}(E(\cdot, \cdot, \cdot, \mathbf{w}_{sy}, \mathbf{w}_{nn}), \mathcal{S}_{i}) & \textrm{Value-Based} \nonumber \\
    & \quad + L_{Min}(E(\cdot, \cdot, \cdot, \mathbf{w}_{sy}, \mathbf{w}_{nn}), \mathcal{S}_{i}) & \textrm{Minimizer-Based} \nonumber 
\end{align}

\subsubsection{Neural Learning Losses}
\textit{Neural} learning losses are scalar functions of the neural network output and the neural labels and are denoted by $L_{NN}: \textrm{Range}(\mathbf{g}_{nn}) \times \mathcal{T}^{i}_{nn} \to \mathbb{R}$.
For example, a neural learning loss may be the familiar binary cross-entropy loss applied in many categorical prediction settings.
Minimizing a neural learning loss with respect to neural component parameters is achievable via backpropagation and standard gradient-based algorithms.

\subsubsection{Value-Based Learning Losses}
\emph{Value-based} learning losses depend on the model weights strictly via minimizing values of an objective defined with the energy.
More formally, denote an objective function by $f$, which maps a compatibility score, target variables, and the training sample to a scalar value:
\begin{align}
    f: \mathbb{R} \times \mathcal{Y} \times \{\mathcal{S}_{i}\} \to \mathbb{R}.
\end{align}
An \emph{optimal value-function}, denoted by $V$, is the value of $f$ composed with the energy function and minimized over the target variables:
\begin{align}
    V(\mathbf{w}_{sy}, \mathbf{w}_{nn}, \mathcal{S}_{i}) 
    & := \min_{\hat{\mathbf{y}} \in \mathcal{Y}} f \left( E(\hat{\mathbf{y}}, \mathbf{x}^{i}_{sy}, \mathbf{x}^{i}_{nn}, \mathbf{w}_{sy}, \mathbf{w}_{nn}), \hat{\mathbf{y}}, \mathcal{S}_{i} \right) \nonumber \\
    & := \min_{\hat{\mathbf{y}} \in \mathcal{Y}} f(g_{sy}(\hat{\mathbf{y}}, \mathbf{x}_{sy}^{i}, \mathbf{w}_{sy}, \mathbf{g}_{nn}(\mathbf{x}_{nn}^{i}, \mathbf{w}_{nn})), \hat{\mathbf{y}}, S_{i}) \label{eq:optimal-value-function}
\end{align}
Value-based learning losses are functions of one or more optimal value functions.
In this work, we consider three instances of optimal value functions: 1) \emph{latent}, $V_{\mathcal{Z}}$, 2) \emph{full}, $V_{\mathcal{Y}}$, 3) and \emph{convolutional}, $V_{conv}$.
The latent optimal value function is the minimizing value of the energy over the latent targets. 
Further, the labeled targets are fixed to their true values using the following indicator function:
{
\begin{align}
    \label{eq:deep-symbolic-targets-indicator}
    & I_{\mathcal{T}^{i}_{\mathcal{Y}}}(\mathbf{y}, \mathbf{t}^{i}_{\mathcal{Y}}) := \begin{cases}
        0 & \mathbf{y} = \mathbf{t}^{i}_{\mathcal{Y}} \\
        \infty & \textrm{o.w.}
    \end{cases}.
\end{align}
}%
The full optimal value function is the minimizing value of the energy over all of the targets.
Lastly, the convolutional optimal value function is the infimal convolution of the energy function and a function $d: \mathcal{Y} \times \mathcal{Y} \to \mathcal{R}$ scaled by a positive real value $\lambda \in \mathcal{R}$.
Formally:
{\small
\begin{align}
    V_{\mathcal{Z}}(\mathbf{w}_{sy}, \mathbf{w}_{nn}, \mathcal{S}_{i}) 
    := & \min_{\hat{\mathbf{y}} \in \mathcal{Y}} E(\hat{\mathbf{y}}, \mathbf{x}^{i}_{sy}, \mathbf{x}^{i}_{nn}, \mathbf{w}_{sy}, \mathbf{w}_{nn}) + I_{\mathcal{T}^{i}_{\mathcal{Y}}}(\hat{\mathbf{y}}, \mathbf{t}^{i}_{\mathcal{Y}}), & \nonumber \\
    \label{eq:latent-optimal-value-function}
    = & \min_{\hat{\mathbf{z}} \in \mathcal{Z}^{i}_{\mathcal{Y}}} E((\mathbf{t}^{i}_{\mathcal{Y}}, \hat{\mathbf{z}}), \mathbf{x}^{i}_{sy}, \mathbf{x}^{i}_{nn}, \mathbf{w}_{sy}, \mathbf{w}_{nn}), & \textrm{latent} \\
    \label{eq:full-optimal-value-function}
    V_{\mathcal{Y}}(\mathbf{w}_{sy}, \mathbf{w}_{nn}, \mathcal{S}_{i}) 
    := & \min_{\hat{\mathbf{y}} \in \mathcal{Y}} E(\hat{\mathbf{y}}, \mathbf{x}^{i}_{sy}, \mathbf{x}^{i}_{nn}, \mathbf{w}_{sy}, \mathbf{w}_{nn}), & \textrm{full} \\
    \label{eq:conv-optimal-value-function}
    V_{conv}(\mathbf{w}_{sy}, \mathbf{w}_{nn}, \mathcal{S}_{i}; \mathbf{y}, \lambda) 
    := & \min_{\hat{\mathbf{y}} \in \mathcal{Y}} E(\hat{\mathbf{y}}, \mathbf{x}^{i}_{sy}, \mathbf{x}^{i}_{nn}, \mathbf{w}_{sy}, \mathbf{w}_{nn}) + \lambda \cdot d(\hat{\mathbf{y}}, \mathbf{y}). & \textrm{convolutional}
\end{align}
}%
An illustration of an example latent optimal value-function is provided in \figref{fig:latent-optimal-value-function}.
Intuitively, the latent optimal value-function is the greatest lower bound of the set of symbolic components defined for each latent variable.

\begin{figure}
    \centering
    \includegraphics[width=0.5 \textwidth]{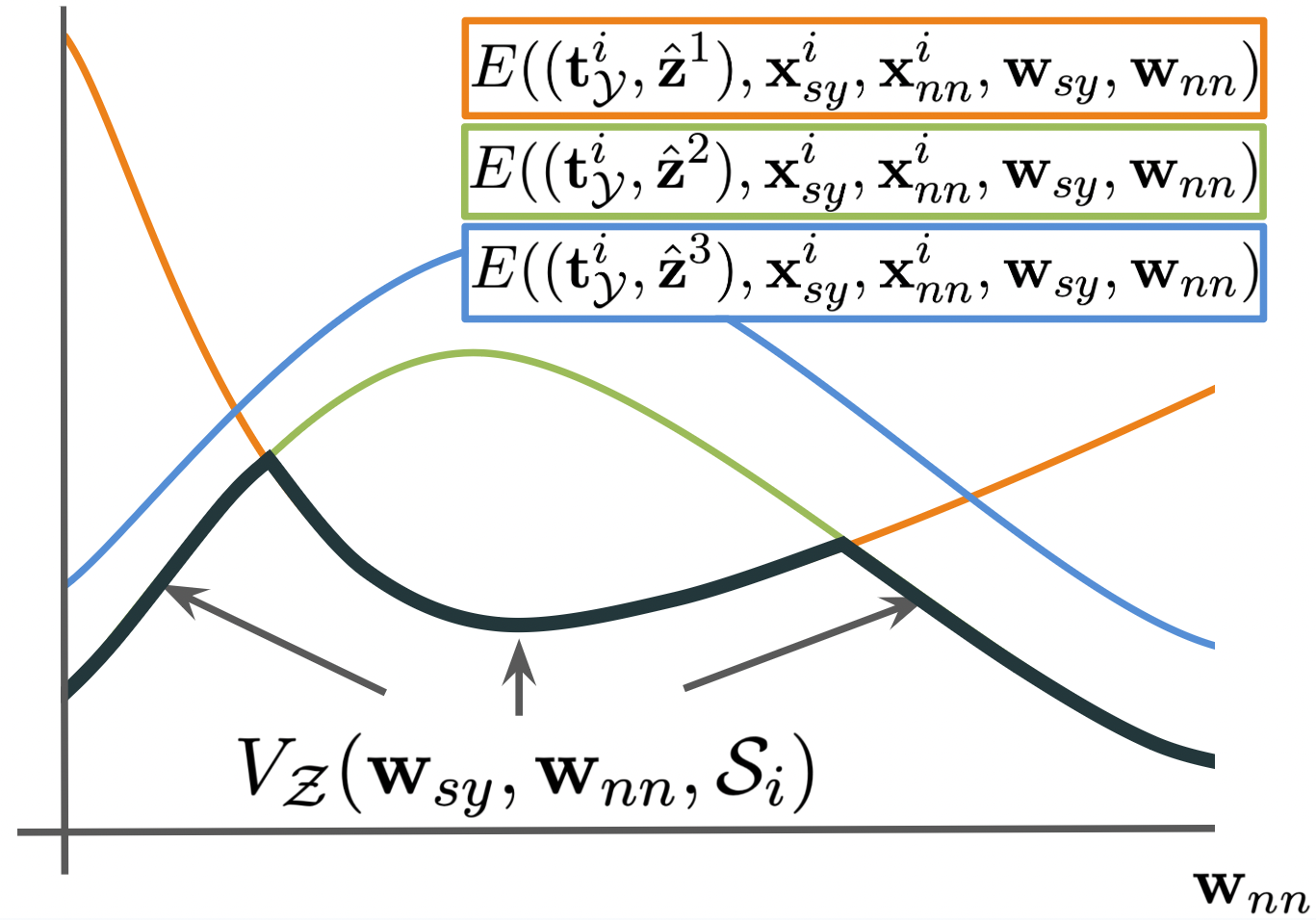}
    \caption{An illustrated example of a latent optimal value-function with a scalar neural component output and a discrete latent variable domain $\mathcal{Z} := \{\hat{\mathbf{z}}^{1}, \hat{\mathbf{z}}^{2}, \hat{\mathbf{z}}^{3}\}$.}
    \label{fig:latent-optimal-value-function}
\end{figure}

The simplest value-based learning loss is the \emph{energy loss}, denoted by $L_{Energy}$.
The energy loss is the latent optimal value function,
\begin{align}
    \label{eq:energy-loss}
    L_{Energy}(E(\cdot, \cdot, \cdot, \mathbf{w}_{sy}, \mathbf{w}_{nn}), \mathcal{S}_{i}) := V_{\mathcal{Z}}(\mathbf{w}_{sy}, \mathbf{w}_{nn}, \mathcal{S}_{i}).
\end{align}
Minimizing the energy loss encourages the parameters of the energy function to produce low energies given the observed true values of the input and target variables.
This loss is motivated by the intuition that the energy should be low for the desired values of the targets.
Notice, however, that the loss does not consider the energy of incorrect target variable values.
An extreme illustration of the issue this causes involves two energy functions.
In the first function, the minimizing point corresponds to the desired true values of the targets, while in the second function, the maximizing point corresponds to the desired true values of the targets.
Despite these differences, both functions could technically have the same energy loss; however, the first energy function is clearly preferred.
Thus, the energy loss does not universally lead to energy functions with better predictions.

The \emph{Structured Perceptron} loss, denoted by $L_{SP}$, pushes the energy of the current energy minimizer up and the energy of the true values of the targets down \citep{lecun:ieee98, collins:emnlp02}. 
Specifically, the structured perceptron loss is the difference between the latent and full optimal value functions, 
\begin{align}
    \label{eq:structured-perceptron}
    L_{SP}(E(\cdot, \cdot, \cdot, \mathbf{w}_{sy}, \mathbf{w}_{nn}), \mathcal{S}_{i}) := V_{\mathcal{Z}}(\mathbf{w}_{sy}, \mathbf{w}_{nn}, \mathcal{S}_{i}) - V_{\mathcal{Y}}(\mathbf{w}_{sy}, \mathbf{w}_{nn}, \mathcal{S}_{i}). 
\end{align}
Although the structured perceptron loss will technically encourage the target's desired values to be an energy minimizer, i.e., a valid prediction, it still has degenerate solutions for some energy function architectures.
For instance, one could minimize the energy for all target values, leading to a collapsed energy function (equal energy for all targets) with no predictive power.

The energy and structured perceptron losses require regularization and specific energy architectures to work well in practice.
For instance, energy architectures that naturally push up on other target values when pushing down on the desired targets.
Energies with limited total energy mass are examples of functions with this property.

\commentout{
    \par{\textbf{Generalized Margin}.}
    \begin{align}
        \label{eq:generalized-margin}
        L_{margin}(E(\cdot, \cdot, \cdot, \mathbf{w}_{sy}, \mathbf{w}_{nn}), \mathcal{S}_{i}; \lambda) := Q_{m}(V_{\mathcal{Y}}(\mathbf{w}_{sy}, \mathbf{w}_{nn}, \mathcal{S}_{i}), V_{contr}(\mathbf{w}_{sy}, \mathbf{w}_{nn}, \mathcal{S}_{i}; \lambda)).
    \end{align}
}

The gradient of a value-based loss with respect to neural and symbolic weights is non-trivial since both the energy function and the point the energy function is evaluated at are dependent on the neural output and symbolic weights, as exemplified by the definition of an optimal value function in \eqref{eq:optimal-value-function}.
Nonetheless, \citenoun{milgrom:econ02} delivers a general theorem providing the gradient of optimal value-functions with respect to problem parameters, if they exist.
We specialize their result in the following theorem for optimal value-functions of NeSy-EBMs.

\begin{theorem}[\citenoun{milgrom:econ02} Theorem 1 for NeSy-EBMs]
    \label{thm:value-function-gradient}
    Consider the weights $\mathbf{w}_{sy} \in \mathcal{W}_{sy}$ and $\mathbf{w}_{nn} \in \mathcal{W}_{nn}$ and the sample $\mathcal{S}_{i} = (\mathbf{t}_{\mathbf{y}}^{i}, \mathbf{t}_{nn}^{i}, \mathcal{Z}^{i}_{nn}, \mathcal{Z}^{i}_{\mathcal{Y}}, \mathbf{x}^{i}_{sy}, \mathbf{x}^{i}_{nn}) \in \mathcal{S}$.
    Suppose there exists a minimizer of the objective function $f$, 
    {
    \begin{align*}
        \mathbf{y}^{*} \in \argmin_{\hat{\mathbf{y}} \in \mathcal{Y}} f(E(\hat{\mathbf{y}}, \mathbf{x}_{sy}^{i}, \mathbf{x}_{nn}^{i}, \mathbf{w}_{sy}, \mathbf{w}_{nn}), \hat{\mathbf{y}}, \mathcal{S}_{i}),
    \end{align*}
    }%
    such that $f(E(\mathbf{y}^{*}, \mathbf{x}_{sy}^{i}, \mathbf{x}_{nn}^{i}, \mathbf{w}_{sy}, \mathbf{w}_{nn}), \mathbf{y}^{*}, \mathcal{S}_{i})$ is finite.
    
    If the optimal value-function:
    {
    \begin{align*}
        V(\mathbf{w}_{sy}, \mathbf{w}_{nn}, \mathcal{S}_{i}) & := \min_{\hat{\mathbf{y}} \in \mathcal{Y}} f(E(\hat{\mathbf{y}}, \mathbf{x}_{sy}^{i}, \mathbf{x}_{nn}^{i}, \mathbf{w}_{sy}, \mathbf{w}_{nn}), \hat{\mathbf{y}}, \mathcal{S}_{i}), \\
        & := \min_{\hat{\mathbf{y}} \in \mathcal{Y}} f(g_{sy}(\hat{\mathbf{y}}, \mathbf{x}_{sy}^{i}, \mathbf{w}_{sy}, \mathbf{g}_{nn}(\mathbf{x}_{nn}^{i}, \mathbf{w}_{nn})), \hat{\mathbf{y}}, S_{i}),
    \end{align*}
    }%
    is differentiable with respect to the neural weights, $\mathbf{w}_{nn}$, then the gradient of $V$ with respect to $\mathbf{w}_{nn}$ is:
    {
    \begin{align}
        \label{eq:value-function-gradient-wrt-neural}
        & \nabla_{\mathbf{w}_{nn}} V(\mathbf{w}_{sy}, \mathbf{w}_{nn}, \mathcal{S}_{i}) \\
        & \quad = \frac{\partial}{\partial 1} f(E(\mathbf{y}^{*}, \mathbf{x}^{i}_{sy}, \mathbf{x}^{i}_{nn}, \mathbf{w}_{sy}, \mathbf{w}_{nn}), \mathbf{y}^{*}, \mathcal{S}_{i}) \cdot \nabla_{5} E(\mathbf{y}^{*}, \mathbf{x}^{i}_{sy}, \mathbf{x}^{i}_{nn}, \mathbf{w}_{sy}, \mathbf{w}_{nn}), \nonumber
    \end{align}
    }%
    where $\frac{\partial}{\partial 1} f$ is the partial derivative of $f$ with respect to its \(1\)st argument, and $\nabla_{5} E$ is the gradient of the energy with respect to its \(5\)th argument with all other arguments fixed.

    Similarly, if $V$ is differentiable with respect to the symbolic weights, $\mathbf{w}_{sy}$, then the gradient of $V$ with respect to $\mathbf{w}_{sy}$ is:
    {
    \begin{align}
        \label{eq:value-function-gradient-wrt-symbolic}
        & \nabla_{\mathbf{w}_{sy}} V(\mathbf{w}_{sy}, \mathbf{w}_{nn}, \mathcal{S}_{i}) \\ 
        & \quad = \frac{\partial}{\partial 1} f(E(\mathbf{y}^{*}, \mathbf{x}^{i}_{sy}, \mathbf{x}^{i}_{nn}, \mathbf{w}_{sy}, \mathbf{w}_{nn}), \mathbf{y}^{*}, \mathcal{S}_{i}) \cdot \nabla_{4} E(\mathbf{y}^{*}, \mathbf{x}^{i}_{sy}, \mathbf{x}^{i}_{nn}, \mathbf{w}_{sy}, \mathbf{w}_{nn}). \nonumber
    \end{align}
    }%
\end{theorem}
\begin{proof}
    We first establish the partial derivative of the optimal value-function with respect to each component of the neural output, $\mathbf{g}_{nn}(\mathbf{x}_{nn}^{i}, \mathbf{w}_{nn})$.
    Then, we use the chain rule to derive the expression for the gradient of the optimal value-function with respect to the neural weights, $\mathbf{w}_{nn}$.
    
    For an arbitrary index $j \in \{ 1, \cdots, d_{nn} \}$, let $\mathbf{e}^{j}$ be the $j'th$ standard basis vector of $\mathbb{R}^{d_{nn}}$, i.e., $\mathbf{e}^{j} \in \mathbb{R}^{d_{nn}}$ such that $\mathbf{e}^{j}_{j} = 1$ and $\mathbf{e}^{j}_{k} = 0$ for $k \neq j$.
    Further, to clarify the relationship between the optimal value-function and the neural component output, define the following function:
    \begin{align*}
    \overline{V}: \, \mathcal{W}_{sy} \times \mathbb{R}^{d_{nn}} \times {\mathcal{S}_{i}} & \to \mathbb{R} \\
    (\mathbf{w}_{sy}, \mathbf{u}, S_{i}) & \mapsto \min_{\hat{\mathbf{y}} \in \mathcal{Y}} f(g_{sy}(\hat{\mathbf{y}}, \mathbf{x}_{sy}^{i}, \mathbf{w}_{sy}, \mathbf{u}), \hat{\mathbf{y}}, \mathcal{S}_{i})
    \end{align*}
    In other words, the optimal value-function, $V$, is equal to $\overline{V}$ evaluated at the neural output:
    \begin{align*} 
        V(\mathbf{w}_{sy}, \mathbf{w}_{nn}, \mathcal{S}_{i}) \triangleq \overline{V}(\mathbf{w}_{sy}, \mathbf{g}_{nn}(\mathbf{x}_{nn}^{i}, \mathbf{w}_{nn}), \mathcal{S}_{i}).
    \end{align*}
   
    For any $\delta \in \mathbb{R}$, by definition we have:
    {
    \begin{align*}
        & f(g_{sy}(\mathbf{y}^{*}, \mathbf{x}_{sy}^{i}, \mathbf{w}_{sy}, \mathbf{g}_{nn}(\mathbf{x}_{nn}^{i}, \mathbf{w}_{nn}) + \delta \mathbf{e}^{j}), \mathbf{y}^{*}, S_{i}) \\
        & \quad - f(g_{sy}(\mathbf{y}^{*}, \mathbf{x}_{sy}^{i}, \mathbf{w}_{sy}, \mathbf{g}_{nn}(\mathbf{x}_{nn}^{i}, \mathbf{w}_{nn})), \mathbf{y}^{*}, S_{i}) \\
         & \geq \overline{V}(\mathbf{w}_{sy}, \mathbf{g}_{nn}(\mathbf{x}_{nn}^{i}, \mathbf{w}_{nn}) + \delta \mathbf{e}^{j}, S_{i}) - \overline{V}(\mathbf{w}_{sy}, \mathbf{g}_{nn}(\mathbf{x}_{nn}^{i}, \mathbf{w}_{nn}), S_{i}).
    \end{align*}    
    }%
    For $\delta \neq 0$, dividing both sides by $\delta$ and taking the limit as $\delta \to 0+$ and as $\delta \to 0-$ yields upper and lower bounds relating partial derivatives of $f$ to $\overline{V}$ when $f$ and $\overline{V}$ are right and left hand differentiable, respectively.
    {
    \begin{align*}
        &
        \frac{\partial}{\partial \mathbf{g}_{nn}(\mathbf{x}_{nn}^{i}, \mathbf{w}_{nn})_{j}+} f(g_{sy}(\mathbf{y}^{*}, \mathbf{x}_{sy}^{i}, \mathbf{w}_{sy}, \mathbf{g}_{nn}(\mathbf{x}_{nn}^{i}, \mathbf{w}_{nn})), \mathbf{y}^{*}, S_{i}) \\
        & \quad \geq 
        \frac{\partial}{\partial \mathbf{g}_{nn}(\mathbf{x}_{nn}^{i}, \mathbf{w}_{nn})_{j}+} \overline{V}(\mathbf{w}_{sy}, \mathbf{g}_{nn}(\mathbf{x}_{nn}^{i}, \mathbf{w}_{nn}), S_{i}), \\
        &
        \frac{\partial}{\partial \mathbf{g}_{nn}(\mathbf{x}_{nn}^{i}, \mathbf{w}_{nn})_{j}-} f(g_{sy}(\mathbf{y}^{*}, \mathbf{x}_{sy}^{i}, \mathbf{w}_{sy}, \mathbf{g}_{nn}(\mathbf{x}_{nn}^{i}, \mathbf{w}_{nn})), \mathbf{y}^{*}, S_{i}) \\
        & \quad \leq 
        \frac{\partial}{\partial \mathbf{g}_{nn}(\mathbf{x}_{nn}^{i}, \mathbf{w}_{nn})_{j}-} \overline{V}(\mathbf{w}_{sy}, \mathbf{g}_{nn}(\mathbf{x}_{nn}^{i}, \mathbf{w}_{nn}), S_{i}),
    \end{align*}
    }%
    Then, by the squeeze theorem, we obtain the partial derivatives of $\overline{V}$ with respect to each component of the neural output when $\overline{V}$ is differentiable.
    {
    \begin{align*}
        & \frac{\partial}{\partial \mathbf{g}_{nn}(\mathbf{x}_{nn}^{i}, \mathbf{w}_{nn})_{j}} f(g_{sy}(\mathbf{y}^{*}, \mathbf{x}_{sy}^{i}, \mathbf{w}_{sy}, \mathbf{g}_{nn}(\mathbf{x}_{nn}^{i}, \mathbf{w}_{nn})), \mathbf{y}^{*}, S_{i}) \\
        & \quad = \frac{\partial}{\partial \mathbf{g}_{nn}(\mathbf{x}_{nn}^{i}, \mathbf{w}_{nn})_{j}} \overline{V}(\mathbf{w}_{sy}, \mathbf{g}_{nn}(\mathbf{x}_{nn}^{i}, \mathbf{w}_{nn}), S_{i}).
    \end{align*}
    }%
    Then, the chain rule of differentiation and the partial derivatives of $\overline{V}$ with respect to each component of the neural output derived above yields the gradient in \eqref{eq:value-function-gradient-wrt-neural}.
    
    A similar approach is used to obtain gradients with respect to symbolic weights in \eqref{eq:value-function-gradient-wrt-symbolic}.
\end{proof}

\thmref{thm:value-function-gradient} holds for arbitrary target variable domains and energy functions and is, therefore, widely applicable.
However, it is important to emphasize that \thmref{thm:value-function-gradient} states \emph{if} the value-function is differentiable, then the gradients have the form provided in \eqref{eq:value-function-gradient-wrt-neural} and \eqref{eq:value-function-gradient-wrt-symbolic}. 
\citenoun{milgrom:econ02} also provide sufficient conditions for guaranteeing the differentiability of optimal value-functions with arbitrary decision variable domains.
Beyond \citepossessive{milgrom:econ02} work, there is extensive literature on analyzing the sensitivity of optimal value-functions and guaranteeing their differentiability, including the seminal papers of \cite{danskin:siam66} on parameterized objective functions and \cite{rockafellar:rcsam74} for parameterized constraints.
We direct the reader to the cited articles for properties that guarantee differentiability of value-functions and, hence, NeSy-EBM value-based losses.

The conditions ensuring differentiability of the optimal value-functions as well as the tractability of computing the gradient of the symbolic component with respect to its arguments in \eqref{eq:value-function-gradient-wrt-neural} and \eqref{eq:value-function-gradient-wrt-symbolic} directly connect to the energy function architecture and modeling paradigms discussed in the previous section.
Specifically, if principled gradient-based learning is desired, then practitioners must design the symbolic potential such that it is 1) differentiable with respect to the neural output and symbolic potentials, 2) the gradient of the symbolic potential with respect to its arguments is tractable, and 3) it satisfies sufficient conditions for ensuring differentiability of its minimizing value over the targets.

Performance metrics are not always aligned with value-based losses.
Moreover, they are known to have degenerate solutions \citep{lecun:book06, pryor:ijcai23}.
For example, without a carefully designed inductive bias, the energy loss in \eqref{eq:energy-loss} may only learn to reduce the energy of all target variables without improving the predictive performance of the NeSy-EBM. 
One fundamental cause of this issue is that value-based losses are not directly functions of the NeSy-EBM prediction as defined in \eqref{eq:nesy-ebm-prediction}, i.e., value-based losses are not functions of an energy minimizer, which is what we turn to next.

\subsubsection{Minimizer-Based Learning Losses}
A \emph{minimizer-based} loss is a composition of a differentiable loss, such as cross-entropy or mean squared error, with the energy minimizer.
Intuitively, minimizer-based losses penalize parameters yielding predictions distant from the labeled training data.
In the remainder of this subsection, we formally define minimizer-based learning losses.
Further, for completeness, we derive general expressions for gradients of minimizer-based losses with respect to symbolic and neural weights.
However, as will be shown, direct computation of minimizer-based loss gradients requires prohibitive assumptions on the energy function and can be impractical to compute.
Moreover, the derivation of the gradients motivates learning algorithms that do not perform direct gradient descent on minimizer-based losses.
For this reason, in the following subsection we propose algorithms that do not require minimizer gradients.

To ensure a minimizer-based loss is well-defined, we assume a unique energy minimizer exists, denoted by $\mathbf{y}^{*}$, for every training sample.
This assumption is formalized below.
\begin{assumption}
    \label{assumption:unique-minimizer}
    The energy function is minimized over the targets at a single point for every input and weight and is, therefore, a function:
    \begin{align}
        \mathbf{y}^{*}: \mathcal{X}_{sy} \times \mathcal{X}_{nn} \times \mathcal{W}_{sy} \times \mathcal{W}_{nn} & \to \mathcal{Y} \\
        (\mathbf{x}_{sy}, \mathbf{x}_{nn}, \mathbf{w}_{sy}, \mathbf{w}_{nn}) & \mapsto \argmin_{\hat{\mathbf{y}} \in \mathcal{Y}} E(\hat{\mathbf{y}}, \mathbf{x}_{sy}, \mathbf{x}_{nn}, \mathbf{w}_{sy}, \mathbf{w}_{nn}) \nonumber
    \end{align}
\end{assumption}
Under \assumptionref{assumption:unique-minimizer}, $d$ is a mapping of targets and labels to a scalar value:
\begin{align}
    d: \mathcal{Y} \times \mathcal{T}_{\mathcal{Y}}^{i} \to \mathbb{R},
\end{align}
and a minimizer-based loss is a composition of $d$ and $\mathbf{y}^{*}$:
{
\begin{align}
    \label{eq:differentiable-minimizer}
    L_{Min} (E(\cdot, \cdot, \cdot, \mathbf{w}_{sy}, \mathbf{w}_{nn}), \mathcal{S}_{i}) 
    & := d(\argmin_{\hat{\mathbf{y}} \in \mathcal{Y}} E(\hat{\mathbf{y}}, \mathbf{x}^{i}_{sy}, \mathbf{x}^{i}_{nn}, \mathbf{w}_{sy}, \mathbf{w}_{nn}), \mathbf{t}^{i}_{\mathcal{Y}}) \\
    & := d(\mathbf{y}^{*}(\mathbf{x}^{i}_{sy}, \mathbf{x}^{i}_{nn}, \mathbf{w}_{sy}, \mathbf{w}_{nn}), \mathbf{t}^{i}_{\mathcal{Y}}) \nonumber 
\end{align}
}%

To ensure principled gradient-based learning, we must further assume that the minimizer is differentiable. 
\begin{assumption}
    \label{assumption:differentiable-minimizer}
    The minimizer, $\mathbf{y}^{*}$, is differentiable with respect to the weights at every point in $\mathcal{X}_{sy} \times \mathcal{X}_{nn} \times \mathcal{W}_{sy} \times \mathcal{W}_{nn}$.
\end{assumption}
Under \assumptionref{assumption:differentiable-minimizer}, the chain rule of differentiation yields the gradient of a minimizer-based loss with respect to the neural and symbolic weights:
\begin{align}
    \label{eq:minimizer-based-loss-symbolic-gradient}
    & \nabla_{\mathbf{w}_{sy}} L_{Min}(\mathbf{y}^{*}(\mathbf{x}^{i}_{sy}, \mathbf{x}^{i}_{nn}, \mathbf{w}_{sy}, \mathbf{w}_{nn})), \mathbf{t}^{i}_{\mathcal{Y}}) \nonumber \\ 
    & \quad = \nabla_{3} \mathbf{y}^{*}(\mathbf{x}^{i}_{sy}, \mathbf{x}^{i}_{nn}, \mathbf{w}_{sy}, \mathbf{w}_{nn})^{T} \nabla_{1} d(\mathbf{y}^{*}(\mathbf{x}^{i}_{sy}, \mathbf{x}^{i}_{nn}, \mathbf{w}_{sy}, \mathbf{w}_{nn}), \mathbf{t}^{i}_{\mathcal{Y}}), \\
    \label{eq:minimizer-based-loss-neural-gradient}
    & \nabla_{\mathbf{w}_{nn}} L_{Min}(\mathbf{y}^{*}(\mathbf{x}^{i}_{sy}, \mathbf{x}^{i}_{nn}, \mathbf{w}_{sy}, \mathbf{w}_{nn})), \mathbf{t}^{i}_{\mathcal{Y}}) \nonumber \\ 
    & \quad = \nabla_{4} \mathbf{y}^{*}(\mathbf{x}^{i}_{sy}, \mathbf{x}^{i}_{nn}, \mathbf{w}_{sy}, \mathbf{w}_{nn})^{T} \nabla_{1} d(\mathbf{y}^{*}(\mathbf{x}^{i}_{sy}, \mathbf{x}^{i}_{nn}, \mathbf{w}_{sy}, \mathbf{w}_{nn}), \mathbf{t}^{i}_{\mathcal{Y}}), 
\end{align}
where $\nabla_{3} \mathbf{y}^{*}(\mathbf{x}^{i}_{sy}, \mathbf{x}^{i}_{nn}, \mathbf{w}_{sy}, \mathbf{w}_{nn})$ and $\nabla_{4} \mathbf{y}^{*}(\mathbf{x}^{i}_{sy}, \mathbf{x}^{i}_{nn}, \mathbf{w}_{sy}, \mathbf{w}_{nn})$ are the Jacobian matrices of the unique energy minimizer with respect to the third and fourth arguments of $\mathbf{y}^{*}$, the symbolic and neural weights, respectively, and $\nabla_{1} d(\mathbf{y}^{*}(\mathbf{x}^{i}_{sy}, \mathbf{x}^{i}_{nn}, \mathbf{w}_{sy}, \mathbf{w}_{nn}), \mathbf{t}^{i}_{\mathcal{Y}})$ is the gradient of the supervised loss with respect to its first argument.

A primary challenge of minimizer-based learning is computing the Jacobian matrices of partial derivatives, $\nabla_{3} \mathbf{y}^{*}(\mathbf{x}^{i}_{sy}, \mathbf{x}^{i}_{nn}, \mathbf{w}_{sy}, \mathbf{w}_{nn})$ and $\nabla_{4} \mathbf{y}^{*}(\mathbf{x}^{i}_{sy}, \mathbf{x}^{i}_{nn}, \mathbf{w}_{sy}, \mathbf{w}_{nn})$.
To derive explicit expressions for them typically demands the following additional assumption on the continuity properties of the energy function.
\begin{assumption}
    \label{assumption:twice-differentiable-energy}
    The energy, $E$, is twice differentiable with respect to the targets at the minimizer, $\mathbf{y}^{*}$, and the Hessian matrix of second-order partial derivatives with respect to the targets, $\nabla_{1,1} E(\mathbf{y}^{*}(\mathbf{x}^{i}_{sy}, \mathbf{x}^{i}_{nn}, \mathbf{w}_{sy}, \mathbf{w}_{nn}), \mathbf{x}^{i}_{sy}, \mathbf{x}^{i}_{nn}, \mathbf{w}_{sy}, \mathbf{w}_{nn})$, is invertible.
    Further, the minimizer is the unique target satisfying first-order conditions of optimality, i.e.,
    \begin{align}
        \forall \mathbf{y} \in \mathcal{Y}, \quad
        \nabla_{1} E(\mathbf{y}, \mathbf{x}^{i}_{sy}, \mathbf{x}^{i}_{nn}, \mathbf{w}_{sy}, \mathbf{w}_{nn}) = 0 
        \, \iff \,
        \mathbf{y} = \mathbf{y}^{*}(\mathbf{x}^{i}_{sy}, \mathbf{x}^{i}_{nn}, \mathbf{w}_{sy}, \mathbf{w}_{nn})
    \end{align}
\end{assumption}
\assumptionref{assumption:twice-differentiable-energy} is satisfied by energy functions that are, for instance, smooth and strongly convex in the targets.
Under \assumptionref{assumption:twice-differentiable-energy}, the first-order optimality condition establishes the minimizer as an implicit function of the weights, and implicit differentiation yields the following equalities:
\begin{align}
    & \nabla_{1,1} E(\mathbf{y}^{*}(\mathbf{x}^{i}_{sy}, \mathbf{x}^{i}_{nn}, \mathbf{w}_{sy}, \mathbf{w}_{nn}), \mathbf{x}^{i}_{sy}, \mathbf{x}^{i}_{nn}, \mathbf{w}_{sy}, \mathbf{w}_{nn}) \nabla_{3} \mathbf{y}^{*}(\mathbf{x}^{i}_{sy}, \mathbf{x}^{i}_{nn}, \mathbf{w}_{sy}, \mathbf{w}_{nn}) \\
    & \quad = -\nabla_{1,4} E(\mathbf{y}^{*}(\mathbf{x}^{i}_{sy}, \mathbf{x}^{i}_{nn}, \mathbf{w}_{sy}, \mathbf{w}_{nn}), \mathbf{x}^{i}_{sy}, \mathbf{x}^{i}_{nn}, \mathbf{w}_{sy}, \mathbf{w}_{nn}) \nonumber \\
    & \nabla_{1,1} E(\mathbf{y}^{*}(\mathbf{x}^{i}_{sy}, \mathbf{x}^{i}_{nn}, \mathbf{w}_{sy}, \mathbf{w}_{nn}), \mathbf{x}^{i}_{sy}, \mathbf{x}^{i}_{nn}, \mathbf{w}_{sy}, \mathbf{w}_{nn}) \nabla_{4} \mathbf{y}^{*}(\mathbf{x}^{i}_{sy}, \mathbf{x}^{i}_{nn}, \mathbf{w}_{sy}, \mathbf{w}_{nn}) \\
    & \quad = -\nabla_{1,5} E(\mathbf{y}^{*}(\mathbf{x}^{i}_{sy}, \mathbf{x}^{i}_{nn}, \mathbf{w}_{sy}, \mathbf{w}_{nn}), \mathbf{x}^{i}_{sy}, \mathbf{x}^{i}_{nn}, \mathbf{w}_{sy}, \mathbf{w}_{nn}) \nonumber
\end{align}
Solving for the Jacobians of the minimizer:
{\small
\begin{align}
    \label{eq:minimizer-symbolic-gradient}
    \nabla_{3} \mathbf{y}^{*}(\mathbf{x}^{i}_{sy}, \mathbf{x}^{i}_{nn}, \mathbf{w}_{sy}, \mathbf{w}_{nn}) = - \big ( & \nabla_{1,1} E(\mathbf{y}^{*}(\mathbf{x}^{i}_{sy}, \mathbf{x}^{i}_{nn}, \mathbf{w}_{sy}, \mathbf{w}_{nn}), \mathbf{x}^{i}_{sy}, \mathbf{x}^{i}_{nn}, \mathbf{w}_{sy}, \mathbf{w}_{nn})^{-1} \\
    & \quad \quad \nabla_{1,4} E(\mathbf{y}^{*}(\mathbf{x}^{i}_{sy}, \mathbf{x}^{i}_{nn}, \mathbf{w}_{sy}, \mathbf{w}_{nn}), \mathbf{x}^{i}_{sy}, \mathbf{x}^{i}_{nn}, \mathbf{w}_{sy}, \mathbf{w}_{nn}) \big ), \nonumber \\
    \label{eq:minimizer-neural-gradient}
    \nabla_{4} \mathbf{y}^{*}(\mathbf{x}^{i}_{sy}, \mathbf{x}^{i}_{nn}, \mathbf{w}_{sy}, \mathbf{w}_{nn}) = - \big ( & \nabla_{1,1} E(\mathbf{y}^{*}(\mathbf{x}^{i}_{sy}, \mathbf{x}^{i}_{nn}, \mathbf{w}_{sy}, \mathbf{w}_{nn}), \mathbf{x}^{i}_{sy}, \mathbf{x}^{i}_{nn}, \mathbf{w}_{sy}, \mathbf{w}_{nn})^{-1} \\
    & \quad \quad \nabla_{1,5} E(\mathbf{y}^{*}(\mathbf{x}^{i}_{sy}, \mathbf{x}^{i}_{nn}, \mathbf{w}_{sy}, \mathbf{w}_{nn}), \mathbf{x}^{i}_{sy}, \mathbf{x}^{i}_{nn}, \mathbf{w}_{sy}, \mathbf{w}_{nn}) \big ). \nonumber
\end{align}
}%
The Jacobians in \eqref{eq:minimizer-symbolic-gradient} and \eqref{eq:minimizer-neural-gradient} applied to \eqref{eq:minimizer-based-loss-symbolic-gradient} and \eqref{eq:minimizer-based-loss-neural-gradient}, respectively, are referred to as hypergradients in the machine learning literature and are utilized in hyperparameter optimization and meta-learning \citep{do:neurips07, pedregosa:icml16, rajeswaran:neurips19}.
Oftentimes, approximations of the (inverse) Hessian matrices are made to estimate the hypergradient.

\subsection{Learning Algorithms}
\label{sec:nesy-learning-algorithms}

Next, we present four principled techniques for learning the neural and symbolic weights of a NeSy-EBM to minimize the losses introduced in the previous subsection: 1) Modular, 2) Gradient Descent, 3) Bilevel Value-Function Optimization, and 4) Stochastic Policy Optimization.
The four techniques are defined, and we discuss their strengths and limitations in relation to the modeling paradigms in \secref{sec:nesy-ebms}. 

\subsubsection{Modular Learning}
\label{sec:modular-learning}

The first and most straightforward NeSy-EBM learning technique is to train and connect the neural and symbolic components as independent modules.
For instance, the neural component can be trained via backpropagation and Adam to optimize a neural loss given neural labels.
Then, the symbolic component can be trained using an appropriate method to optimize a value or minimizer-based loss. 
The neural component weights are frozen during the symbolic weight learning process.

By definition, modular learning algorithms are not trained end-to-end, i.e., the neural and symbolic parameters are not jointly optimized to minimize the learning loss.
For this reason, modular approaches may struggle to find a weight setting with a learning loss as low as end-to-end techniques.
Moreover, modular approaches are not suitable for fine-tuning and adaptation.
Additionally, they require labels to train the neural component.  
Thus, modular learning is not used to learn neural parameters in unsupervised or semi-supervised settings.

Nevertheless, modular learning approaches are appealing and widely used for their simplicity and general applicability.
Importantly, no assumptions are made about the neural-symbolic interface; hence, modular learning is effective for every modeling paradigm presented in \secref{sec:nesy-ebms}.
Notably, minimizers and value-functions of DSPot models are typically non-differentiable with respect to the neural weights due to the complex neural-symbolic interface.
However, because modular techniques are not end-to-end, this is not an issue.
Moreover, modular learning can be used to train a NeSy-EBM for constraint satisfaction and joint reasoning, zero-shot reasoning, and reasoning with noisy data.
There are many established and effective modular neural and symbolic learning algorithms (see \citenoun{srinivasan:mlj21} for a recent taxonomy of symbolic weight learning algorithms).

\subsubsection{Gradient Descent}
\label{sec:direct-gradient-descent}

A conceptually simple but oftentimes difficult in-practice technique for end-to-end NeSy-EBM training is direct gradient descent.
Specifically, the gradients derived in the previous subsection are directly used with a gradient-based algorithm to optimize a NeSy-EBM loss with respect to both the neural and symbolic weights.
Backpropagation and \thmref{thm:value-function-gradient} produce relatively inexpensive gradients for neural and value-based losses for a general class of NeSy-EBMs.
Moreover, for a smaller family of NeSy-EBMs, gradients of energy minimizers exist and may be cheap to compute.
For instance, if the energy minimizer is determined via a simple closed-form expression (e.g., if inference is an unconstrained strongly convex quadratic program or a finite computation graph).

As shown in \secref{sec:learning-losses}, learning loss gradients for fully expressive NeSy-EBMs only exist under certain conditions.
Further, computing the gradients generally requires expensive second-order information about the energy function at the minimizer.
For this reason, direct gradient descent only applies to a relatively small class of NeSy-EBMs with specialized architectures that ensure principled and efficient gradient computation.
Such specialized architectures are less likely to support more complex modeling paradigms such as DSPar and DSPot.

\subsubsection{Bilevel Value-Function Optimization}
\label{sec:bilevel-value-function-optimization}

As shown in the earlier subsection, minimizer gradients are relatively more computationally expensive to compute and require more assumptions than value-function gradients. 
In this subsection, we devise a technique for optimizing a minimizer-based loss with only first-order gradients.
This technique is built on the fact that the general definition of NeSy-EBM learning \eqref{eq:nesy-ebm-learning} is naturally formulated as bilevel optimization.
In other words, the NeSy learning objective is a function of variable values obtained by solving a lower-level inference problem that is symbolic reasoning:
{\small
\begin{align}
    & \argmin_{\substack{(\mathbf{w}_{sy}, \mathbf{w}_{nn}) \in \mathcal{W}_{sy} \times \mathcal{W}_{nn} \\ (\hat{\mathbf{y}}^{1}, \cdots , \hat{\mathbf{y}}^{P}) \in \mathcal{Y}_{1} \times \cdots \times \mathcal{Y}_{P}}} \frac{1}{P} \sum_{i = 1}^{P} \Bigg ( L_{NN}(\mathbf{g}_{nn}(\mathbf{x}_{nn}^{i}, \mathbf{w}_{nn}), \mathbf{t}^{i}_{nn}) + L_{Val}(E(\cdot, \cdot, \cdot, \mathbf{w}_{sy}, \mathbf{w}_{nn}), \mathcal{S}_{i}) 
    \nonumber \\ 
    & \quad \quad \quad \quad \quad \quad \quad \quad \quad \quad \quad \quad  
    + d(\hat{\mathbf{y}}^{i},\mathbf{t}^{i}_{\mathcal{Y}}) \Bigg ) + \mathcal{R}(\mathbf{w}_{sy}, \mathbf{w}_{nn}) \label{eq:bilevel-learning} \\
    & \quad \quad \quad \quad \textrm{s.t.} \quad \quad \hat{\mathbf{y}}^{i} \in \argmin_{\tilde{\mathbf{y}} \in \mathcal{Y}} E(\tilde{\mathbf{y}}, \mathbf{x}_{sy}^{i}, \mathbf{x}_{nn}^{i}, \mathbf{w}_{sy}, \mathbf{w}_{nn}), \quad \forall i \in \{1, \cdots, P\}. \nonumber
\end{align}
}%
Regardless of the continuity and curvature properties of the upper and lower level objectives, \eqref{eq:bilevel-learning} is equivalent to the following:
{\small
\begin{align}
    & \argmin_{\substack{(\mathbf{w}_{sy}, \mathbf{w}_{nn}) \in \mathcal{W}_{sy} \times \mathcal{W}_{nn} \\ (\hat{\mathbf{y}}^{1}, \cdots , \hat{\mathbf{y}}^{P}) \in \mathcal{Y}_{1} \times \cdots \times \mathcal{Y}_{P}}} \frac{1}{P} \sum_{i = 1}^{P} \Bigg ( L_{NN}(\mathbf{g}_{nn}(\mathbf{x}_{nn}^{i}, \mathbf{w}_{nn}), \mathbf{t}^{i}_{nn}) + L_{Val}(E(\cdot, \cdot, \cdot, \mathbf{w}_{sy}, \mathbf{w}_{nn}), \mathcal{S}_{i}) 
    \nonumber \\ 
    & \quad \quad \quad \quad \quad \quad \quad \quad \quad \quad \quad \quad 
    + d(\hat{\mathbf{y}}^{i},\mathbf{t}^{i}_{\mathcal{Y}}) \Bigg ) + \mathcal{R}(\mathbf{w}_{sy}, \mathbf{w}_{nn}) \label{eq:bilevel-learning-value-function-reformulation} \\
    & \quad \quad \quad \quad \textrm{s.t.} \quad \quad E(\hat{\mathbf{y}}^{i}, \mathbf{x}_{sy}^{i}, \mathbf{x}_{nn}^{i}, \mathbf{w}_{sy}, \mathbf{w}_{nn}) - V_{\mathcal{Y}}(\mathbf{w}_{sy}, \mathbf{w}_{nn}, \mathcal{S}_{i}) \leq 0, \quad \forall i \in \{1, \cdots, P\}. \nonumber
\end{align}
}%
The formulation in \eqref{eq:bilevel-learning-value-function-reformulation} is referred to as a \emph{value-function} approach in bilevel optimization literature \citep{outrata:zor90, liu:icml21, liu:neurips22, sow:arxiv22, kwon:icml23}.
Value-function approaches view the bilevel program as a single-level constrained optimization problem by leveraging the value-function as a tight lower bound on the lower-level objective.

The inequality constraints in \eqref{eq:bilevel-learning-value-function-reformulation} do not satisfy any of the standard \emph{constraint qualifications} that ensure the feasible set near the optimal point is similar to its linearized approximation \citep{nocedal:wright:book06}.
This raises a challenge for providing theoretical convergence guarantees for constrained optimization techniques.
Following a recent line of value-function approaches to bilevel programming \citep{liu:icml21, sow:arxiv22, liu:arxiv23}, we overcome this challenge by allowing at most an $\iota > 0$ violation in each constraint in \eqref{eq:bilevel-learning-value-function-reformulation}.
With this relaxation, strictly feasible points exist and, for instance, the linear independence constraint qualification (LICQ) can hold.

Another challenge that arises from \eqref{eq:bilevel-learning-value-function-reformulation} is that the energy function of NeSy-EBMs is typically non-smooth with respect to the targets and even infinite-valued to represent constraints implicitly.
As a result, penalty or augmented Lagrangian functions derived from \eqref{eq:bilevel-learning-value-function-reformulation} are intractable.
Therefore, we substitute each instance of the energy function evaluated at the training sample $\mathcal{S}_{i}$, where $i \in \{1, \cdots, P\}$, and with weights $\mathbf{w}_{sy}$ and $\mathbf{w}_{nn}$ in the constraints of \eqref{eq:bilevel-learning-value-function-reformulation} with the following function:
{
\begin{align}
    \label{eq:moreau-envelope-of-energy}
    M(\hat{\mathbf{y}}^{i}, \mathcal{S}_{i}, \mathbf{w}_{sy}, \mathbf{w}_{nn}; \rho) 
    & := \inf_{\tilde{\mathbf{y}} \in \mathcal{Y}} \left( E(\tilde{\mathbf{y}}, \mathbf{x}^{i}_{sy}, \mathbf{x}^{i}_{nn}, \mathbf{w}_{sy}, \mathbf{w}_{nn}) + \frac{1}{2 \rho} \Vert \tilde{\mathbf{y}} - \hat{\mathbf{y}}^{i} \Vert_{2}^{2} \right), \\
    & = V_{conv}(\mathbf{w}_{sy}, \mathbf{w}_{nn}, \mathcal{S}_{i}; \hat{\mathbf{y}}^{i}, \frac{1}{2 \rho}) \nonumber
\end{align}
}%
where $\rho$ is a positive scalar.
For convex $E$, \eqref{eq:moreau-envelope-of-energy} is the Moreau envelope of the energy function \citep{rockafellar:book70, parikh:ftml13}.
In general, even for non-convex energy functions, $M$ is finite for all $\mathbf{y} \in \mathcal{Y}$ and it preserves global minimizers and minimum values, i.e., 
\begin{align}
    \mathbf{y}^{*}(\mathbf{x}^{i}_{sy}, \mathbf{x}^{i}_{nn}, \mathbf{w}_{sy}, \mathbf{w}_{nn}) & = \argmin_{\hat{\mathbf{y}}^{i} \in \mathcal{Y}} M(\hat{\mathbf{y}}^{i}, \mathcal{S}_{i}, \mathbf{w}_{sy}, \mathbf{w}_{nn}; \rho), \\
    V_{\mathcal{Y}}(\mathbf{w}_{sy}, \mathbf{w}_{nn}, \mathcal{S}_{i}) & = \min_{\hat{\mathbf{y}}^{i} \in \mathcal{Y}} M(\hat{\mathbf{y}}^{i}, \mathcal{S}_{i}, \mathbf{w}_{sy}, \mathbf{w}_{nn}; \rho).
\end{align}
When the energy function is a lower semi-continuous convex function, its Moreau envelope is convex, finite, and continuously differentiable, and its gradient with respect to $\hat{\mathbf{y}}^{i}$ is: 
{
\begin{align}
    & \nabla_{\hat{\mathbf{y}}^{i}} M(\hat{\mathbf{y}}^{i}, \mathcal{S}_{i}, \mathbf{w}_{sy}, \mathbf{w}_{nn}; \rho) \\ 
    & \quad = \frac{1}{\rho} \left( \hat{\mathbf{y}}^{i} - \argmin_{\tilde{\mathbf{y}} \in \mathcal{Y}} \left ( \rho E(\tilde{\mathbf{y}}, \mathbf{x}^{i}_{sy}, \mathbf{x}^{i}_{nn}, \mathbf{w}_{sy}, \mathbf{w}_{nn}) + \frac{1}{2} \Vert \tilde{\mathbf{y}} - \hat{\mathbf{y}}^{i} \Vert_{2}^{2} \right ) \right). \nonumber
\end{align}
}%
Convexity is a sufficient but not necessary condition to ensure $M$ is differentiable with respect to $\hat{\mathbf{y}}^{i}$.
See \cite{bonnans:book00} for results regarding the sensitivity of optimal value-functions to perturbations.
Further, as $M$ is a value-function, gradients of $M$ with respect to weights are derived using \thmref{thm:value-function-gradient}.

We propose the following relaxed and smoothed value-function approach to finding an approximate solution of \eqref{eq:bilevel-learning}:
{\small
\begin{align}
    & \argmin_{\substack{(\mathbf{w}_{sy}, \mathbf{w}_{nn}) \in \mathcal{W}_{sy} \times \mathcal{W}_{nn} \\ (\hat{\mathbf{y}}^{1}, \cdots , \hat{\mathbf{y}}^{P}) \in \mathcal{Y}_{1} \times \cdots \times \mathcal{Y}_{P}}} \frac{1}{P} \sum_{i = 1}^{P} \Bigg ( L_{NN}(\mathbf{g}_{nn}(\mathbf{x}_{nn}^{i}, \mathbf{w}_{nn}), \mathbf{t}^{i}_{nn}) + L_{Val}(E(\cdot, \cdot, \cdot, \mathbf{w}_{sy}, \mathbf{w}_{nn}), \mathcal{S}_{i}) 
    \nonumber \\ 
    & \quad \quad \quad \quad \quad \quad \quad \quad \quad \quad \quad \quad 
    + d(\hat{\mathbf{y}}^{i},\mathbf{t}^{i}_{\mathcal{Y}}) \Bigg ) + \mathcal{R}(\mathbf{w}_{sy}, \mathbf{w}_{nn})
    \label{eq:relaxed-smoothed-value-function-bound-constrained-approach} \\
    & \quad \quad \quad \quad \textrm{s.t.} \quad \quad M(\hat{\mathbf{y}}^{i}, \mathcal{S}_{i}, \mathbf{w}_{sy}, \mathbf{w}_{nn}; \rho) - V_{\mathcal{Y}}(\mathbf{w}_{sy}, \mathbf{w}_{nn}, \mathcal{S}_{i}) \leq \iota, \quad \forall i \in \{1, \cdots, P\}, \nonumber
\end{align}
}%

The formulation \eqref{eq:relaxed-smoothed-value-function-bound-constrained-approach} is the core of our proposed NeSy-EBM learning framework outlined in \algoref{alg:bilevel-nesy-ebm-learning} below. 
The algorithm proceeds by approximately solving instances of \eqref{eq:relaxed-smoothed-value-function-bound-constrained-approach} in a sequence defined by a decreasing $\iota$.
This is a graduated approach to solving \eqref{eq:bilevel-learning-value-function-reformulation} with instances of \eqref{eq:relaxed-smoothed-value-function-bound-constrained-approach} that are increasingly tighter approximations.

\begin{algorithm}[H]
\caption{Bilevel Value-Function Optimization for NeSy-EBM Learning}
\label{alg:bilevel-nesy-ebm-learning}
\begin{algorithmic}[1]
    \REQUIRE{Moreau Param.: $\rho$, Starting weights: $(\mathbf{w}_{sy}, \mathbf{w}_{nn}) \in \mathcal{W}_{sy} \times \mathcal{W}_{nn}$}
    \STATE{$\hat{\mathbf{y}}^{i} \gets (\mathbf{t}^{i}_{\mathcal{Y}}, \argmin_{\hat{\mathbf{z}} \in \mathcal{Z}^{i}_{\mathcal{Y}}} E((\mathbf{t}^{i}_{\mathcal{Y}}, \hat{\mathbf{z}}), \mathbf{x}^{i}_{sy}, \mathbf{x}^{i}_{nn}, \mathbf{w}_{sy}, \mathbf{w}_{nn})), \quad \forall{i = 1, \cdots, P}$}
    \STATE{$\iota \gets \max_{i \in \{1, \cdots, P\}} M(\hat{\mathbf{y}}^{i}, \mathcal{S}_{i}, \mathbf{w}_{sy}, \mathbf{w}_{nn}; \rho) - V_{\mathcal{Y}}(\mathbf{w}_{sy}, \mathbf{w}_{nn}, \mathcal{S}_{i})$}
    \FOR{$t = 0, 1, 2, \cdots$}
        \STATE{Find $\mathbf{w}_{sy}, \mathbf{w}_{nn}, \mathbf{y}^{1}, \cdots, \mathbf{y}^{P}$ minimizing \eqref{eq:relaxed-smoothed-value-function-bound-constrained-approach} with $\iota$}
        \IF{Stopping criterion satisified}
            \STATE{Stop with: $\mathbf{w}_{sy}, \mathbf{w}_{nn}, \mathbf{y}^{1}, \cdots, \mathbf{y}^{P}$}
        \ENDIF
        \STATE{$\iota \gets \frac{1}{2} \cdot \iota$}
    \ENDFOR
\end{algorithmic}
\end{algorithm} 
We suggest starting points for each $\hat{\mathbf{y}}^{i}$ to be the corresponding latent inference minimizer and $\iota$ to be the maximum difference in the value-function and the smooth energy function. 
At this suggested starting point, the supervised loss is initially $0$, and the subproblem reduces to minimizing the learning objective without increasing the most violated constraint.
Then, the value for $\iota$ is halved every time an approximate solution to the subproblem, \eqref{eq:relaxed-smoothed-value-function-bound-constrained-approach}, is reached.
The outer loop of the NeSy-EBM learning framework may be stopped by either watching the progress of a training or validation evaluation metric or by specifying a final value for $\iota$.

Each instance of \eqref{eq:relaxed-smoothed-value-function-bound-constrained-approach} in \algoref{alg:bilevel-nesy-ebm-learning} can be optimized using only first-order gradient-based methods.
Specifically, we employ the bound-constrained augmented Lagrangian algorithm, Algorithm 17.4 from \citenoun{nocedal:wright:book06}, which finds approximate minimizers of the problem's augmented Lagrangian for a fixed setting of the penalty parameters using gradient descent.
To simplify notation, let the constraints in \eqref{eq:relaxed-smoothed-value-function-bound-constrained-approach} be denoted by:
{
\begin{align}
    c(\hat{\mathbf{y}}^{i}, \mathcal{S}_{i}, \mathbf{w}_{sy}, \mathbf{w}_{nn}; \iota) := M(\hat{\mathbf{y}}^{i}, \mathcal{S}_{i}, \mathbf{w}_{sy}, \mathbf{w}_{nn}; \rho) - V_{\mathcal{Y}}(\mathbf{w}_{sy}, \mathbf{w}_{nn}, \mathcal{S}_{i}) - \iota,
\end{align}
}%
for each constraint indexed $i \in \{1, \cdots, P\}$.
Moreover, let 
\begin{align}
    \mathbf{c}(\mathbf{y}^{1}, \cdots, \mathbf{y}^{P}, \mathcal{S}, \mathbf{w}_{sy}, \mathbf{w}_{nn}; \iota) := [ c(\hat{\mathbf{y}}^{i}, \mathcal{S}_{i}, \mathbf{w}_{sy}, \mathbf{w}_{nn}; \iota) ]_{i = 1}^{P}.
\end{align}
The augmented Lagrangian function corresponding to \eqref{eq:relaxed-smoothed-value-function-bound-constrained-approach} introduces a quadratic penalty parameter $\mu$ and $P$ linear penalty parameters $\mathbf{\lambda} := [ \lambda_{i} ]_{i = 1}^{P}$, as follows:
{
\begin{align}
    \label{eq:bilevel-value-function-augmented-lagrangian}
    & \mathcal{L}_{A}(\hat{\mathbf{y}}^{1}, \cdots, \hat{\mathbf{y}}^{P}, \mathbf{w}_{sy}, \mathbf{w}_{nn}, \mathcal{S}, \mathbf{s}; \mathbf{\lambda}, \mu, \iota) \\ 
    & :=  \frac{1}{P} \sum_{i = 1}^{P} \left ( L_{NN}(\mathbf{g}_{nn}(\mathbf{x}_{nn}^{i}, \mathbf{w}_{nn}), \mathbf{t}^{i}_{nn}) + L_{Val}(E(\cdot, \cdot, \cdot, \mathbf{w}_{sy}, \mathbf{w}_{nn}), \mathcal{S}_{i}) + d(\hat{\mathbf{y}}^{i},\mathbf{t}^{i}_{\mathcal{Y}}) \right ) \nonumber \\ 
    & \quad + \frac{\mu}{2} \sum_{i = 1}^{P} \left (c(\hat{\mathbf{y}}^{i}, \mathcal{S}_{i}, \mathbf{w}_{sy}, \mathbf{w}_{nn}; \iota) + s_{i} \right )^{2}  \nonumber \\ 
    & \quad + \sum_{i = 1}^{P} \lambda_{i} \left (c(\hat{\mathbf{y}}^{i}, \mathcal{S}_{i}, \mathbf{w}_{sy}, \mathbf{w}_{nn}; \iota ) + s_{i}\right) + \mathcal{R}(\mathbf{w}_{sy}, \mathbf{w}_{nn}), \nonumber
\end{align}
}%
where we introduced $P$ slack variables, $\mathbf{s} = \left[s_{i} \right]_{i = 1}^{P}$, for each inequality constraint.
The bound-constrained augmented Lagrangian algorithm provides a principled method for updating the penalty parameters and ensures fundamental convergence properties of our learning framework.
Notably, we have that limit points of the iterate sequence are stationary points of $\Vert \mathbf{c}(\mathbf{y}^{1}, \cdots, \mathbf{y}^{P}, \mathcal{S}, \mathbf{w}_{sy}, \mathbf{w}_{nn}; \iota) + \mathbf{s} \Vert^{2}$ when the problem has no feasible points.
When the problem is feasible, and LICQ holds at the limits, they are KKT points of \eqref{eq:relaxed-smoothed-value-function-bound-constrained-approach} (Theorem 17.2 in \cite{nocedal:wright:book06}).
Convergence rates and stronger guarantees are possible by analyzing the structure of the energy function for specific NeSy-EBMs.

The bilevel value-function optimization technique in \algoref{alg:bilevel-nesy-ebm-learning} is an end-to-end algorithm for minimizing a general NeSy-EBM learning loss with only first-order value-function gradients.
Thus, \algoref{alg:bilevel-nesy-ebm-learning} is a more practical and widely applicable technique for NeSy-EBM learning than modular and direct gradient descent methods.
The bilevel approach can be employed for a broader class of NeSy-EBMs than direct gradient descent methods and for every motivating application.
Moreover, we demonstrate that it can be used to train DSVar and DSPot NeSy-EBMs in our empirical evaluation.

\commentout{
    \par{\textbf{Contrastive Variant}.}
    
    For use in online settings or when there is a large amount of training examples.

    Loss.
    \begin{align}
        & \mathcal{L}((E(\cdot, \cdot, \cdot, \mathbf{w}_{sy}, \mathbf{w}_{nn}), \mathcal{S})) \\
        & \quad := \mathbb{E}_{S_{i} \sim P(S_{i})} \Big [ L_{NN}(\mathbf{g}_{nn}(\mathbf{x}_{nn}^{i}, \mathbf{w}_{nn}), \mathbf{t}^{i}_{nn}) + L_{Val}(E(\cdot, \cdot, \cdot, \mathbf{w}_{sy}, \mathbf{w}_{nn}), \mathcal{S}_{i}) \nonumber \\ 
        & \quad \quad + c \left (\mathbf{y}^{*}(\mathbf{x}^{i}_{sy}, \mathbf{x}^{i}_{nn}, \mathbf{w}_{sy}, \mathbf{w}_{nn}) + \beta \cdot \nabla_{1} d(\mathbf{y}^{*}(\mathbf{x}^{i}_{sy}, \mathbf{x}^{i}_{nn}, \mathbf{w}_{sy}, \mathbf{w}_{nn}), \mathbf{t}^{i}_{\mathcal{Y}}), \mathcal{S}_{i}, \mathbf{w}_{sy}, \mathbf{w}_{nn}; 0 \right ) \Big ] \nonumber 
    \end{align}
    
    Dynamics.
    \begin{align}
        \tilde{\mathbf{y}}^{*i} 
        \gets & \mathbf{y}^{*}(\mathbf{x}^{i}_{sy}, \mathbf{x}^{i}_{nn}, \mathbf{w}_{sy}, \mathbf{w}_{nn}) + \beta \cdot \nabla_{1} d(\mathbf{y}^{*}(\mathbf{x}^{i}_{sy}, \mathbf{x}^{i}_{nn}, \mathbf{w}_{sy}, \mathbf{w}_{nn}), \mathbf{t}^{i}_{\mathcal{Y}}) \\
        \mathbf{w}^{i + 1}_{sy} 
        \gets & \mathbf{w}^{i}_{sy} - \alpha^{i} \Big ( \nabla_{\mathbf{w}_{sy}} L_{Val}(E(\cdot, \cdot, \cdot, \mathbf{w}_{sy}, \mathbf{w}_{nn}), \mathcal{S}_{i}) \nonumber \\
        & \quad \quad \quad \quad + \lambda \cdot \nabla_{\mathbf{w}_{sy}} c(\tilde{\mathbf{y}}^{*i}, \mathcal{S}_{i}, \mathbf{w}_{sy}, \mathbf{w}_{nn}; 0)  + \nabla_{\mathbf{w}_{sy}} \mathcal{R}(\mathbf{w}_{sy}, \mathbf{w}_{nn}) \Big ) \\
        \mathbf{w}^{i + 1}_{nn} 
        \gets & \mathbf{w}^{i}_{nn} - \alpha^{i} \Big ( \nabla_{\mathbf{w}_{nn}} L_{NN}(\mathbf{g}_{nn}(\mathbf{x}_{nn}^{i}, \mathbf{w}_{nn}), \mathbf{t}^{i}_{nn}), \mathcal{S}_{i}) + \nabla_{\mathbf{w}_{nn}} L_{Val}(E(\cdot, \cdot, \cdot, \mathbf{w}_{sy}, \mathbf{w}_{nn}), \mathcal{S}_{i}) \nonumber \\
        & \quad \quad \quad \quad + \lambda \cdot \nabla_{\mathbf{w}_{nn}} c(\tilde{\mathbf{y}}^{*i}, \mathcal{S}_{i}, \mathbf{w}_{sy}, \mathbf{w}_{nn}; 0) + \nabla_{\mathbf{w}_{nn}} \mathcal{R}(\mathbf{w}_{sy}, \mathbf{w}_{nn}) \Big )
    \end{align}
}

\subsubsection{Stochastic Policy Optimization}
\label{sec:stochastic-policy-optimization}

\begin{figure}
    \centering
    \includegraphics[width=0.8 \textwidth]{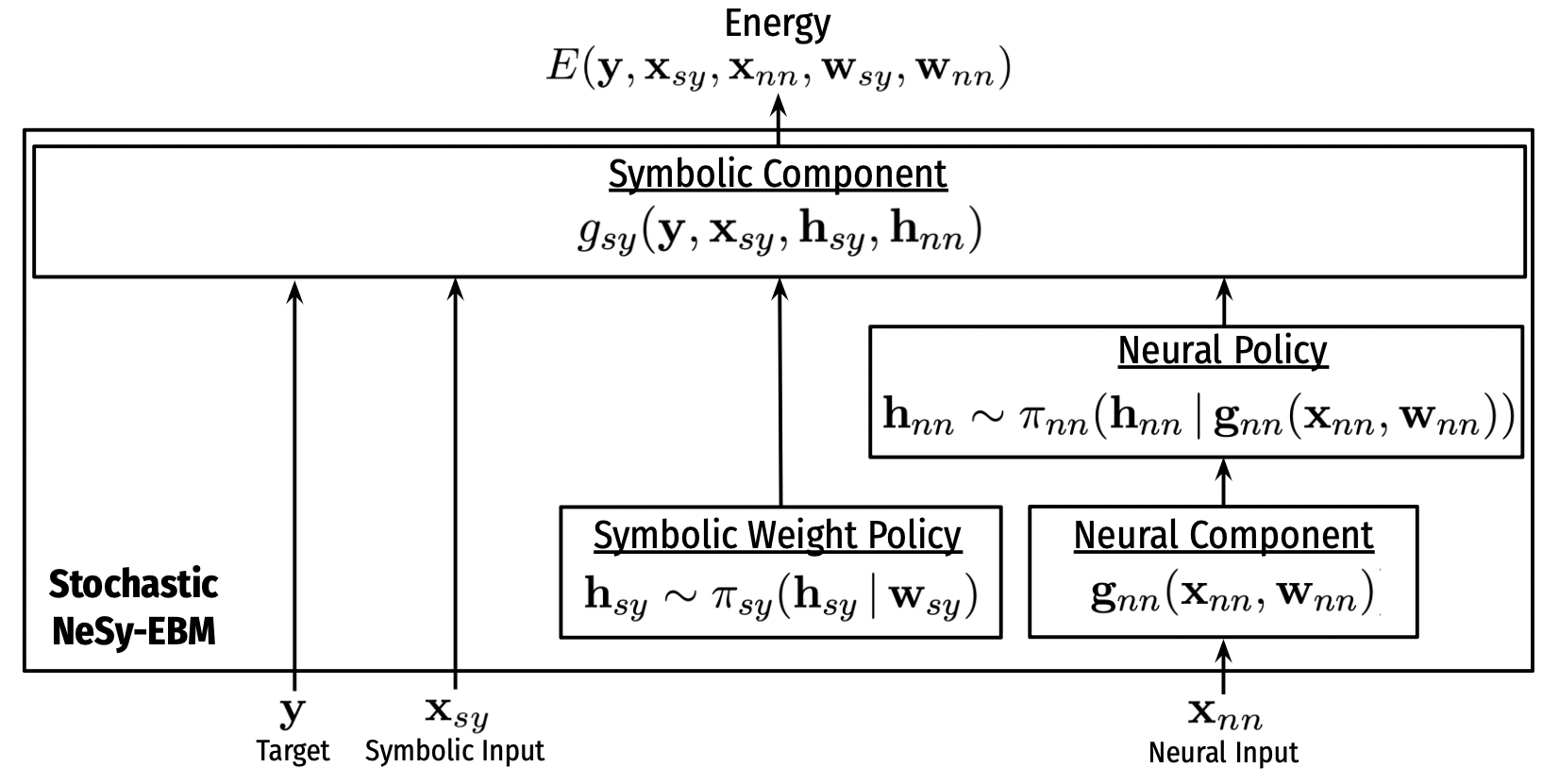}
    \caption{A stochastic NeSy-EBM. The symbolic weights and the neural component parameterize stochastic policies. A sample from the policies is drawn to produce arguments of the symbolic component.}
    \label{fig:stochastic-nesy-ebm}
\end{figure}

Finally, another approach to NeSy-EBM learning that avoids directly computing the energy minimizer's gradients with respect to the weights is to re-formulate NeSy learning as stochastic policy optimization.
\figref{fig:stochastic-nesy-ebm} shows the modifications to the standard NeSy-EBM framework to create a stochastic NeSy-EBM.
The symbolic and neural weights are used to condition a symbolic weight and neural policy, denoted by $\pi_{sy}$ and $\pi_{nn}$, respectively.
Samples from the policies replace the symbolic weights and neural output as arguments of the symbolic component.
Specifically, given symbolic and neural weights $\mathbf{w}_{sy}$ and $\mathbf{w}_{nn}$ and input features $\mathbf{x}^{i}_{nn}$ from a training sample $\mathcal{S}_{i} \in \mathcal{S}$,  $\mathbf{h}_{sy}$ and $\mathbf{h}^{i}_{nn}$ are random variables with the following conditional distributions:
\begin{align}
    \mathbf{h}_{sy} & \sim \pi_{sy} (\mathbf{h}_{sy} \, \vert \, \mathbf{w}_{sy}), \\
    \mathbf{h}^{i}_{nn} & \sim \pi_{nn} (\mathbf{h}^{i}_{nn} \, \vert \, \mathbf{g}_{nn}(\mathbf{x}^{i}_{nn}, \mathbf{w}_{nn})).
\end{align}
Moreover, the random variables $\mathbf{h}_{sy}$ and $\mathbf{h}^{i}_{nn}$ are modeled independently, thus the conditional joint distribution, denoted by $\pi$, is:
\begin{align}
    \pi(\mathbf{h}_{sy}, \mathbf{h}^{i}_{nn} \, \vert \, \mathbf{w}_{sy}, \mathbf{g}_{nn}(\mathbf{x}^{i}_{nn}, \mathbf{w}_{nn})) := \pi_{sy} (\mathbf{h}_{sy} \, \vert \, \mathbf{w}_{sy}) \cdot \pi_{nn} (\mathbf{h}^{i}_{nn} \, \vert \, \mathbf{g}_{nn}(\mathbf{x}^{i}_{nn}, \mathbf{w}_{nn}))
\end{align}
The stochastic NeSy-EBM energy is the symbolic component evaluated at a sample from the joint distribution above:
\begin{align}
    E(\mathbf{y}, \mathbf{x}^{i}_{sy}, \mathbf{x}^{i}_{nn}, \mathbf{w}_{sy}, \mathbf{w}_{nn}) := g_{sy} (\mathbf{y}, \mathbf{x}^{i}_{sy}, \mathbf{h}_{sy}, \mathbf{h}^{i}_{nn})
    \label{eq:stochastic-energy-function}
\end{align}
The NeSy-EBM energy and all of the NeSy-EBM per-sample loss functionals discussed in \secref{sec:learning-losses} are, therefore, random variables with distributions that are defined by $\pi$.
Under the stochastic policy optimization framework, loss functionals are generally denoted by the function $J^{i}$ for each $i \in \{1, \cdots, P\}$ such that: 
\begin{align}
    J^{i}(g_{sy} (\cdot, \mathbf{x}^{i}_{sy}, \mathbf{h}_{sy}, \mathbf{h}^{i}_{nn}), \mathcal{S}_{i}) := L^{i}(E(\cdot, & \cdot, \cdot, \mathbf{w}_{sy}, \mathbf{w}_{nn}), \mathcal{S}_{i})
\end{align}

Learning is minimizing the expected value of the stochastic loss functional and is formulated as:
\begin{align}
    \label{eq:stochastic-policy-learning}
    \argmin_{(\mathbf{w}_{sy}, \mathbf{w}_{nn}) \in \mathcal{W}_{sy} \times \mathcal{W}_{nn}} \frac{1}{P} \sum_{i = 1}^{P} \mathbb{E}_{\pi} \left [ J^{i} ( g_{sy} (\cdot, \mathbf{x}^{i}_{sy}, \mathbf{h}_{sy}, \mathbf{h}^{i}_{nn}), \mathcal{S}_{i} ) \right ] + \mathcal{R}(\mathbf{w}_{sy}, \mathbf{w}_{nn}), 
\end{align}
where $\mathbb{E}_{\pi}$ is the expectation over the joint distribution $\pi$.

We apply gradient-based learning algorithms to find an approximate solution to \eqref{eq:stochastic-policy-learning}. 
The policy gradient theorem \citep{williams:ml92, sutton:neurips99, sutton:book18} yields the following expression for the gradients of the expected value of a loss functional: 
\begin{align} 
    \label{eq:loss-functional-policy-gradient-nn}
    & \nabla_{\mathbf{w}_{nn}} \mathbb{E}_{\pi} \left [ J^{i} ( g_{sy} (\cdot, \mathbf{x}^{i}_{sy}, \mathbf{h}_{sy}, \mathbf{h}^{i}_{nn}), \mathcal{S}_{i} ) \right ] \\ 
    & \quad = \mathbb{E}_{\pi} \left [ J^{i} ( g_{sy} (\cdot, \mathbf{x}^{i}_{sy}, \mathbf{h}_{sy}, \mathbf{h}^{i}_{nn}), \mathcal{S}_{i} ) \cdot \nabla_{\mathbf{w}_{nn}} \log \pi(\mathbf{h}_{sy}, \mathbf{h}^{i}_{nn} \, \vert \, \mathbf{w}_{sy}, \mathbf{g}_{nn}(\mathbf{x}^{i}_{nn}, \mathbf{w}_{nn})) \right ]. \nonumber \\
    \label{eq:loss-functional-policy-gradient-sy}
    & \nabla_{\mathbf{w}_{sy}} \mathbb{E}_{\pi} \left [ J^{i} ( g_{sy} (\cdot, \mathbf{x}^{i}_{sy}, \mathbf{h}_{sy}, \mathbf{h}^{i}_{nn}), \mathcal{S}_{i} ) \right ] \\ 
    & \quad = \mathbb{E}_{\pi} \left [ J^{i} ( g_{sy} (\cdot, \mathbf{x}^{i}_{sy}, \mathbf{h}_{sy}, \mathbf{h}^{i}_{nn}), \mathcal{S}_{i} ) \cdot \nabla_{\mathbf{w}_{sy}} \log \pi(\mathbf{h}_{sy}, \mathbf{h}^{i}_{nn} \, \vert \, \mathbf{w}_{sy}, \mathbf{g}_{nn}(\mathbf{x}^{i}_{nn}, \mathbf{w}_{nn})) \right ]. \nonumber
\end{align}
The expression for the gradient of the expected loss functional above motivates a family of gradient estimators.
Notably, the REINFORCE gradient estimator for NeSy-EBM learning is:
{\small
\begin{align}
    \label{eq:loss-functional-reinforce-estimator-nn}
    & \nabla_{\mathbf{w}_{nn}} \mathbb{E}_{\pi} \left [ J^{i} ( g_{sy} (\cdot, \mathbf{x}^{i}_{sy}, \mathbf{h}_{sy}, \mathbf{h}^{i}_{nn}), \mathcal{S}_{i} ) \right ] \\ 
    & \quad \approx \frac{1}{N} \sum_{k = 1}^{N} \left ( J^{i} ( g_{sy} (\cdot, \mathbf{x}^{i}_{sy}, \mathbf{h}^{(k)}_{sy}, \mathbf{h}^{i (k)}_{nn}), \mathcal{S}_{i} ) \nabla_{\mathbf{w}_{nn}} \log \pi(\mathbf{h}^{(k)}_{sy}, \mathbf{h}^{i (k)}_{nn} \, \vert \, \mathbf{w}_{sy}, \mathbf{g}_{nn}(\mathbf{x}^{i}_{nn}, \mathbf{w}_{nn})) \right ), \nonumber \\
    \label{eq:loss-functional-reinforce-estimator-sy}
    & \nabla_{\mathbf{w}_{sy}} \mathbb{E}_{\pi} \left [ J^{i} ( g_{sy} (\cdot, \mathbf{x}^{i}_{sy}, \mathbf{h}_{sy}, \mathbf{h}^{i}_{nn}), \mathcal{S}_{i} ) \right ] \\ 
    & \quad \approx \frac{1}{N} \sum_{k = 1}^{N} \left ( J^{i} ( g_{sy} (\cdot, \mathbf{x}^{i}_{sy}, \mathbf{h}^{(k)}_{sy}, \mathbf{h}^{i (k)}_{nn}), \mathcal{S}_{i} ) \nabla_{\mathbf{w}_{sy}} \log \pi(\mathbf{h}^{(k)}_{sy}, \mathbf{h}^{i (k)}_{nn} \, \vert \, \mathbf{w}_{sy}, \mathbf{g}_{nn}(\mathbf{x}^{i}_{nn}, \mathbf{w}_{nn})) \right ), \nonumber
\end{align}
}%
where each $\mathbf{h}^{(k)}_{sy}$ and $\mathbf{h}^{i (k)}_{nn}$ for $k \in \{1, \cdots, N\}$ is an independent sample of the random variables.

Stochastic policy optimization techniques are broadly applicable for end-to-end training of NeSy-EBMs because they do require differentiation through the neural-symbolic interface and the symbolic inference process.
Moreover, they can be used for every modeling paradigm.
The tradeoff with the stochastic policy approach, however, is the high variance in the sample estimates for the policy gradient.
This is a common challenge in policy optimization that becomes more prominent with increasing dimensionality of the policy output space \citep{sutton:book18}.
Thus, learning with a stochastic policy optimization approach may take significantly more iterations to converge compared to the other presented techniques.

\section{Empirical Analysis}
\label{sec:experiments}

In this section, we perform an empirical analysis of the NeSy-EBM modeling paradigms and learning algorithms presented in this work using the NeuPSL system introduced in \secref{sec:neupsl-and-deep-hlmrfs}.
Our experiments are designed to investigate the following four research questions:

\begin{itemize}
    \item RQ1: Can the NeSy-EBM framework enhance the accuracy and reasoning capabilities of deep learning models?
    \item RQ2: Can the value-function gradients provided in \thmref{thm:value-function-gradient} be used as a reliable descent direction for value-based learning losses?
    \item RQ3: Can symbolic constraints be used to train a deep learning model with partially labeled data?
    \item RQ4: What are the prediction performance and runtime tradeoffs among the presented modular, value-based, and minimizer-based learning approaches?
\end{itemize}

Our empirical analysis is organized into four subsections.
First, in \secref{sec:datasets-models}, we introduce the neural-symbolic datasets and models used in the experiments.
In \secref{sec:experiments-joint-reasoning}, we study the application of NeSy-EBMs for constraint satisfaction and joint reasoning.
In \secref{sec:experiments-learning}, we evaluate the performance of modular learning and the performance and empirical convergence properties of the value-based, bilevel, and stochastic policy optimization learning algorithms presented in \secref{sec:nesy-learning-algorithms} for fine-tuning and few-shot learning.
Finally, in \secref{sec:experiments-semi-supervision}, we analyze the effectiveness of the NeSy-EBM framework for training a neural component in a semi-supervised setting.
All code and data for reproducing our empirical analysis are available at \url{https://github.com/linqs/dickens-arxiv24}.

\subsection{Datasets and Models}
\label{sec:datasets-models}

This subsection introduces the NeSy datasets and models, which will be utilized throughout the empirical analysis.
Moreover, any modifications made to answer specific research questions will be described in the following subsections.
Additional details on the architectures of both the neural and symbolic components are available at \url{https://github.com/linqs/dickens-arxiv24}.

\begin{itemize}
    \item \textbf{MNIST-Add-k Dataset:} MNIST-Add-$k$ is a canonical NeSy dataset introduced by \citenoun{manhaeve:ai21} where models must determine the sum of each pair of digits from two lists of MNIST images.
    An MNIST-Add$k$ equation consists of two lists of $k > 0$ MNIST images.
    For instance, $\big[\inlinegraphics{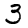}\big] + \big[\inlinegraphics{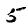}\big] = \mathbf{8}$
    is an MNIST-Add$1$ equation, and $\big[\inlinegraphics{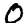}, \inlinegraphics{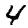}\big] + \big[\inlinegraphics{sections/experiments/figures/MNIST-3.png}, \inlinegraphics{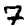}\big] = \mathbf{41}$
    is an MNIST-Add$2$ equation.

    \textbf{Evaluation:} For all experiments, we evaluate models over $5$ splits of the low-data setting proposed by \citenoun{manhaeve:ai21} with $600$ total images for training and $1,000$ images each for validation and test.
    Prediction performance in this setting is measured by the accuracy of the image classifications and the inferred sums.
    Constraint satisfaction consistency in this setting is the proportion of predictions that satisfy the semantics of addition.

    \textbf{Baseline Architecture:} The baseline neural architecture for all MNIST-Add$k$ datasets is a ResNet18 convolutional neural network backbone \citep{he:cvpr16} with a 2-layer multi-layer perceptron (MLP) prediction head.
    The baseline is trained and applied as a digit classifier.
    Further, to allow the baseline to leverage the unlabeled training data in the semi-supervised settings, the digit classifier backbone is pre-trained using the SimCLR self-supervised learning framework \citep{chen:icml20}.
    Augmentations are used to obtain positive pairs for the contrastive pre-training process.

    \textbf{NeSy-EBM Architecture:} The NeSy-EBM architecture is a composition of the baseline digit classifier and a symbolic component created with NeuPSL that encodes the semantics of addition.
    The target variables of the symbolic component are the labels of the MNIST digits and their sum.
    The neural classification is used as a prior for the digit labels.

    \item \textbf{Visual-Sudoku Dataset:} Visual-Sudoku, first introduced by \citenoun{wang:icml19}, is a dataset containing a collection of $9 \times 9$ Sudoku puzzles constructed from MNIST images. 
    In each puzzle, $30$ cells are filled with MNIST images and are referred to as \emph{clues}.
    The remaining cells are empty.
    The task is to correctly classify all clues and fill in the empty cells with digits that satisfy the rules of Sudoku: no repeated digits in any row, column, or box.
    
    \textbf{Evaluation:} For all experiments, results are reported across $5$ splits with $20$ puzzles for training and $100$ puzzles each for validation and test.
    There is an equal number of MNIST images ($600$) in the training datasets for Visual-Sudoku and MNIST-Add-k.
    Prediction performance in this setting is measured by the accuracy of the image classifications.
    Constraint satisfaction consistency in this setting is the proportion of predictions that satisfy the rules of Sudoku.

    \textbf{Baseline Architecture:} The baseline neural architecture for Visual-Sudoku is the same as that of the MNIST-Add$k$.
    
    \textbf{NeSy-EBM Architecture:} The NeSy-EBM architecture is a composition of the baseline digit classifier and a symbolic component created with NeuPSL that encodes the rules of Sudoku.
    The target variables of the symbolic component are the labels of the clues and the empty cells.
    The neural classification is used as a prior for the clues.

    \item \textbf{Pathfinding Dataset:} Pathfinding is a NeSy dataset introduced by \citenoun{vlastelica:iclr20} consisting of $12000$ randomly generated images of terrain maps from the Warcraft II tileset.
    The images are partitioned into $12 \times 12$ grids where each vertex represents a terrain with a cost.
    The task is to find the lowest cost path from the top left to the bottom right corner of each image.
    
    \textbf{Evaluation:} For all experiments, results are reported over $5$ splits generated by partitioning the images into sets of $10,000$ for training, $1,000$ for validation, and $1,000$ for testing.
    Prediction performance in this setting is measured by the proportion of valid predicted paths, i.e., continuous, and that have a minimum cost.
    Constraint satisfaction continuity in this setting is measured by the proportion of predictions with a continuous predicted path.

    \textbf{Baseline Architecture:} The baseline neural architecture for the Pathfinding dataset is a ResNet18 convolutional neural network.
    The input of the ResNet18 path-finder baseline is the full Warcraft II map, and the output is the predicted shortest path.
    The model is trained using the labeled paths from the training data set.

    \textbf{NeSy-EBM Architecture:} The NeSy-EBM architecture is a composition of the baseline path-finder and a symbolic component created with NeuPSL that encodes end-points and continuity constraints, i.e., the path from the top left corner of the map to the bottom right corner must be continuous.
    The target variables of the symbolic component are variables indicating whether a vertex of the map grid is on the path.
    The neural classification is used as a prior for the path, and the symbolic component finds a valid path near the neural prediction.
    
    \item \textbf{Citeseer and Cora Dataset:} Citeseer and Cora are two widely studied citation network node classification datasets first introduced by \citenoun{sen:aim08}.
    Citeseer consists of $3,327$ scientific publications classified into one of $6$ topics, while Cora contains $2,708$ scientific publications classified into one of $7$ topics.
    
    \textbf{Evaluation:} For all experiments, we evaluate models over $5$ randomly sampled splits using $20$ examples of each topic for training, $200$ of the nodes for validation, and $1000$ nodes for testing.
    Prediction performance in this setting is measured by the categorical accuracy of a paper label.

    \textbf{Baseline Architecture:} The baseline neural architecture for the Citation network settings is a Simple Graph Convolutional Network (SGC) \citep{wu:icml19}.
    SGCs are graph convolutional networks with linear activations in the hidden layers to reduce computational complexity.
    The SGC neural baseline uses bag-of-words feature vectors associated with each paper as node features and citations as bi-directional edges.
    Then, a MLP is trained to predict the topic label given the SGC-transformed features.

    \textbf{NeSy-EBM Architecture:} The NeSy-EBM architecture is a composition of the baseline SGC and a symbolic component created with NeuPSL that encodes the homophilic structure of the citation network, i.e., two papers connected in the network are more likely to have the same label. 
    Target variables indicate the degree to which a paper has a particular topic.
    The neural classification is used as a prior for the labels of the nodes, and the symbolic component propagates this knowledge to its neighbors.

    \item \textbf{RoadR Dataset:} RoadR is an extension of the ROAD (Road event Awareness Dataset) dataset, initially introduced by \citenoun{singh:tpa2021}.
    The ROAD dataset was developed to evaluate the situational awareness of autonomous vehicles in various road environments, weather conditions, and times of day.
    It contains $22$ videos, $122k$ labeled frames, $560k$ bounding boxes, and a total of $1.7M$ labels, which include $560k$ agents, $640k$ actions, and $499k$ locations.
    RoadR builds upon this by adding $243$ logical requirements that must be satisfied, further enhancing its utility for testing autonomous vehicles.
    For instance, a traffic light should never be simultaneously predicted as red and green.
    
    \textbf{Evaluation:} For all experiments, we evaluate models with $15$ videos for training and $3$ videos for testing.
    Prediction performance in this setting is measured by the matching boxes using Intersection over Union (IoU) and then multi-class f1.
    Constraint satisfaction consistency in this setting is the proportion of frame predictions with no constraint violations.

    \textbf{Baseline Architecture:} The baseline neural architecture for the RoadR dataset is a DEtection TRansformer (DETR) model with a ResNet50 backbone \citep{carion:eccv20}.
    The baseline is trained and applied to detect objects in a frame, along with a multi-label classification for its class labels (e.g., car, red, traffic light, etc.).

    \textbf{NeSy-EBM Architecture:} The NeSy-EBM architecture is a composition of the baseline object detector and classifier and a symbolic component created with NeuPSL that encodes the logical requirements.
    The target variables are the classification labels of a bounding box.
    The neural classification is used as both the bounding box creation and a prior on the labels that the symbolic component uses as a starting point to find a valid solution to the constraints.

    \item \textbf{Logical-Deduction} is a multiple-choice question-answering dataset introduced by \cite{srivastava:arxiv22}.
    These questions require deducing the order of a sequence of objects given a natural language description and then answering a multiple-choice question about that ordering.
    
    \textbf{Evaluation:} We report results for a single test set of $300$ deduction problems, with a prompt containing two examples.
    Prediction performance in this setting is measured by the accuracy of the predicted multiple-choice answer.

    \textbf{Baseline Architecture:} The baseline neural architecture for the Logical-Deduction dataset is the models presented in \citenoun{pan:emnlp23} on GPT-3.5-turbo and GPT-4 \cite{openai:techreport24}.
    Each model is run using \textit{Standard} and \textit{Chain-of-Thought (CoT)} \citep{wei:neurips22} prompting.

    \textbf{NeSy-EBM Architecture:} The NeSy-EBM architecture is a composition of the baseline LLM that is being prompted to create the constraints within the symbolic program.
    Symbolic inference is then performed, and the output is returned to the LLM for final evaluation.
    In this sense, the NeSy-EBM writes a program to perform reasoning rather than depending on the language model to reason independently.
\end{itemize}

\subsection{Constraint Satisfaction and Joint Reasoning}
\label{sec:experiments-joint-reasoning}

We begin our experimental evaluation by exploring the advantages of employing NeSy-EBMs for performing constraint satisfaction and joint reasoning, which is relevant to answering research question RQ1.
We employ a modular training approach to set up these experiments to obtain weights for our models' neural and symbolic components.
Specifically, neural components undergo training using the complete training dataset for supervision, and symbolic weights are trained using a simple random grid search.
After this modular training phase, NeSy-EBM inference is carried out to predict binary, $0, 1$, valued target variables that align with established domain knowledge and logical reasoning.
For this reason, \textit{DSPar} NeSy-EBMs are used for MNIST-Add-k, Visual-Sudoku, Pathfinding, RoadR, Citeseer, and Cora, and a \textit{DSPot} is used for Logical Deduction.

To investigate constraint satisfaction and joint reasoning, we use the dataset settings outlined in \secref{sec:datasets-models} for Visual-Sudoku, Pathfinding, RoadR, Citeseer, Cora, and Logic Deduction.
Additionally, we introduce the following variant of the MNIST-Add-$k$ dataset.
\begin{itemize}
    \item \textbf{MNIST-Add}$k$: The $k=1,2,4$ MNIST-Add$k$ datasets with the sums of the MNIST-Add-$k$ equations available as observations during inference.
    Prediction performance is measured by the accuracy of the image classifications.
\end{itemize}
The MNIST-Add-$k$ modification allows the NeSy-EBM to use the semantics of addition and the sum observation to form constraints to correct the neural component predictions.
For instance, consider the MNIST-Add-$1$ equation
$\big[\inlinegraphics{sections/experiments/figures/MNIST-3.png}\big] + \big[\inlinegraphics{sections/experiments/figures/MNIST-5.png}\big] = \mathbf{8}$.
If the neural component incorrectly classifies the first MNIST image, $\inlinegraphics{sections/experiments/figures/MNIST-3.png}$, as an $8$ with low confidence but correctly classifies the second MNIST image, $\inlinegraphics{sections/experiments/figures/MNIST-5.png}$, as a $5$ with high confidence, then it can use the sum label, $8$, to correct the first digit label.

\begin{table}[ht]
    \centering
    \caption{
        Digit accuracy and constraint satisfaction consistency of the ResNet18 and NeuPSL models on the MNIST-Add-$k$ and Visual-Sudoku datasets.
    }
    \label{tab:joint-reasoning-mnist-datasets}
    \begin{tabular}{l||cc|cc}
         \toprule 
         & \multicolumn{2}{c}{\textbf{ResNet18}} & \multicolumn{2}{c}{\textbf{NeuPSL}} \\
         & \textbf{Digit Acc.} & \textbf{Consistency} & \textbf{Digit Acc.} & \textbf{Consistency} \\
         \midrule
         \midrule
         \emph{MNIST-Add1} & \multirow{4}{*}{$97.60 \pm 0.55$} & $93.04 \pm 1.33$ & $\mathbf{99.80 \pm 0.14}$ & $\mathbf{100.0 \pm 0.00}$ \\
         \emph{MNIST-Add2} & & $86.56 \pm 2.72$ & $\mathbf{99.68\pm 0.22}$ & $\mathbf{100.0 \pm 0.00}$ \\
         \emph{MNIST-Add4} & & $75.04 \pm 4.81$ & $\mathbf{99.72 \pm 0.29}$ & $\mathbf{100.0 \pm 0.00}$ \\
         \emph{Visual-Sudoku} & & $70.20 \pm 2.17$ & $\mathbf{99.37 \pm 0.11}$ & $\mathbf{100.0 \pm 0.00}$ \\
         \bottomrule
    \end{tabular}
\end{table}

\begin{table}[ht]
    \centering
    \caption{Accuracy of finding a minimum cost path (Min. Cost Acc.) and consistency in satisfying continuity constraints (Continuity) of the ResNet18 and NeuPSL models on the Pathfinding dataset.}
    \label{tab:joint-reasoning-pathfinding}
    \scalebox{0.9}{
    \begin{tabular}{l||cc|cc}
         \toprule 
         & \multicolumn{2}{c}{\textbf{ResNet18}} & \multicolumn{2}{c}{\textbf{NeuPSL}} \\
         & \textbf{Min. Cost Acc.} & \textbf{Continuity} & \textbf{Min. Cost Acc.} & \textbf{Continuity} \\
         \midrule
         \midrule
         \emph{Pathfinding} & $ 80.12 \pm 22.44 $ & $ 84.80 \pm 17.11 $ & $ \mathbf{90.02 \pm 11.70} $ & $\mathbf{100.0 \pm 0.00}$ \\
         \bottomrule
    \end{tabular}
    }
\end{table}

\begin{table}[ht]
    \centering
    \caption{Object detection F1 and constraint satisfaction consistency of the DETR and NeuPSL models on the RoadR dataset.}
    \label{tab:joint-reasoning-roadr}
    \begin{tabular}{l||cc|cc}
        \toprule 
         & \multicolumn{2}{c}{\textbf{DETR}} & \multicolumn{2}{c}{\textbf{NeuPSL}} \\
         & \textbf{F1} & \textbf{Consistency} & \textbf{F1} & \textbf{Consistency} \\
         \midrule
         \midrule
         \emph{RoadR} & $0.457$ & $27.5$ & $\textbf{0.461}$ & $\textbf{100.0}$ \\
         \bottomrule
    \end{tabular}
\end{table}

\cref{tab:joint-reasoning-mnist-datasets,tab:joint-reasoning-pathfinding,tab:joint-reasoning-roadr} report the prediction performance and constraint satisfaction consistency of a neural baseline and NeuPSL model on the MNIST-Add$k$, Visual-Sudoku, Pathfinding, and RoadR datasets, respectively.
Across all settings, the baseline neural models frequently violate constraints within the test dataset.
Further, the frequency of these violations increases with the complexity of the constraints.
This behavior is best illustrated in the MNIST-Add$k$ datasets, where consistency decreases as the number of digits, $k$, increases.
This decline can be attributed to the baseline ResNet18 model treating each digit prediction independently and thus failing to account for the dependencies from the sum relation.
Moreover, in the RoadR experiment, the DETR baseline adheres to road event constraints only $27.5\%$ of the time.
On the other hand, NeuPSL always satisfies the problem constraints in the MNIST-Add$k$, Visual-Sudoku, Pathfinding, and RoadR datasets, achieving $100 \%$ consistency.
This is because the DSPar NeSy-EBM models used in these experiments can enforce constraints on all target variables.
This allows the NeSy-EBM models to leverage the structural relations inherent in the constraints to infer target variables and jointly improve prediction accuracy.
Prediction performance gains from constraint satisfaction and joint reasoning are possible when the neural component accurately quantifies its confidence.
The symbolic component uses the confidence of the neural component and the constraints together to correct the neural model's erroneous predictions.
This observation motivates an exciting avenue of future research: exploring whether calibrating the confidence of the neural component can further improve the structured prediction and joint reasoning capabilities of NeSy-EBMs.

\begin{table}[ht]
    \centering
    \caption{Node classification accuracy of the SGC and NeuPSL models on the Citeseer and Cora datasets.}
    \label{tab:joint-reasoning-node-classification}
    \begin{tabular}{l||c|c}
        \toprule 
         & \textbf{SGC} & \textbf{NeuPSL} \\
         \midrule
         \midrule
         \emph{Citeseer} & $65.14 \pm 2.96$ & $\mathbf{66.52 \pm 3.26}$ \\
         \emph{Cora} & $80.90 \pm 1.54$ & $\mathbf{81.82 \pm 1.73}$ \\
         \bottomrule
    \end{tabular}
\end{table}

\begin{table}[ht]
    \centering
    \caption{Comparison of accuracy in answering logical deduction questions using two large language models, GPT-3.5-turbo and GPT-4 \cite{openai:techreport24}, across three methods: Standard, Chain of Thought (CoT), and NeuPSL.}
    \label{tab:constraint-satisfaction-logical-deduction}
    \begin{tabular}{cc||cc|c}
        \toprule
        & \textbf{LLM} & \textbf{Standard} & \textbf{CoT} & \textbf{NeuPSL} \\
        \midrule
        \midrule
        \multirow{2}{*}{\textit{Logical Deduction}} & \emph{GPT-3.5-turbo} & 40.00 & 42.33 & \textbf{70.33} \\
        & \emph{GPT-4} & 71.33 & 75.25 & \textbf{90.67} \\
        \bottomrule
    \end{tabular}
\end{table}

Unlike MNIST-Add$k$, Visual-Sudoku, Pathfinding, and RoadR, which have hard constraints on the target variables, the citation network datasets showcase the capacity of NeSy-EBMs to perform joint reasoning with constraints and dependencies that are not strictly adhered to.
For Citeseer and Cora, NeuPSL enhances prediction accuracy by leveraging the homophilic structure of the citation networks, i.e., papers that are linked tend to share topic labels.
Similarly, in the question-answering logical deduction problem, NeuPSL uses an LLM to generate rules representing the dependencies described in natural language.
Although the LLM may sometimes fail to generate accurate rules, NeuPSL will consistently use the rules for logical reasoning.

\cref{tab:joint-reasoning-node-classification,tab:constraint-satisfaction-logical-deduction} report the baseline and NeuPSL NeSy-EBM prediction performance on the citation network node classification and logical deduction datasets, respectively.
In all instances, NeuPSL outperforms the baseline.
The performance gain from NeuPSL in the citation network experiments is verified to be statistically significant with a paired t-test and p-value less than $0.05$.
Further, in the Logical Deduction setting, NeuPSL obtains a $15\%$ improvement over the LLM.
This performance gain is achieved despite the fact that the LLM neural component in NeuPSL could produce invalid syntax or an infeasible set of logical constraints.
The LLM was able to produce valid programs $89.0\%$ and $98.7\%$ of the time with gpt-3.5-turbo and gpt-4, respectively.
This observation motivates a promising avenue of future research in employing self-refinement approaches similar to that of \cite{pan:emnlp23} to attempt to correct the infeasible programs and further improve LLM reasoning capabilities.

The results in this section are evidence for the affirmative answer to the research question RQ1.
Across diverse datasets—MNIST-Add$k$, Visual-Sudoku, Pathfinding, and RoadR—the baseline neural models frequently violate constraints, with violations increasing in complexity.
For instance, the baseline model's performance on MNIST-Add$k$ deteriorates as the number of digits increases, and in RoadR, it adheres to constraints only 27.5\% of the time.
In stark contrast, the NeSy-EBM framework, incorporating symbolic components to enforce constraints, consistently achieves 100\% constraint satisfaction and improves prediction accuracy.
Additionally, in citation network and logical deduction scenarios, NeSy-EBM facilitates joint reasoning, leveraging the inherent structure and logical coherence, leading to significant performance gains.

\subsection{NeSy-EBM Learning}
\label{sec:experiments-learning}

Next, we investigate NeSy-EBM learning, focusing on answering research questions RQ2, RQ3, and RQ4.
Our analysis is divided into two parts.
First, we study the performance of modular learning NeSy-EBM methods in \secref{sec:experiments-modular-learning}.
Second, we examine the performance and empirical convergence properties of end-to-end gradient-based NeSy-EBM learning algorithms in \secref{sec:experiments-end-to-end-learning}.
All models within this subsection use the \textit{DSPar} modeling paradigm.

\subsubsection{Modular NeSy-EBM Learning}
\label{sec:experiments-modular-learning}

In our modular learning experiments, the neural components are first trained using supervised neural losses and are then frozen.
The symbolic component is trained using either a minimizer-based or value-based loss.
Specifically, we compare the prediction performance of two value-based losses, Energy and Structure Perceptron (SP), and two minimizer-based losses, Mean Square Error (MSE), and Binary Cross Entropy (BCE).
The modular learning experiments are conducted on the seven datasets listed in \tabref{tab:modular-learning-datasets} below.
The table overviews each dataset's inference task and the corresponding prediction performance metric.
Additional details on these datasets are provided in \appref{appendix:modular-datasets}.

\begin{table}[ht]
    \centering
    \caption{Datasets used for modular experimental evaluations.}
    \begin{tabular}{l||c|c}
        \toprule
        \textbf{Dataset} & \textbf{Task} & \textbf{Perf. Metric} \\
        \midrule
        \midrule
        Debate~\citep{hasan:ijcnlp13} & Stance Class. & AUROC \\
        4Forums~\citep{walker:lrec12} & Stance Class. & AUROC \\
        Epinions~\citep{richardson:iswc03} & Link Pred. & AUROC \\
        DDI~\citep{wishart:nar06} & Link Pred. & AUROC \\
        Yelp~\citep{kouki:recsys15} & Regression & MAE \\
        Citeseer~\citep{sen:aim08} & Node Class. & Accuracy \\
        Cora~\citep{sen:aim08} & Node Class. & Accuracy \\
        \bottomrule
    \end{tabular}
    \label{tab:modular-learning-datasets}
\end{table}

\begin{table}[ht]
    \centering
    \caption{Prediction performance of HL-MRF models trained on value and minimizer-based losses.}
    \label{tab:modular-learning-performance}
    \begin{tabular}{l||c|c||c|c}
        \toprule  
        & \multicolumn{2}{c}{\textbf{Value-Based}} & \multicolumn{2}{c}{\textbf{Bilevel}} \\
        & \textbf{Energy} & \textbf{SP} & \textbf{MSE} & \textbf{BCE} \\
        \midrule
        \midrule
        \textbf{Debate} & $64.76 \pm 9.54$ & $64.68 \pm 11.05$ & $\mathbf{65.33 \pm 11.98}$ & $64.83 \pm 9.70$ \\
        \textbf{4Forums} & $62.96 \pm 6.11$ & $63.15 \pm 6.40$ & $64.22 \pm 6.41$ & $\mathbf{64.85 \pm 6.01}$ \\
        \hline
        \textbf{Epinions} & $78.96 \pm 2.29$ & $79.85 \pm 1.62$ & $\mathbf{81.18 \pm 2.21}$ & $80.89 \pm 2.32$ \\
        \textbf{Citeseer} & $70.29 \pm 1.54$ & $70.92 \pm 1.33$ & $71.22 \pm 1.56$ & $\mathbf{71.94 \pm 1.17}$ \\
        \hline
        \textbf{Cora} & $54.30 \pm 1.74$ & $74.16 \pm 2.32$ & $81.05 \pm 1.41$ & $\mathbf{81.07 \pm 1.31}$ \\
        \textbf{DDI} & $94.54 \pm 0.00$ & $94.61 \pm 0.00$ & $94.70 \pm 0.00$ & $\mathbf{95.08 \pm 0.00}$ \\
        \hline
        \textbf{Yelp} & $18.11 \pm 0.34$ & $18.57 \pm 0.66$ & $18.14 \pm 0.36$ & $\mathbf{17.93 \pm 0.50}$\\
        \bottomrule
    \end{tabular}
\end{table}

\tabref{tab:modular-learning-performance} reports the prediction performance achieved by each of the four learning techniques across the seven modular datasets.
Models trained with bilevel-based losses consistently achieve better average predictive performance than those trained with value-based losses.
Notably, on the Cora dataset, the NeuPSL model trained with the BCE loss achieved a remarkable improvement of over six percentage points compared to the SP loss, which was the better-performing value-based loss.
The models trained with the Energy and SP loss suffered from a collapsed solution, i.e., symbolic parameters giving nearly equal energy to all settings of the target variables.

\subsubsection{End-to-End NeSy-EBM Learning}
\label{sec:experiments-end-to-end-learning}
This subsection analyzes the performance and empirical convergence properties of the three following end-to-end gradient-based NeSy-EBM learning algorithms.

\begin{itemize}
    \item \textbf{Energy}: Gradient descent on the value-based energy loss.

    \item \textbf{Bilevel}: The bilevel value-function optimization for NeSy-EBM learning algorithm.
    For all datasets, binary cross-entropy is the minimizer-based loss, and the energy loss is the value-based loss.

    \item \textbf{IndeCateR}: The stochastic policy optimization algorithmic framework with the Independent Categorical REINFORCE gradient estimator~\citep{desmet:neurips23}.
    The evaluation metric of the dataset is directly applied as the learning loss.
\end{itemize}

\thmref{thm:value-function-gradient} in \secref{sec:learning-losses} is used to compute the learning gradients with respect to the neural output and symbolic weights for the Energy and Bilevel algorithms.
Similarly, the IndeCateR estimate is used to compute the learning gradients with respect to the neural output and symbolic weights for stochastic policy optimization.
Then, gradients with respect to the neural parameters are found via backpropagation for all methods.
The neural parameters are updated via AdamW \citep{loshchilov:iclr19}, and the symbolic parameters are updated using gradient descent with a fixed step size.
Additional details on the hardware and hyperparameters settings of the learning algorithms are provided in \appref{appendix:experiments-hyperparameters}.

To investigate the performance of our NeSy-EBM learning algorithms, we use the dataset settings outlined in \secref{sec:datasets-models} for Citeseer and Cora.
Additionally, we introduce the following variant of the MNIST-Add$k$, Visual-Sudoku, and Pathfinding datasets:
\begin{itemize}
    \item \textbf{MNIST-Add}$k$: The $k=1,2$ MNIST-Add$k$ datasets with no digit supervision, i.e., parameters are learned only from the addition relations.

    \item \textbf{Visual-Sudoku}: A few-shot setting with $5$ labeled examples of each of the $9$ possible classes available for training.
    The remaining images in the training data are unlabeled, and the model must primarily rely on the Sudoku rules for learning.

    \item \textbf{Pathfinding}: A limited supervision setting where only $10\%$ of the training data is labeled, and the remaining training data is unlabeled.
    Specifically, only $5\%$ of the map vertices are observed to be on or off the labeled minimum cost path.
    In other words, supervision is distributed across maps, and the minimum cost paths for a map are only partially observed.
\end{itemize}

\begin{table}[ht]
    \centering
    \caption{
        The average and standard deviation of the prediction performance of NeuPSL NeSy-EBMs trained using gradient-based learning algorithms on $7$ datasets.
    }
    \label{tab:nesy-ebm-learning-performance}
    \begin{tabular}{l||ccc}
         \toprule 
         & \multicolumn{3}{c}{\textbf{NeuPSL}} \\
         & \textbf{Energy} & \textbf{IndeCateR} & \textbf{Bilevel} \\
         \midrule
         \midrule
         \emph{MNIST-Add1} & $93.80 \pm 1.12$ & $94.52 \pm 0.99$ & $\mathbf{94.92 \pm 1.40}$ \\
         \emph{MNIST-Add2} & $87.92 \pm 1.63$ & $86.88 \pm 1.82$ & $\mathbf{89.36 \pm 1.54}$ \\
         \emph{Visual-Sudoku} & $\mathbf{98.12 \pm 0.37}$ & TIMEOUT & $98.10 \pm 0.19$ \\
         \emph{Path-Finding} & $22.53 \pm 0.75$ & TIMEOUT & $\mathbf{22.85 \pm 1.33}$ \\
         \emph{Citeseer} & $67.04 \pm 1.82 $ & TIMEOUT & $\mathbf{67.96 \pm 1.11}$ \\
         \emph{Cora} & $80.40 \pm 0.74$ & TIMEOUT & $\mathbf{81.88 \pm 0.65}$ \\
         \bottomrule
    \end{tabular}
\end{table}

\tabref{tab:nesy-ebm-learning-performance} presents the average and standard deviation of the prediction performance for the symbolic component of the NeuPSL NeSy-EBM model across the six datasets examined in this subsection.
In five of the six datasets, the Bilevel learning algorithm achieves the best results.
Notably, in MNIST-Add$1$, IndeCateR's performance was comparable to Bilevel's.
However, as the complexity of the target variable constraints increased, IndeCateR's performance deteriorated, exemplified by poor results in MNIST-Add$2$ and failures to find viable solutions within the allotted time in the other datasets.

\begin{figure}[ht]
    \caption{Validation image classification accuracy versus training (a) epoch and (b) time in minutes for NeuPSL models trained with the Energy, IndeCateR, and Bilevel NeSy-EBM learning algorithms.}
    \begin{subfigure}{\textwidth}
        \centering
        \includegraphics[width=0.99 \textwidth]{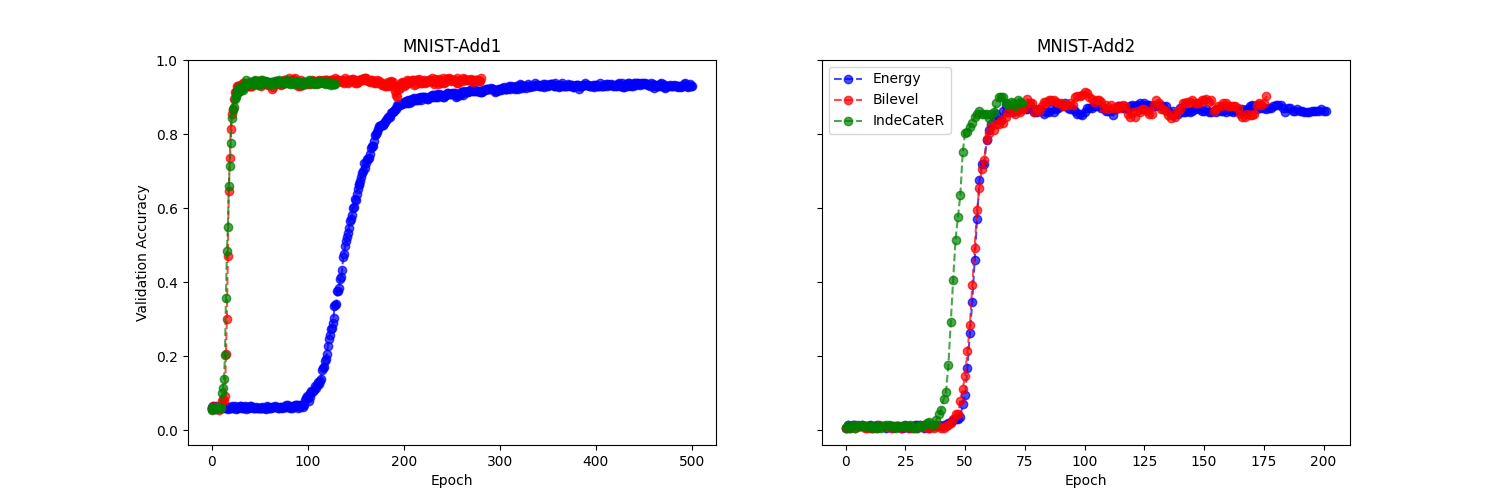}
        \caption{}
        \label{fig:mnist-learning-iteration-convergence}
    \end{subfigure}
    \begin{subfigure}{\textwidth}
        \centering
        \includegraphics[width=0.99 \textwidth]{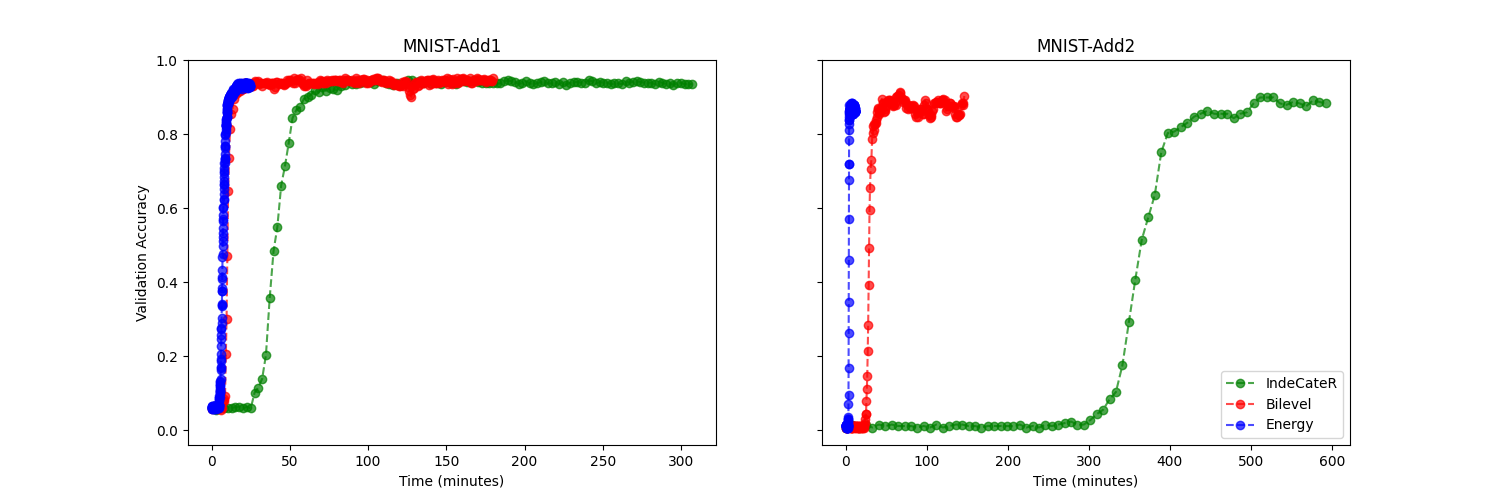}
        \caption{}
        \label{fig:mnist-learning-timing-convergence}
    \end{subfigure}
    \label{fig:mnist-learning-convergence}
\end{figure}

While Energy generally underperformed compared to Bilevel across most settings, it was the fastest in execution time.
For instance, \figref{fig:mnist-learning-convergence} plots the validation image classification accuracy of the MNIST-Add$1$ and MNIST-Add$2$ NeuPSL NeSy-EBMs trained with the Energy, IndeCater, and Bilevel learning algorithms versus the training epoch and wall-clock time for a single fold.
The Bilevel and IndeCateR algorithms reach higher validation performance levels than the Energy algorithm on both MNIST-Add$k$ datasets for the reported fold.
This pattern is consistent with the average prediction performance results reported in \tabref{tab:nesy-ebm-learning-performance} for MNIST-Add$1$.
For the MNIST-Add$2$ dataset, on the other hand, the IndeCateR algorithm was timed out after $10$ hours of training rather than allowing it to fully converge, which explains the drop in the relatively lower average test performance results in \tabref{tab:nesy-ebm-learning-performance}.
Surprisingly, the IndeCateR algorithm has the best empirical rate of improvement with respect to training epochs on both datasets; the next best is Bilevel, and finally, Energy.
However, the IndeCateR algorithm's per-iteration cost counteracts its advantage, and it has a significantly slower rate of improvement with respect to wall-clock time. 
On the other end of the spectrum, Energy has the slowest rate of prediction performance improvement, but its per iteration cost is low enough that it converges the fastest with respect to wall-clock time.
The Bilevel algorithm balances the strengths of the two algorithms.
It has a lower per-iteration cost because it only uses value-function gradients and optimizes a minimizer-based loss.
The convergence results in \figref{fig:mnist-learning-convergence} motivate future work on training pipelines that pre-train with a value-based loss and fine-tune with a more expensive minimizer-based loss to achieve the fastest training time and best final prediction performance.

\subsection{Semi-Supervision}
\label{sec:experiments-semi-supervision}

In this set of experiments, we investigate the effectiveness of the NeSy-EBM framework in training a deep learning model in a semi-supervised setting.
This experiment aims to further investigate research questions RQ3 and RQ4.
Specifically, we compare the prediction performance of a neural baseline trained solely with a supervised neural loss to that of a NeuPSL model's neural component (with the same architecture) trained using an end-to-end NeSy-EBM loss.
In both cases, only a subset of the training set labels is available to the neural component.
To enhance neural performance with a structured loss, the MNIST-Addk and Visual-Sudoku models in this subsection employ the \textit{DSVar} modeling paradigm due to its simplicity and speed, while Pathfinding, Citeseer, and Cora models use the \textit{DSPar} modeling paradigm.
We use the following variants of four datasets for our experiments.
\begin{itemize}
    \item \textbf{MNIST-Add}$k$: The $k=1,2$ MNIST-Add$k$ datasets with the proportion of image class labels available in the training data varying over $\{1.0, 0.5, 0.1, 0.05\}$.
    Prediction performance in this subsection is measured by the accuracy of the image classifications.

    \item \textbf{Visual-Sudoku}: The proportion of image class labels available in the training data varies over $\{1.0, 0.5, 0.1, 0.05\}$.

    \item \textbf{Pathfinding}: Supervision is distributed across all training maps, so the shortest paths in the training data are only partially observed. 
    The proportion of vertex labels available in the training data varies over $\{1.0, 0.5, 0.1\}$.

    \item \textbf{Citeseer and Cora}: The proportion of paper topic labels available in the training data varies over $\{1.0, 0.5, 0.1\}$.
\end{itemize}
The Bilevel learning algorithm is applied to train the NeSy-EBM neural components for the MNIST-Add$k$, Citeseer, and Cora datasets.
The Energy learning algorithm is applied to train the NeSy-EBM neural components for the Visual-Sudoku and Pathfinding datasets.
Additional details on the hardware and hyperparameter settings of the learning algorithms are provided in \appref{appendix:experiments-hyperparameters}.

\begin{table}[htb]
    \centering
    \caption{Digit accuracy of the ResNet18 models trained with varying levels of supervision.}
    \label{tab:semi-supervision-mnist-datasets}
    \begin{tabular}{lc||c|c}
         \toprule 
         & & \multicolumn{2}{c}{\textbf{ResNet18}} \\
         & & \multirow{2}{*}{\textbf{Supervised}} & \textbf{NeuPSL} \\
         & \textbf{Labeled} & & \textbf{Semi-Supervised} \\
         \midrule
         \midrule
         \multirow{4}{*}{\emph{MNIST-Add1}} 
         & $1.00$ & $ \mathbf{97.84 \pm 0.23} $ & $ 97.40 \pm 0.51 $ \\
         & $0.50$ & $ \mathbf{97.42 \pm 0.30} $ & $ 97.02 \pm 0.65 $ \\
         & $0.10$ & $ 93.05 \pm 0.69 $ & $ \mathbf{96.78 \pm 0.80} $ \\
         & $0.05$ & $ 75.35 \pm 0.33 $ & $ \mathbf{96.82 \pm 0.72} $ \\
         \hline
         \multirow{4}{*}{\emph{MNIST-Add2}}
         & $1.00$ & $ \mathbf{97.84 \pm 0.23} $ & $ 97.22 \pm 0.19 $ \\
         & $0.50$ & $ \mathbf{97.42 \pm 0.30} $ & $ 96.84 \pm 0.42 $ \\
         & $0.10$ & $ 93.05 \pm 0.69 $ & $ \mathbf{95.14\pm 1.21} $ \\
         & $0.05$ & $ 75.35 \pm 0.33 $ & $ \mathbf{95.90 \pm 0.43} $ \\
         \hline
         \multirow{4}{*}{\emph{Visual-Sudoku}}
         & $1.00$ & $ 97.84 \pm 0.23 $ & $ \mathbf{97.89 \pm 0.15} $ \\
         & $0.50$ & $  \mathbf{97.42 \pm 0.30} $ & $ 97.26 \pm 0.70 $ \\
         & $0.10$ & $ 93.05 \pm 0.69 $ & $ \mathbf{96.82 \pm 0.32} $ \\
         & $0.05$ & $ 75.35 \pm 0.33 $ & $ \mathbf{96.49 \pm 0.67} $ \\
         \bottomrule
    \end{tabular}
\end{table}

\begin{table}[htb]
    \centering
    \caption{Topic accuracy of the trained SGC models with varying levels of supervision.}
    \label{tab:semi-supervision-node-classification}
    \begin{tabular}{lc||c|c}
         \toprule 
         & & \multicolumn{2}{c}{\textbf{SGC}} \\
         & & \multirow{2}{*}{\textbf{Supervised}} & \textbf{NeuPSL} \\
         & \textbf{Labeled} & & \textbf{Semi-Supervised} \\
         \midrule
         \midrule
         \multirow{4}{*}{\emph{Citeseer}}
         & $1.00$ & $\mathbf{76.12 \pm 1.71}$ & $75.92 \pm 2.23$ \\
         & $0.50$ & $\mathbf{74.70 \pm 1.68}$ & $74.38 \pm 1.82$ \\
         & $0.10$ & $68.64 \pm 1.06$ & $\mathbf{69.66 \pm 0.16}$ \\
         & $0.05$ & $64.56 \pm 1.68$ & $\mathbf{66.12 \pm 1.22}$ \\
         \hline
         \multirow{4}{*}{\emph{Cora}}
         & $1.00$ & $\mathbf{87.62 \pm 0.97}$ & $87.18 \pm 1.08$ \\
         & $0.50$ & $85.82 \pm 0.50$ & $\mathbf{86.74 \pm 0.54}$ \\
         & $0.10$ & $80.88 \pm 2.00$ & $\mathbf{81.96 \pm 2.62}$ \\
         & $0.05$ & $74.98 \pm 3.32$ & $\mathbf{78.88 \pm 2.85}$ \\
         \bottomrule
    \end{tabular}
\end{table}

\begin{table}[htb]
    \centering
    \caption{Accuracy of finding a minimum cost path (Min. Cost Acc.) and consistency in satisfying continuity constraints (Continuity) of the ResNet18 models with varying levels of supervision.}
    \label{tab:semi-supervision-pathfinding}
    \scalebox{0.8}{
    \begin{tabular}{lc||cc|cc}
         \toprule 
         & & \multicolumn{4}{c}{\textbf{ResNet18}} \\
         & & \multicolumn{2}{c}{\multirow{2}{*}{\textbf{Supervised}}} & \multicolumn{2}{c}{\textbf{NeuPSL}} \\
         & & & & \multicolumn{2}{c}{\textbf{Semi-Supervised}} \\
         & \textbf{Labeled} & \textbf{Min. Cost Acc.} & \textbf{Continuity} & \textbf{Min. Cost Acc.} & \textbf{Continuity} \\
         \midrule
         \midrule
         \multirow{3}{*}{\emph{Pathfinding}}
         & $1.00$ & $ 80.12 \pm 22.44 $ & $ \mathbf{84.80 \pm 17.11} $ & $ \mathbf{80.90 \pm 21.93} $ & $ 83.02 \pm 20.09 $ \\
         & $0.50$ & $ 52.06 \pm 14.77 $ & $ 61.86 \pm 14.28 $ & $ \mathbf{59.84 \pm 16.51} $ & $ \mathbf{67.94 \pm 14.25} $ \\
         & $0.10$ & $2.60 \pm 1.04$ & $9.02 \pm 1.90$ & $ \mathbf{4.26 \pm 1.40} $ & $ \mathbf{35.18 \pm 3.40} $ \\
         \bottomrule
    \end{tabular}
    }
\end{table}

\cref{tab:semi-supervision-mnist-datasets,tab:semi-supervision-node-classification,tab:semi-supervision-pathfinding} report the average and standard deviation of the prediction performance of the supervised neural baseline and the semi-supervised neural component on the MNIST-Add$k$, Visual-Sudoku, Citeseer, Cora, and Pathfinding datasets.
Across all datasets, as the proportion of unlabeled data increases, the semi-supervised neural component begins to outperform the supervised baseline.
This behavior indicates that NeSy-EBMs are able to leverage the unlabeled training data by using the knowledge encoded in the NeuPSL rules.
The benefit of utilizing symbolic knowledge is most evident in the lowest supervision settings, with the NeuPSL semi-supervised ResNet18 model achieving over $20$ percentage points of improvement when there is only $5\%$ percent of the training labels in the MNIST-Add$k$ and Visual-Sudoku datasets.
Surprisingly, this outcome is repeated in the Citeseer and Cora datasets, where the NeuPSL rules are not always adhered to.
In other words, leveraging domain knowledge becomes more valuable for improving prediction performance as the amount of supervision decreases, even if the domain knowledge is not strictly accurate.

The Pathfinding results in \tabref{tab:semi-supervision-pathfinding} show there is not only a prediction performance gain achievable by making use of the symbolic component but also a reliability improvement.
The reported Continuity metric measuring the consistency of the ResNet18 model in satisfying path continuity constraints is significantly improved when there is limited supervision and the model is trained with a NeSy-EBM loss. 
The NeuPSL semi-supervised ResNet18 model attains an over $25$ percentage point improvement in path continuity consistency when only $10\%$ of training labels are available.
These results show NeSy-EBMs are valuable for aligning neural networks with desirable properties beyond accuracy.

\section{Limitations}
\label{sec:limitations}

In this section, we discuss the limitations of the NeSy-EBM framework, NeuPSL, and our empirical analysis.
The NeSy-EBM framework is a high-level and general paradigm for NeSy.
The value of the framework is that it provides a  unifying theory for NeSy and a foundation for creating widely applicable modeling paradigms and learning algorithms. 
Progress on developing highly efficient NeSy inference algorithms, on the other hand, may benefit from a perspective that considers the specific structure of the energy function and inference task.
For instance, the inference task of density estimation for NeSy systems such as semantic probabilistic layers \citep{ahmed:neurips22} is made highly efficient by levering constraints on the design of the energy function.
Similarly, we show prediction in NeuPSL is a quadratic program, a property that is leveraged by \cite{dickens:icml24} to create an inference algorithm tailored for leveraging warm starts to realize learning runtime improvements.
In this work, we make limited assumptions on the form of the energy function to develop modeling paradigms and learning algorithms and do not focus on building or analyzing specific inference techniques.

The collection of NeSy modeling paradigms introduced in \secref{sec:nesy-ebms-a-taxonomy-of-modeling-paradigms} is not exhaustive.
For instance, it omits NeSy systems that integrate symbolic knowledge extraction from deep neural networks \citep{tran:ieee18}.
Moreover, we do not discuss DSVar, DSPar, and DSPot model combinations.
We leave the exploration of utilizing multiple NeSy modeling paradigms to fuse neural components operating over multiple modalities for future work.

The four learning techniques proposed in this manuscript are presented with necessary assumptions on the energy function.
For instance, direct gradient descent can only be principally applied to minimize a NeSy-EBM loss at points where the energy function is twice differentiable with respect to the neural output and symbolic weights.
Similarly, the bilevel technique is principled at points where the optimal value-function is differentiable with respect to the neural output and symbolic weights.
We do not explore methods for extending the gradient descent and bilevel learning techniques to support NeSy-EBMs that do not satisfy all assumptions.
One approach is to substitute the inference program with an approximation.
The modular and stochastic policy optimization learning techniques require significantly fewer assumptions on the form of the energy function.
However, these two techniques have their own limitations, which we discuss in their respective subsections.

The NeuPSL system, while expressive, does not support every NeSy-EBM energy function and inference task.
Specifically, NeuPSL can create energy functions defined as a weighted sum of potentials derived via arithmetic, logic, and Lukasiewicz real-logic semantics, as described in \secref{sec:neupsl-and-deep-hlmrfs}.
NeuPSL does not support potentials constructed from other real-logic semantics.
Further, NeuPSL is currently only designed to perform non-probabilistic inference tasks (e.g., prediction, ranking, and detection). 
This is due to the complexities of computing marginal distributions with the Gibbs partition function defined from the energy.

Our empirical evaluations do not encompass every NeSy application, for instance, reasoning with noisy data.
Furthermore, although our research advances the incorporation of commonsense reasoning and domain knowledge into LLMs for question answering, we have not extended our investigation to more complex reasoning tasks like summarization or explanation.

\section{Conclusion and Future Work}
\label{sec:conclusion}
This paper establishes a mathematical framework for neural-symbolic (NeSy) reasoning with Neural-Symbolic Energy-Based Models (NeSy-EBMs).
The NeSy-EBM framework is a unifying foundation and a bridge for adapting techniques from the broader machine learning literature to solve challenges in NeSy.
Additionally, we introduce Neural Probabilistic Soft Logic (NeuPSL), an open-source and highly expressive implementation of NeSy-EBMs.
NeuPSL supports the primary modeling paradigms and continuity properties required for efficient end-to-end neural and symbolic parameter learning.

We show that NeSy-EBMs provide a unifying view of NeSy by categorizing fundamental NeSy modeling paradigms.
Our modeling paradigms organize the strengths and limitations of NeSy systems and clarify architecture requirements for applications.
NeSy-EBMs and the paradigms are valuable mechanisms for practitioners and researchers to understand the growing NeSy literature and design effective systems.
Further, NeSy-EBMs illuminate connections between NeSy and the broader machine learning community.
Specifically, we formalize a general NeSy learning loss and the necessary assumptions for supporting direct gradient descent on the loss.
Moreover, we leverage methods from reinforcement learning and bilevel optimization literature to work around the assumptions and design more practical and general algorithms.

The insights we gained from creating the mathematical framework, the collection of modeling paradigms, and the suite of learning techniques shaped the development of the NeuPSL NeSy modeling library. 
NeuPSL is built to support every modeling paradigm and learning technique we cover.
We demonstrate the effectiveness of NeuPSL in our empirical analysis.
Specifically, we explore four practical use cases of NeSy.
We show compelling results in real-world applications and see NeSy-EBMs enhance neural network predictions, enforce constraints, improve label and data efficiency, and empower LLMs with consistent reasoning.

Looking ahead, several promising avenues for future research have emerged.
For instance, a more extensive exploration into techniques for leveraging symbolic knowledge to fine-tune and adapt foundation models is a promising direction.
The NeSy-EBM framework and our proposed learning techniques are a solid basis for building pipelines to fine-tune foundation models.
Moreover, stochastic policy optimization for end-to-end NeSy learning has great potential due to its general applicability to every modeling paradigm and most NeSy-EBMs.
Finally, contributing to the active area of research on overcoming the challenge of high-variance gradient estimates would be highly beneficial for improving NeSy learning.

\begin{acks}
This work was partially supported by the National Science Foundation grants CCF-2023495 and a Google Faculty Research Award.
\end{acks}

\newpage

\appendix
\section{Introduction}
\label{appendix:introduction}

The appendix includes the following sections: Extended Related work (\appref{appendix:extended-related-work}), NeSy Apporaches as NeSy-EBMs (\appref{sec:appendix-nesy-approaches-as-nesy-ebms}), Extended Neural Probabilistic Soft Logic (\appref{appendix:extended_neupsl}), and Extended Empirical Analysis (\appref{appendix:experiments}).
\section{Extended Related Work}
\label{appendix:extended-related-work}

\subsection{Bilevel Optimization}
\label{appendix:extended-related-work-bilevel-optimization}

In this work, we use bilevel optimization as a natural formulation of learning for a general class of NeSy systems \citep{bracken:or71, colson:aor07, bard:book13, dempe:book20}.
The NeSy learning objective is a function of predictions obtained by solving a lower-level program that encapsulates symbolic reasoning.
In the broader deep learning community, bilevel optimization also arises in hyperparameter optimization \citep{pedregosa:icml16}, meta-learning \citep{franceschi:icml18, rajeswaran:neurips19}, generative adversarial networks \citep{goodfellow:neurips14}, and reinforcement learning \citep{sutton:book18}. 
Researchers typically take one of the following three approaches to bilevel optimization.

\textbf{Implicit Differentiation.} 
There is a long history of research on analyzing the stability of solutions to optimization problems using implicit differentiation \citep{fiacco:book68, robinson:mor80, bonnans:book00}.
These methods compute or approximate the Hessian matrix at the lower-level problem solution to derive an analytic expression for the gradient of the upper-level objective, sometimes called a hypergradient.
Bilevel algorithms of this type make varying assumptions about the problem structure \citep{do:neurips07, pedregosa:icml16, ghadimi:arxiv18, rajeswaran:neurips19, giovannelli:arxiv22, khanduri:icml23}.
Building on these foundational techniques, the deep learning community has proposed architectures that contain layers that are functions of convex programs with analytic expressions for gradients derived from implicit differentiation \citep{amos:icml17, agrawal:neurips19, agrawal:jano19, wang:icml19}. 

\textbf{Automatic Differentiation.}
This approach unrolls inference into a differentiable computational graph \citep{stoyanov:aistats11, domke:aistats12, belanger:icml17, ji:icml21}, and then leverages automatic differentiation techniques \citep{griewank:book08}.
However, unrolling the inference computation creates a large, complex computational graph that can accumulate numerical errors dependent on the solver.

\textbf{Bilevel Value-Function Approach.}
An increasingly popular approach is to reformulate the bilevel problem as a single-level constrained program using the optimal value of the lower-level objective (the value-function) to develop principled gradient-based algorithms that do not require the calculation of Hessian matrices for the lower-level problem \citep{outrata:zor90, ye:opt95, liu:icml21, sow:arxiv22, liu:neurips22, liu:arxiv23, kwon:icml23}.
Existing bilevel value-function approaches are not directly applicable to NeSy systems as they typically assume the lower-level problem to be unconstrained and the objective to be smooth.
Bilevel optimization with constraints in the lower level problem, is an open area of research. 
Until now, implicit differentiation methods are applied with strong assumptions about the structure of the lower-level problem \citep{giovannelli:arxiv22, khanduri:icml23}.
Our framework is, to the best of our knowledge, the first value-function approach to work with lower-level problem constraints.

\subsection{Energy-Based Models}
\label{appendix:extended-related-work-energy-based-models}

Our NeSy framework makes use of Energy-Based Models (EBMs) \citep{lecun:book06}.
EBMs measure the compatibility of a collection of observed (input) variables $\mathbf{x} \in \mathcal{X}$ and target (output) variables $\mathbf{y} \in \mathcal{Y}$ via a scalar-valued energy function: $E: \mathcal{Y} \times \mathcal{X} \to \mathbb{R}$.
Low energy states represent high compatibility.
EBMs are appealing due to their generality in both modeling and application. 
For instance, EBMs can be used to perform density estimation by defining conditional, joint, and marginal Gibbs distributions with the energy function:
\begin{align}
    & P(\mathbf{y} \vert \mathbf{x}) := \frac{e^{-\beta E(\mathbf{y}, \mathbf{x})}}{\int_{\hat{\mathbf{y}} \in \mathcal{Y}} e^{-\beta E(\hat{\mathbf{y}}, \mathbf{x})}}, \label{eq:gibbs_conditional}\\
    & P(\mathbf{y}, \mathbf{x}) := \frac{e^{-\beta E(\mathbf{y}, \mathbf{x})}}{\int_{\hat{\mathbf{y}} \in \mathcal{Y}, \hat{\mathbf{x}} \in \mathcal{X}} e^{-\beta E(\hat{\mathbf{y}}, \hat{\mathbf{x}})}}, \label{eq:gibbs_joint}\\
    & P(\mathbf{x}) := \frac{\int_{\hat{\mathbf{y}} \in \mathcal{Y}}e^{-\beta E(\mathbf{y}, \mathbf{x})}}{\int_{\hat{\mathbf{y}} \in \mathcal{Y}, \hat{\mathbf{x}} \in \mathcal{X}} e^{-\beta E(\hat{\mathbf{y}}, \hat{\mathbf{x}})}} \label{eq:gibbs_marginal}.
\end{align}
A fundamental motivation for the use of the Gibbs distribution is that any density function can be represented by the distribution shown above with a (potentially un-normalized) energy function $E$.
For this reason, EBMs are a unified framework for probabilistic and non-probabilistic approaches and are applicable for generative and discriminative modeling.

EBMs are applied throughout machine learning to model data and provide predictions.
The Boltzmann machine \citep{ackley:cs85, salakhutdinov:aistats10} and Helmholtz machine \citep{dayan:nc95} are some of the earliest EBMs to appear in the machine learning literature.
\citenoun{hinton:nc02} is another seminal work that shows EBMs to be useful for building mixture-of-expert models.
Specifically, a single complex distribution is produced by multiplying many simple distributions together and then renormalizing.

More recently, the EBM framework has been utilized for generative modeling \citep{zhao:iclr17, du:neurips19, du:icml23}.
\citenoun{zhao:iclr17} introduce energy-based generative adversarial networks (EBGANs), which view the GAN discriminator as an energy function that attributes low energies (high compatibility) to points near the data manifold.
The EBGAN approach is a principled framework for using GAN discriminators with a variety of architectures and learning loss functionals to achieve more stable training than traditional GANs.
\cite{du:neurips19} advocate for using EBMs \emph{directly} for generative modeling, citing as motivation their simplicity, stability, parameter efficiency, flexibility of generation, and compositionality.
They show generative results that achieve performance close to modern GANs, achieving state-of-the-art results in out-of-distribution classification, adversarially robust classification, and other tasks.
In more recent work, \citenoun{du:icml23} propose an energy-based parameterization of diffusion models to support compositional generation.

The EBM framework was shown recently to improve discriminative modeling \citep{grathwohl:iclr20, liu:neurips20}.
\citenoun{grathwohl:iclr20} reinterpret discriminative classifiers as EBMs to propose the joint energy-based model (JEM).
A JEM allows the parameters of the model to be fit on unlabeled data with a likelihood-based loss, leading to improved accuracy, robustness, calibration, and out-of-distribution detection.
Similarly, \citenoun{liu:neurips20} developed an EBM for out-of-distribution detection to achieve state-of-the-art performance.
\citenoun{liu:neurips20} creates a purely discriminative training objective (in contrast with the probabilistic approach of JEM) and shows that unnormalized energy scores can be used directly for out-of-distribution detection.

A primary challenge of the EBM framework is learning with a potentially in-tractable partition function induced by the Gibbs distributions in \eqref{eq:gibbs_conditional}, \eqref{eq:gibbs_joint}, and \eqref{eq:gibbs_marginal}. 
Some of the earliest EBMs worked around the partition function using the contrastive divergence algorithm \citep{hinton:nc02} to estimate derivatives of the negative log-likelihood loss of an EBM with Markov chain Monte Carlo (MCMC) sampling from the Gibbs distribution. 
Later work on EBMs has improved the traditional biased MCMC sampling-based approximation methods with a sampler based on stochastic gradient Langevin dynamics (SGLD) \citep{welling:icml11}.
For instance, \citenoun{du:neurips19} use SGLD for training generative EBMs and \citenoun{grathwohl:iclr20} for discriminative models with a negative log-likelihood loss.

Score matching is an alternative probabilistic approach to training an EBM that fits the slope (or score) of the model density to the score of the data distribution, avoiding the need to estimate the Gibbs distribution partition function \citep{hyvarinen:jmlr05, kingma:neurips10, song:neurips19}.
\citenoun{hyvarinen:jmlr05} initially proposed score matching for estimating non-normalized statistical models.
Later, \citenoun{kingma:neurips10} used score matching to train an EBM for image denoising and super-resolution.
\citenoun{song:neurips19} suggested training an EBM to approximate the score of the data distribution that is then used with Langevin dynamics for generation.

EBMs may also be trained via non-probabilistic losses that do not require estimating the Gibbs distribution partition function \citep{lecun:ieee98, collins:emnlp02, scellier:fcn17}.
For instance, the perceptron loss, which is the difference between the energy of the observed training data and the minimum value of the energy function (see \secref{sec:learning-losses} for a formal definition),  has been used for recognizing handwritten digits \citep{lecun:ieee98} and part-of-speech tagging \citep{collins:emnlp02}.
More recently, \citenoun{scellier:fcn17} proposed \emph{equilibrium propagation}, a two-phase learning algorithm for training EBMs with a twice differentiable energy function.
The equilibrium propagation algorithm can be used to train EBMs with an arbitrary differentiable loss.
A step of the learning algorithm proceeds by minimizing the energy given some input (the free phase) and then minimizing the energy augmented with a cost function (the nudged phase).
The gradient of the learning objective is a function of the results of these two minimizations.

The EBM framework has proven effective for a wide range of tasks in both generative and discriminative modeling.
The versatility of EBMs supports modeling complex dependencies, the composition and fusion of models, and leveraging both labeled and unlabeled data.
Moreover, EBMs provide a common theoretical framework spanning probabilistic and non-probabilistic methods.

\section{Expressing NeSy Approaches via NeSy-EBMs}
\label{sec:appendix-nesy-approaches-as-nesy-ebms}

\chapref{sec:nesy-ebms} introduced NeSy-EBMs as a unifying framework for understanding neural-symbolic integration.
In this section, we demonstrate how to formulate three widely recognized NeSy approaches within this framework, highlighting their distinct strategies for combining neural learning with symbolic reasoning.
These approaches differ in how they incorporate symbolic knowledge, enforce constraints, and structure their overall modeling paradigms:

\begin{itemize}
    \item \textbf{Semantic Loss (SL)} \citep{xu:icml18}:
    A probabilistic NeSy approach in which the neural network outputs define a probability distribution over variables or facts within probabilistic propositional logical constraints.
    These constraints are integrated as a regularization term within the loss function of the neural network.
    This approach is typically expressed as a deep symbolic parameter model.

    \item \textbf{DeepProbLog (DPL)} \citep{manhaeve:ai21}:
    A probabilistic NeSy approach in which the neural network outputs define a probability distribution over variables or facts within a probabilistic logic setting.
    Probabilistic constraints can be applied either as a loss regularization term or as an additional reasoning layer on top of the neural network outputs.
    This approach is typically expressed as a deep symbolic parameter model.
    
    \item \textbf{Logic Tensor Networks (LTNs)} \citep{badreddine:ai22}:
    A fuzzy NeSy approach in which the neural network outputs define variable values within a fuzzy logic system.
    Fuzzy logic constraints are incorporated through an additional computation graph layer on the neural network outputs.
    This approach is typically expressed as a deep symbolic variable model.
\end{itemize}

\subsection{Semantic Loss (SL)}

Semantic loss is a probabilistic NeSy approach that interprets neural network outputs as probability distributions over propositional variables constrained by logical rules \citep{xu:icml18}.
It measures the extent to which the neural network’s predictions satisfy the given constraints, serving as a regularization term to encourage logical consistency by penalizing violations.

\subsubsection{Defintion and Background}

Formally, let $\mathbf{y} = \{y_1, y_2, \dots, y_n\}$ be a set of $n$ propositional variables and a \textit{world} $\omega$ is an instantiation of all variables $\mathbf{y}$, i.e., $\omega = \{y_1=v_1, \cdots, y_n = v_n\}$ for $v_i \in \{0,1\}$. A world $\omega$ satisfies a sentence $\alpha$, denoted $\omega \models \alpha$, if the sentence evaluates to true in that world. A sentence $\alpha$ \textit{entails} another sentence $\beta$, denoted $\alpha \models \beta$, if all worlds that satisfy $\alpha$ also satisfy $\beta$.

In semantic loss, each propositional variable $y_i \in \mathbf{y}$ is assigned a probability $p_i$ that it is true. These probabilities $\mathbf{p} = \{p_1, p_2, \dots, p_n\}$ are defined from a neural network. The \textit{semantic loss}, denoted as $L_{SL}(\alpha, \mathbf{p})$, measures how closely the neural network's predictions satisfy the sentence $\alpha$ with:

\[
L_{SL}(\alpha, \mathbf{p}) \propto - \log \left( \sum_{\omega \models \alpha} \prod_{i: \omega \models y_i} p_i \prod_{i: \omega \models \neg y_i} (1 - p_i) \right)
\]

The summation is taken over all worlds \(\omega\) that satisfy \(\alpha\).
The product \(\prod_{i: \omega \models y_i} p_i\) represents the probability of all variables assigned true in \(\omega\), while \(\prod_{i: \omega \models \neg y_i} (1 - p_i)\) accounts for the probability of all variables assigned false in \(\omega\).  
The right-hand side of the equation corresponds to the well-known reasoning task of weighted model counting (WMC) \citep{chavira:ai08}, which computes the total probability of satisfying \(\alpha\) under a given probability distribution.  

In practice, semantic loss—weighted by a hyperparameter \(\gamma\)—serves as a regularization term that encourages the neural model to produce predictions consistent with logical constraints.
Formally, semantic loss is incorporated as follows:  

\[
L_{\text{total}} = L_{\text{existing}} + \gamma \cdot L_{SL}(\alpha, \mathbf{p}),
\]

where \(L_{\text{existing}}\) represents the original loss function, and \(L_{SL}(\alpha, \mathbf{p})\) enforces logical consistency within the learned predictions.

With the semantic loss formalized, we now illustrate its application through an example: enforcing mutual exclusivity constraints in multi-class classification.

\begin{example}
    Consider a multi-class classification problem where exactly one of $n$ possible outcomes should be true. The sentence $\alpha_{\text{exactly\_one}}$ enforces that exactly one of the variables $y_1, \dots, y_n$ is true:
    \[
    \alpha_{\text{exactly\_one}} = (y_1 \land \neg y_2 \land \dots \land \neg y_n) \lor (\neg y_1 \land y_2 \land \dots \land \neg y_n) \lor \dots \lor (\neg y_1 \land \dots \land y_n).
    \]
    
    The semantic loss for this multi-class setting is computed as:
    \[
    L_{SL}(\alpha_{\text{exactly\_one}}, \mathbf{p}) \propto - \log \left( \sum_{i=1}^{n} p_i \prod_{\substack{j=1 \\ j \neq i}}^{n} (1 - p_j) \right),
    \]
    where $\mathbf{p} = \{p_1, \dots, p_n\}$ are the probabilities predicted by the neural network for each class.
\end{example}

While this demonstrates how more complex logical sentences can be incorporated into the problem, it is crucial to recognize that this requires computing the weighted model count \citep{chavira:ai08}, a task that is \#P-hard.
To mitigate this computational challenge, a common approach in the literature is to compile logical formulas into circuits, enabling efficient inference across multiple queries \citep{choi:unpub20, kisa:kr14, manhaeve:ai21}.
However, while circuit compilation can significantly accelerate inference, the compilation process itself can be exponential in both time and memory, with no guarantees that the resulting circuit size remains tractable \citep{krieken:neurips23}. 

\subsubsection{NeSy-EBM Formulation}

Semantic loss can be formulated as a DSPar NeSy-EBM where the propositional variable probabilities are neural network outputs. 
Since semantic loss does not take as input any observed random variables, then there are no external parameters, i.e., let $\mathbf{x}_{sy} \in \emptyset$ and $\mathbf{w}_{sy} \in \emptyset$.
Let $g_{nn}$ be a function with parameters $\mathbf{w}_{nn} \in W_{nn}$ and inputs $\mathbf{x}_{nn} \in X_{nn}$ such that:
\[
g_{nn}: W_{nn} \times X_{nn} \mapsto [0, 1]^n,
\]
where $n$ is the number of propositional variables involved in the constraints, this function outputs the predicted probabilities:
\[
g_{nn}(\mathbf{x}_{nn}, \mathbf{w}_{nn}) = [p_{i}]_{i=1}^{n},
\]
where $p_i$ represents the predicted probability for the $i$-th variable.

Define the logical sentence $\alpha$ as a hard constraint $C_{H}(\omega)$ using an indicator function that represents whether a world $\omega$ (an assignment of the propositional variables) satisfies the sentence $\alpha$:
\[
C_{H}(\omega) = \mathbb{I}(\omega \models \alpha),
\]
where \(\mathbb{I}(\omega \models \alpha)\) is 1 if the world satisfies the constraint \(\alpha\), and 0 otherwise.

Assuming the weighted model count formulation, the symbolic component of the NeSy-EBM is a DSPar potential function:
\begin{align*}
    \label{equ-semantic-loss-symbolic-component}
    g_{sy} (\mathbf{y}, \mathbf{x}_{sy}, \mathbf{w}_{sy}, \mathbf{g}_{nn}(\mathbf{x}_{nn}, \mathbf{w}_{nn})) &= \psi_{SL}(\left[ \mathbf{y}, \mathbf{x}_{sy} \right], \left[\mathbf{w}_{sy}, \mathbf{g}_{nn}(\mathbf{x}_{nn}, \mathbf{w}_{nn}) \right]) \\
    & =  - \log \left( \sum_{\omega \in \{0,1\}^{|y|}} C_{H}(\omega) \prod_{i=1}^{n} p_i^{\omega_i} (1 - p_i)^{1 - \omega_i} \right)
\end{align*}
where $\omega_i$ is the $i^{th}$ propositional variable value.

Given the symbolic potential and variables defined above, the semantic energy function is defined as:
\begin{align}
    E_{SL}(\mathbf{y}, \mathbf{x}_{sy}, \mathbf{x}_{nn}, \mathbf{w}_{sy}, \mathbf{w}_{nn}) & = g_{sy}(y, \mathbf{x}_{sy}, \mathbf{w}_{sy}, g_{nn}(\mathbf{x}_{nn}, \mathbf{w}_{nn})) \\
    & =  - \log \left( \sum_{\omega \in \{0,1\}^{|y|}} C_{H}(\omega) \prod_{i=1}^{n} p_i^{\omega_i} (1 - p_i)^{1 - \omega_i} \right) \nonumber
\end{align}

\subsection{DeepProbLog (DPL)}

DeepProbLog (DPL) \citep{manhaeve:ai21}, like semantic loss, is a probabilistic NeSy approach that integrates neural network predictions with symbolic reasoning.
However, while semantic loss is confined to propositional logic, DeepProbLog operates within a first-order logic framework, allowing for more expressive reasoning capabilities.
Specifically, DeepProbLog assigns neural network outputs as probabilities within the probabilistic programming language ProbLog \citep{deraedt:ijcai07} through the use of neural predicates.
This integration allows DeepProbLog to leverage both probabilistic logic programming and program induction, making it well-suited for reasoning over more complex symbolic structures and relational dependencies.

\subsubsection{Definition and Background}

To begin, we review the basics of probabilistic logic programming in ProbLog, following the presentation from \citep{manhaeve:ai21} (see \citep{deraedt:ijcai07} for further details).

\paragraph{ProbLog:} A ProbLog program consists of two main components:
\begin{itemize}
    \item A set of \textit{probabilistic facts} $\mathcal{F}$ of the form $p :: y$, where $p$ is the probability that the binary target random variable $y$ is true (i.e., $y \in \{0, 1\}$). Note: this can be extended to include a set of observed facts or evidence $p :: x_{sy}$.
    \item A set $\mathcal{R}$ of \textit{symbolic rules}, which describe how different facts relate to each other.
\end{itemize}

A subset of the probabilistic facts $F \subseteq \mathcal{F}$ defines a possible instantiation, or \textit{world} $\omega$. This world includes all facts in $F$ and all facts derivable from $F$ using the rules in $\mathcal{R}$:
\[
\omega = F \cup \{ y \mid \mathcal{R} \cup F \models y \},
\]
where $\mathcal{R} \cup F \models y$ means that the fact $y$ can be derived from the combination of rules $\mathcal{R}$ and facts $F$. The probability of a world $\omega$ is given by the product of the probabilities of the facts in that world:
\[
P(\omega) := \prod_{y_i \in F} p_i \prod_{y_i \in \mathcal{F} \setminus F} (1 - p_i),
\]
where $p_i$ is the probability assigned to fact $y_i$.

\commentout{
    Finally, the probability of a query atom, $q$, is defined as the sum of the probabilities of all worlds $\omega$ that contain $q$:
    \[
    P(q) := \sum_{F \subseteq \mathcal{F}, \, q \in \omega} P(\omega).
    \]
    \begin{note}
        This sum over all possible worlds that include the query atom is another instance of the weighted model counting problem \citep{chavira:ai08}.
    \end{note}
}

\begin{example}
    Consider the following ProbLog program, which models the likelihood of a burglary or earthquake causing an alarm:
    \begin{align*}
        & \textbf{Probabilistic Facts:} \\
        & 0.1 \, :: \, \text{burglary}, \quad 0.2 \, :: \, \text{earthquake}, \\
        & 0.5 \, :: \, \text{hearsAlarm(mary)}, \quad 0.4 \, :: \, \text{hearsAlarm(john)}. \\
        & \textbf{Rules:} \\
        & \text{alarm} \, :- \, \text{earthquake}. \\
        & \text{alarm} \, :- \, \text{burglary}. \\
        & \text{calls(X)} \, :- \, \text{alarm}, \text{hearsAlarm(X)}. \\
    \end{align*}
    
    Now, consider the subset $F = \{\text{burglary}, \text{hearsAlarm(mary)}\}$ of probabilistic facts. The corresponding world $\omega$ includes the derived facts:
    \[
    \omega = \{\text{burglary}, \text{hearsAlarm(mary)}, \text{alarm}, \text{calls(mary)}\}.
    \]
    The probability of this world is:
    \[
    P(\omega) = 0.1 \cdot 0.5 \cdot (1 - 0.2) \cdot (1 - 0.4) = 0.024.
    \]
\end{example}

\paragraph{DeepProbLog:}
A DeepProbLog program extends the syntax and semantics of ProbLog by introducing neural predicates, allowing the specification of probabilistic facts based on neural network outputs \citep{manhaeve:ai21}. Specifically, DeepProbLog introduces neural annotated disjunctions (nADs), which integrate neural network predictions directly into the logic. A neural annotated disjunction is specified as:
{\small
\begin{align}
nn(m_{g_{nn}}, \mathbf{x}_{nn}, u_{1}) \, :: \, h(\mathbf{x}_{nn}, u_{1}) \, ; \,
\cdots \, ; \,
nn(m_{g_{nn}}, \mathbf{x}_{nn}, u_{n}) \, :: \, h(\mathbf{x}_{nn}, u_{n}) \,
\vDash \, b_{1}, \cdots, b_{m},
\end{align}
}%
where $\mathbf{x}_{nn}$ is a vector of features accessible to the neural component identified by $m_{g_{nn}}$. The terms $u_1, \dots, u_n$ represent the possible outputs of the neural network, and the atoms $b_1, \dots, b_m$ are logical conditions. The output of the neural network, $nn(m_{g_{nn}}, \mathbf{x}_{nn}, u_{i})$, is interpreted as the probability that the atom $h(\mathbf{x}_{nn}, u_i)$ is true, and the sum of the neural model’s outputs must satisfy:

\[
\sum_{i=1}^{n} nn(m_{g_{nn}}, \mathbf{x}_{nn}, u_{i}) = 1.
\]

The meaning of the nAD is that whenever all the atoms $b_1, \dots, b_m$ hold true, each $h(\mathbf{x}_{nn}, u_i)$ becomes true with probability $nn(m_{g_{nn}}, \mathbf{x}_{nn}, u_i)$.

\subsubsection{NeSy-EBM Formulation}

DeepProbLog can be formulated as a DSPar NeSy-EBM where the fact probabilities are defined with both the symbolic parameters and the neural network outputs.
Let $\mathbf{x}_{sy}$ be the observed random variables (evidence), $\mathbf{y}$ be the target random variables (facts), and $\mathbf{w}_{sy}$ be symbolic parameters over facts not parameterized by a neural network (probabilities).
Let $g_{nn}$ be a function with parameters $\mathbf{w}_{nn} \in W_{nn}$ and inputs $\mathbf{x}_{nn} \in X_{nn}$ such that:
\[
g_{nn}: W_{nn} \times X_{nn} \mapsto [0, 1]^n,
\]
where $n$ is the number of propositional variables involved in the constraints. Without loss of generality the fact probabilities, $\mathbf{p}$, are paritioned into symbolic parameters and these neural network outputs:
\[
\mathbf{p} = \begin{bmatrix} \mathbf{w}_{sy} \\ \mathbf{g}(\mathbf{x}_{nn}, \mathbf{w}_{nn}) \end{bmatrix}
\]
where $p_i$ represents the predicted probability for the $i$-th variable.

The probability of a world $\omega$, defined by a subset of probabilistic facts $F \subseteq \mathcal{F}$, is a function of the fact probabilities $\mathbf{p}$, and therefore a function of $\mathbf{w}_{sy}$ and the neural network outputs $\mathbf{g}(\mathbf{x}_{nn}, \mathbf{w}_{nn})$:
{\tiny
\begin{align*}
    &P_{\omega}(\mathbf{w}_{sy}, \mathbf{g}(\mathbf{x}_{nn}, \mathbf{w}_{nn})) = \\
    &\prod_{\mathbf{w}_{sy}^j \in F} \mathbf{w}_{sy}^j \prod_{\mathbf{w}_{sy}^j \in \mathcal{F} \setminus F} (1 - \mathbf{w}_{sy}^j) \prod_{\mathbf{g}(\mathbf{x}_{nn}, \mathbf{w}_{nn})^j \in F} \mathbf{g}(\mathbf{x}_{nn}, \mathbf{w}_{nn})^j \prod_{\mathbf{g}(\mathbf{x}_{nn}, \mathbf{w}_{nn})^j \in \mathcal{F} \setminus F} (1 - \mathbf{g}(\mathbf{x}_{nn}, \mathbf{w}_{nn})^j)
\end{align*}
}%

Finally, unlike semantic loss, DeepProbLog typically evaluates probabilities with respect to \textit{queries}. A \textit{query} is a symbolic atom \( q \) whose probability we want to compute based on the probabilistic facts and the neural network outputs. For example, the marginal probability of a query atom \( q \) is computed by summing over the probabilities of all worlds \( \omega \) in which \( q \) is true.

Let \( C_H(\omega, q) \) be the constraint function, which acts as an indicator function returning 1 if the world \( \omega \) satisfies the condition that the query atom \( q \) is true:
\[
C_H(\omega, q) = 
\begin{cases} 
1 & \text{if } q \in \omega \\
0 & \text{otherwise}.
\end{cases}
\]

Assuming the weighted model count formulation, the symbolic component of the NeSy-EBM is a DSPar potential function for a single query q is defined as:
\begin{align*}
    \label{equ-semantic-loss-symbolic-component}
    g_{sy} (\mathbf{y}, \mathbf{x}_{sy}, \mathbf{w}_{sy}, \mathbf{g}_{nn}(\mathbf{x}_{nn}, \mathbf{w}_{nn})) &= \psi_{DPL}^{q}(\left[ \mathbf{y}, \mathbf{x}_{sy} \right], \left[\mathbf{w}_{sy}, \mathbf{g}_{nn}(\mathbf{x}_{nn}, \mathbf{w}_{nn}) \right]) \\
    & =  d\left(q,  \sum_{\omega \in \{0,1\}^{|\mathbf{y}}|} C_H(\omega, q) P_{\omega}(\mathbf{w}_{sy}, \mathbf{g}(\mathbf{x}_{nn}, \mathbf{w}_{nn})) \right)
\end{align*}

Where \(d(\cdot, \cdot)\) is the distance between the predicted probability of the query and its true probability. Now, the energy function is defined over a sum of all queries $\mathbf{q}$.
\[
E_{DPL}(\mathbf{y}, \mathbf{x}_{sy}, \mathbf{x}_{nn}, \mathbf{w}_{sy}, \mathbf{w}_{nn}) = \sum_{i=1}^{|\mathbf{q}|} \psi_{DPL}^{q_i}(\left[ \mathbf{y}, \mathbf{x}_{sy} \right], \left[\mathbf{w}_{sy}, \mathbf{g}_{nn}(\mathbf{x}_{nn}, \mathbf{w}_{nn})\right]).
\]

\subsection{Logic Tensor Networks (LTNs)}

Logic Tensor Networks (LTNs) \citep{badreddine:ai22} are a fuzzy Neural-Symbolic Energy-Based Model (NeSy-EBM) approach, integrating neural network predictions with logic-based reasoning.
In LTNs, neural networks provide real-valued truth values for predicates, which are then manipulated using fuzzy logic operations to evaluate logical formulae.
The satisfaction levels of the logical formulae are aggregated through generalized mean semantics, which form the basis of the energy function.

LTNs use product real logic operators to define fuzzy truth values for logical connectives:
{\small
\[
\neg(a) := 1 - a, \quad \land(a, b) := a \cdot b, \quad \vee(a, b) := a + b - a \cdot b, \quad \implies(a, b) := a + b - a \cdot b.
\]
}%
Additionally, LTNs use formula aggregators, such as generalized mean semantics, to handle existential and universal quantifiers over collections of truth values, denoted by \(\mathbf{a} = [a_i]_{i=1}^{n}\):
\[
\exists(\mathbf{a}) := \left( \frac{1}{n} \sum_{i=1}^{n} a_{i}^p \right)^{\frac{1}{p}}, \quad \forall(\mathbf{a}) := 1 - \left( \frac{1}{n} \sum_{i=1}^{n} (1 - a_{i})^p \right)^{\frac{1}{p}},
\]
where \(p \in \mathbb{R}_{+}\) is a hyperparameter controlling the smoothness of the quantifiers.

In LTNs, neural networks instantiate predicates with values from \([0, 1]\), representing the degree to which a predicate is satisfied. For example, given two entities \(u\) and \(v\), the predicate \(P(u, v)\) can be defined as the output of a neural network \(g_{nn}(\mathbf{X}[u], \mathbf{X}[v]; \mathbf{w}_{nn})\), which maps the feature vectors \(\mathbf{X}[u]\) and \(\mathbf{X}[v]\) to a truth value in \([0, 1]\).

\begin{example}
    Consider the following logical formula, which expresses that for each entity \(u\), there exists some entity \(v\) such that both predicates \(P(u, v)\) and \(Q(v)\) hold true:
    \[
    \exists v \in \mathcal{V} \left( P(u, v) \land Q(v) \right).
    \]
    
    Let \(\mathbf{X}_{\mathcal{U}}\) and \(\mathbf{X}_{\mathcal{V}}\) represent the feature vectors for the sets of entities $\mathcal{U}$ and $\mathcal{V}$, respectively. The predicates \(P(u, v)\) and \(Q(v)\) can be instantiated with neural network outputs:
    - \(P(u, v)\) is given by the neural network \(g_{nn}(\mathbf{X}[u], \mathbf{X}[v]; \mathbf{w}_{nn})\),
    - \(Q(v)\) could be a constant truth value or another neural network prediction.
    
    Using the generalized mean semantics for the existential quantifier, we define the real-valued logic function for the above formula as:
    \[
    h_{u}(\mathbf{X}_{\mathcal{U}}, \mathbf{X}_{\mathcal{V}}, \mathbf{x}_{Q}; \mathbf{w}_{nn}) = \left( \frac{1}{|\mathcal{V}|} \sum_{v \in \mathcal{V}} \left( g_{nn}(\mathbf{X}[u], \mathbf{X}[v]; \mathbf{w}_{nn}) \cdot \mathbf{x}_Q[v] \right)^p \right)^{\frac{1}{p}},
    \]
    where \(\mathbf{x}_Q[v]\) is the truth value for the predicate \(Q(v)\), and \(g_{nn}(\mathbf{X}[u], \mathbf{X}[v]; \mathbf{w}_{nn})\) is the neural network output for the predicate \(P(u, v)\).
    
    The satisfaction level of the formula for all entities \(u\) is then aggregated using the universal quantifier:
    \[
    G(\mathbf{w}_{nn}) = 1 - \left( \frac{1}{|\mathcal{U}|} \sum_{u \in \mathcal{U}} \left(1 - h_{u}(\mathbf{X}_{\mathcal{U}}, \mathbf{X}_{\mathcal{V}}, \mathbf{x}_{Q}; \mathbf{w}_{nn})\right)^p \right)^{\frac{1}{p}}.
    \]
\end{example}

This example illustrates how LTNs leverage neural networks to assign fuzzy truth values to predicates and apply these values in evaluating logical formulas. The next section will explain how LTNs can be represented as NeSy-EBMs using symbolic constraints.

\subsubsection{NeSy-EBM Formulation}

LTNs can be formulated as a DSVar NeSy-EBM, where the satisfaction of symbolic constraints is driven by neural network outputs. Let \(\mathbf{x}_{sy}\) be the observed random variables (constants), and since the symbolic component does not have trainable parameters, define \(\mathbf{w}_{sy} \in \emptyset\). Let \(g_{nn}\) be a function with parameters \(\mathbf{w}_{nn} \in W_{nn}\) and inputs \(\mathbf{x}_{nn} \in X_{nn}\) such that:
\[
g_{nn}: W_{nn} \times X_{nn} \mapsto [0, 1]^n,
\]
where \(n\) is the number of variables involved in the constraints.

Define \(C_{S}^{Agg}(\mathbf{g}_{nn}(\mathbf{x}_{nn}, \mathbf{w}_{nn}), \mathbf{x}_{sy}, \mathbf{w}_{sy})\) as the soft constraint function that takes as input the output of a set of neural networks \(\mathbf{g}_{nn}(\mathbf{x}_{nn}, \mathbf{w}_{nn})\) and applies the collection of aggregation functions \(Agg\) in some way that maintains differentability (e.g., the generalized mean or quantifiers).

The symbolic component of the NeSy-EBM is a DSVar potential function:
\begin{align*}
    g_{sy} (\mathbf{y}, \mathbf{x}_{sy}, \mathbf{w}_{sy}, \mathbf{g}_{nn}(\mathbf{x}_{nn}, \mathbf{w}_{nn})) &= \psi_{LTN}(\left[ \mathbf{y}, \mathbf{x}_{sy}, \mathbf{g}_{nn}(\mathbf{x}_{nn}, \mathbf{w}_{nn})\right], \left[\mathbf{w}_{sy} \right]) \\
    &= C_{S}^{Agg}(\mathbf{g}_{nn}(\mathbf{x}_{nn}, \mathbf{w}_{nn}), \mathbf{x}_{sy}, \mathbf{w}_{sy})
\end{align*}

Given the symbolic potential and variables defined above, the energy function for LTNs is defined as:
\begin{align*}
    E_{LTN}(\mathbf{y}, \mathbf{x}_{sy}, \mathbf{x}_{nn}, \mathbf{w}_{sy}, \mathbf{w}_{nn}) &= g_{sy} (\mathbf{y}, \mathbf{x}_{sy}, \mathbf{w}_{sy}, \mathbf{g}_{nn}(\mathbf{x}_{nn}, \mathbf{w}_{nn})) \\
    &=  C_{S}^{Agg}(\mathbf{g}_{nn}(\mathbf{x}_{nn}, \mathbf{w}_{nn}), \mathbf{x}_{sy}, \mathbf{w}_{sy}).
\end{align*}
\section{Extended Neural Probabilistic Soft Logic}
\label{appendix:extended_neupsl}

In this section, we expand on the smooth formulation of NeuPSL inference and provide proofs for the continuity results presented in \secref{sec:lcqp_inference}.

\subsection{Extended Smooth Formulation of Inference}
\label{appendix:lcqp_inference}
Recall the primal formulation of NeuPSL inference restated below:
\begin{align}
    \argmin_{\mathbf{y} \in \mathbb{R}^{n_{\mathbf{y}}}} \, \mathbf{w}_{sy}^T \mathbf{\Phi}(\mathbf{y} , \mathbf{x}_{sy}, \mathbf{g}_{nn}(\mathbf{x}_{nn}, \mathbf{w}_{nn})) \quad
        \textrm{s.t. }        
        \mathbf{y} \in \mathbf{\Omega}(\mathbf{x}_{sy}, \mathbf{g}_{nn}(\mathbf{x}_{nn}, \mathbf{w}_{nn}))
    \label{eq:inference_primal_app}.
\end{align}
Importantly, note the structure of the deep hinge-loss potentials defining $\mathbf{\Phi}$:
\begin{align}
        & \phi_k(\mathbf{y}, \mathbf{x}_{sy}, \mathbf{g}_{nn}(\mathbf{x}_{nn}, \mathbf{w}_{nn})) \\
        & \quad := (\max\{\mathbf{a}_{\phi_k, y}^T \mathbf{y} + \mathbf{a}_{\phi_k, \mathbf{x}_{sy}}^T \mathbf{x}_{sy} + \mathbf{a}_{\phi_k, \mathbf{g}_{nn}}^T \mathbf{g}_{nn}(\mathbf{x}_{nn}, \mathbf{w}_{nn}) + b_{\phi_k}, 0\})^{p_{k}}.
        \label{eq:deep_hlmrf_potential_app}
\end{align}
The LCQP NeuPSL inference formulation is defined using ordered index sets: $\mathbf{I}_{S}$  for the partitions of squared hinge potentials (indices $k$ which for all $j \in t_{k}$ the exponent term $p_j=2$) and $\mathbf{I}_{L}$ for the partitions of linear hinge potentials (indices $k$ which for all $j \in t_{k}$ the exponent term $p_j=1$).
With the index sets, we define 
{\small
\begin{align}
    \mathbf{W}_{S} :=
        \begin{bmatrix} 
            w_{\mathbf{I}_{S}[1]} \mathbf{I} & 0 & \cdots & 0 \\
            0 & w_{\mathbf{I}_{S}[2]} \mathbf{I} & &  \\
            \vdots & & \ddots & 
        \end{bmatrix} 
   \quad \quad \textrm{and} \quad \quad
    \mathbf{w}_{L} := 
        \begin{bmatrix} 
            w_{\mathbf{I}_{L}[1]} \mathbf{1} \\
           w_{\mathbf{I}_{L}[2]} \mathbf{1} \\
            \vdots 
        \end{bmatrix} 
    \label{eq:symbolic_weight_matrix_and_vector}
\end{align}
}%
Let $m_{S} := \vert \cup_{\mathbf{I}_{S}} t_{k} \vert$ and $m_{L} := \vert \cup_{\mathbf{I}_{L}} t_{k} \vert$, be the total number of squared and linear hinge potentials, respectively, and define slack variables $\mathbf{s}_{S} := [s_{j}]_{j = 1}^{m_{S}}$ and $\mathbf{s}_{L} := [s_{j}]_{j = 1}^{m_{L}}$ for each of the squared and linear hinge potentials, respectively.
NeuPSL inference is equivalent to the following LCQP:
{\small
\begin{subequations}  
\label{eq:lcqp}
\begin{align}
\label{eq:lcqp.1}
        &\min_{\mathbf{y} \in [0, 1]^{n_{y}}, \, \mathbf{s}_{S} \in \mathbb{R}^{m_{S}}, \, \mathbf{s_H} \in \mathbb{R}^{m_{L}}_{+}} \,  
        \mathbf{s}_{S}^T \mathbf{W}_{S} \mathbf{s}_{S} + \mathbf{w}_{L}^T \mathbf{s}_{L} \\
        \label{eq:lcqp.2}
        & \quad \quad\textrm{s.t.} \,         
        \quad  \mathbf{a}_{c_i, \mathbf{y}}^T \mathbf{y} + \mathbf{a}_{c_i, \mathbf{x}_{sy}}^T \mathbf{x}_{sy} + \mathbf{a}_{c_i, \mathbf{g}_{nn}}^T \mathbf{g}_{nn}(\mathbf{x}_{nn}, \mathbf{w}_{nn}) + b_{c_i} \leq 0 \quad \forall \, i =1,\dotsc,q, \\
        \label{eq:lcqp.3}
        & \quad \quad \quad \mathbf{a}_{\phi_j, \mathbf{y}}^T \mathbf{y} + \mathbf{a}_{\phi_j, \mathbf{x}_{sy}}^T \mathbf{x}_{sy} + \mathbf{a}_{\phi_j, \mathbf{g}_{nn}}^T \mathbf{g}_{nn}(\mathbf{x}_{nn}, \mathbf{w}_{nn}) + b_{\phi_j} - s_j \leq 0 \quad \forall j \in I_{S} \cup I_{L}.
\end{align}       
\end{subequations}
}%
We ensure strong convexity by adding a square regularization with parameter $\epsilon$ to the objective.
Let the bound constraints on $\mathbf{y}$ and $\mathbf{s}_{L}$ and linear inequalities in the LCQP be captured by the $(2 \cdot n_{y} + q + m_{S} + 2 \cdot m_{L}) \times (n_{y} + m_{S} + m_{L})$ matrix $\mathbf{A}$ and $(2 \cdot n_{y} + q + m_{S} + 2 \cdot m_{L})$ dimensional vector $\mathbf{b}(\mathbf{x}_{sy}, \mathbf{g}_{nn}(\mathbf{x}_{nn}, \mathbf{w}_{nn}))$.
More formally, $\mathbf{A} := [a_{ij}]$ where $a_{ij}$ is the coefficient of a decision variable in the implicit and explicit constraints in the formulation above:
{\small
\begin{align}
    a_{i, j} := 
    \begin{cases}
        0 & (i \leq q) \, \land \, (j \leq m_{S} + m_{L}) \\
        \mathbf{a}_{c_{i}, \mathbf{y}}[j - (m_{S} + m_{L})] & (i \leq q) \, \land \, (j > m_{S} + m_{L}) \\
        0 & (q < i \leq q + m_{S} + m_{L}) \, \land \, (j \leq m_{S} + m_{L}) \land (j \neq i - q) \\
        -1 & (q < i \leq q + m_{S} + m_{L}) \, \land \, (j \leq m_{S} + m_{L}) \land (j = i - q) \\
        \mathbf{a}_{\phi_{i - q}, \mathbf{y}}[j - (m_{S} + m_{L})] & (q < i \leq q + m_{S} + m_{L}) \, \land \, (j > m_{S} + m_{L}) \\
        0 & (q + m_{S} + m_{L} < i \leq q + m_{S} + 2 \cdot m_{L} + n_{y}) \, \\ & \land \, (j \neq i - (q + m_{L})) \\
        -1 & (q + m_{S} + m_{L} < i \leq q + m_{S} + 2 \cdot m_{L} + n_{y}) \, \\ & \land \, (j = i - (q + m_{L})) \\
        0 & (q + m_{S} + 2 \cdot m_{L} + n_{y} < i \leq q + m_{S} + 2 \cdot m_{L} + 2 \cdot n_{y}) \, \\ & \land \, (j \neq i - (q + m_{S} + m_{L})) \\
        1 & (q + m_{S} + 2 \cdot m_{L} + n_{y} < i \leq q + m_{S} + 2 \cdot m_{L} + 2 \cdot n_{y}) \, \\ & \land \, (j = i - (q + m_{S} + m_{L}))
    \end{cases}
\end{align}
}%
Furthermore, $\mathbf{b}(\mathbf{x}_{sy}, \mathbf{g}_{nn}(\mathbf{x}_{nn}, \mathbf{w}_{nn})) = [b_{i}(\mathbf{x}_{sy}, \mathbf{g}_{nn}(\mathbf{x}_{nn}, \mathbf{w}_{nn}))]$ is the vector of constants corresponding to each constraint in the formulation above:
{\small
\begin{align}
    & b_{i}(\mathbf{x}_{sy}, \mathbf{g}_{nn}(\mathbf{x}_{nn}, \mathbf{w}_{nn})) \\
    & \quad \quad := 
    \begin{cases}
        \mathbf{a}_{c_{i}, \mathbf{x}_{sy}}^T \mathbf{x}_{sy} + \mathbf{a}_{c_{i}, \mathbf{g}_{nn}}^T \mathbf{g}_{nn}(\mathbf{x}_{nn}, \mathbf{w}_{nn}) + b_{c_i} & i \leq q \\
        \mathbf{a}_{\phi_{i - q}, \mathbf{x}_{sy}}^T \mathbf{x}_{sy} + \mathbf{a}_{\phi_{i - q}, \mathbf{g}_{nn}}^T \mathbf{g}_{nn}(\mathbf{x}_{nn}, \mathbf{w}_{nn}) + b_{\phi_{i - q}} & q < i \leq q + m_{S} + m_{L} \\
        0 & q + m_{S} + m_{L} < i \\ & \quad \leq q + m_{S} + 2 \cdot m_{L} + n_{y} \\
        -1 & q + m_{S} + 2 \cdot m_{L} + n_{y} < i \\ & \quad \leq q + m_{S} + 2 \cdot m_{L} + 2 \cdot n_{y}
    \end{cases}
\end{align}
}%
Note that $\mathbf{b}(\mathbf{x}_{sy}, \mathbf{g}_{nn}(\mathbf{x}_{nn}, \mathbf{w}_{nn}))$ is a linear function of the neural network outputs, hence, if $\mathbf{g}_{nn}(\mathbf{x}_{nn}, \mathbf{w}_{nn})$ is a smooth function of the neural parameters, then $\mathbf{b}(\mathbf{x}_{sy}, \mathbf{g}_{nn}(\mathbf{x}_{nn}, \mathbf{w}_{nn}))$ is also smooth.

With this notation, the regularized inference problem is:
{\small
\begin{align}
      V(\mathbf{w}_{sy}, \mathbf{b}(\mathbf{x}_{sy}, \mathbf{g}_{nn}(\mathbf{x}_{nn}, \mathbf{w}_{nn}))) \nonumber \\
      :=  \min_{\mathbf{y}, \mathbf{s_{S}}, \mathbf{s_H}} \, &
        \begin{bmatrix}
            \mathbf{s}_{S} \\ \mathbf{s}_{L} \\ \mathbf{y}
        \end{bmatrix}^T
        \begin{bmatrix}
            \mathbf{W}_{S} + \epsilon I & 0 & 0\\
            0 & \epsilon I & 0 \\
            0 & 0 & \epsilon I \\
        \end{bmatrix}
        \begin{bmatrix}
            \mathbf{s}_{S} \\
            \mathbf{s}_{L} \\
            \mathbf{y}  
        \end{bmatrix} 
        + 
        \begin{bmatrix}
           0 \\ \mathbf{w}_{L} \\ 0
        \end{bmatrix}^T
        \begin{bmatrix}
            \mathbf{s}_{S} \\
            \mathbf{s}_{L} \\
            \mathbf{y}
        \end{bmatrix} \nonumber \\
        \textrm{s.t.} \quad         
        & \mathbf{A}         
        \begin{bmatrix}
            \mathbf{s}_{S} \\
            \mathbf{s}_{L} \\
            \mathbf{y}
        \end{bmatrix} + \mathbf{b}(\mathbf{x}_{sy}, \mathbf{g}_{nn}(\mathbf{x}_{nn}, \mathbf{w}_{nn})) \leq 0 \label{eq:lcqp_primal_appendix}. 
\end{align}
}%
For ease of notation, let 
{
\begin{align}
    D(\mathbf{w}_{sy}) :=         
    \begin{bmatrix}
        \mathbf{W}_{S} & 0 & 0\\
        0 & 0 & 0 \\
        0 & 0 & 0
    \end{bmatrix},
    \,
    \mathbf{c}(\mathbf{w}_{sy}) :=         
    \begin{bmatrix}
        0 \\
        \mathbf{w}_{L} \\
        0 
    \end{bmatrix}, 
    \,
    \mathbf{\nu} :=         
    \begin{bmatrix}
        \mathbf{s}_{S} \\
        \mathbf{s}_{L} \\
        \mathbf{y}
    \end{bmatrix}.
\end{align} 
}%
Then the regularized primal LCQP MAP inference problem is concisely expressed as
{
\begin{align}
    \min_{\mathbf{\nu} \in \mathbb{R}^{n_{\mathbf{y}} + m_{S} + m_{L}}} & \, 
        \mathbf{\nu}^T (\mathbf{D}(\mathbf{w}_{sy}) + \epsilon \mathbf{I}) \mathbf{\nu} + \mathbf{c}(\mathbf{w}_{sy})^T \mathbf{\nu} \label{eq:regularized_lcqp_primal_appendix}\\
        \textrm{s.t.} \quad         
        & \mathbf{A} \mathbf{\nu} + \mathbf{b}(\mathbf{x}_{sy}, \mathbf{g}_{nn}(\mathbf{x}_{nn}, \mathbf{w}_{nn})) \leq 0 \nonumber.
\end{align} 
}%

By Slater's constraint qualification, we have strong-duality when there is a feasible solution.
In this case, an optimal solution to the dual problem yields an optimal solution to the primal problem.
The Lagrange dual problem of \eqref{eq:regularized_lcqp_primal_appendix} is
{\small
\begin{align}
    & \argmax_{\mathbf{\mu}\ge 0} \min_{\mathbf{\nu} \in \mathbb{R}^{n_{\mathbf{y}} + m_{S} + m_{L}}} \, 
        \mathbf{\nu}^T
        (\mathbf{D}(\mathbf{w}_{sy}) + \epsilon \mathbf{I})
        \mathbf{\nu}
        + 
        \mathbf{c}(\mathbf{w}_{sy})^T \mathbf{\nu}
        +
        \mathbf{\mu}^T (\mathbf{A} \mathbf{\nu}
        + \mathbf{b}(\mathbf{x}_{sy}, \mathbf{g}_{nn}(\mathbf{x}_{nn}, \mathbf{w}_{nn}))) \nonumber \\
    & \quad = \argmax_{\mathbf{\mu}\ge 0}
        \,
        - \frac{1}{4} \mathbf{\mu}^T \mathbf{A} (\mathbf{D}(\mathbf{w}_{sy}) + \epsilon \mathbf{I})^{-1} \mathbf{A}^T \mathbf{\mu}  \label{eq:dual_lcqp_appendix} \\ 
        & \quad \quad \quad \quad \quad \quad \quad - \frac{1}{2} (\mathbf{A} (\mathbf{D}(\mathbf{w}_{sy}) + \epsilon \mathbf{I})^{-1} \mathbf{c}(\mathbf{w}_{sy}) - 2 \mathbf{b}(\mathbf{x}_{sy}, \mathbf{g}_{nn}(\mathbf{x}_{nn}, \mathbf{w}_{nn})))^T \mathbf{\mu} \nonumber
\end{align}
}%
where $\mathbf{\mu} = [\mu_{i}]_{i = 1}^{n_{\mu}}$ are the Lagrange dual variables.
For later reference, denote the negative of the Lagrange dual function of MAP inference as: 
\begin{align}
    & h(\mathbf{\mu}; \mathbf{w}_{sy}, \mathbf{b}(\mathbf{x}_{sy}, \mathbf{g}_{nn}(\mathbf{x}_{nn}, \mathbf{w}_{nn}))) \label{eq:lcqp_lagrangian_appendix} \\
        & := \frac{1}{4} \mathbf{\mu}^T \mathbf{A} (\mathbf{D}(\mathbf{w}_{sy}) + \epsilon \mathbf{I})^{-1} \mathbf{A}^T \mathbf{\mu} + \frac{1}{2} (\mathbf{A} (\mathbf{D}(\mathbf{w}_{sy}) + \epsilon \mathbf{I})^{-1} \mathbf{c}(\mathbf{w}_{sy}) \\
        & \quad - 2 \mathbf{b}(\mathbf{x}_{sy}, \mathbf{g}_{nn}(\mathbf{x}_{nn}, \mathbf{w}_{nn})))^T \mathbf{\mu}. \nonumber
\end{align}
The dual LCQP has more decision variables but is only over non-negativity constraints rather than the complex polyhedron feasible set.
The dual-to-primal variable translation is:
\begin{align}
    \mathbf{\nu} = - \frac{1}{2} (\mathbf{D}(\mathbf{w}_{sy}) + \epsilon \mathbf{I})^{-1} (\mathbf{A}^T \mathbf{\mu} + \mathbf{c}(\mathbf{w}_{sy}))
    \label{eq:dual_primal_translation_appendix}
\end{align}
As $(\mathbf{D}(\mathbf{w}_{sy}) + \epsilon \mathbf{I})$ is diagonal, it is easy to invert and hence it is practical to work in the dual space to obtain a solution to the primal problem.

\subsection{Extended Continuity of Inference}
\label{appendix:continuity_of_inference}

We now provide background on sensitivity analysis that we then apply in our proofs on the continuity properties of NeuPSL inference.

\subsubsection{Background}
\label{appendix:continuity_of_inference_preliminaries}

\begin{theorem}[\cite{boyd:book04} p. 81]
    \label{thm:pointwise_supremum_over_convex}
    If for each $\mathbf{y} \in \mathcal{A}$, $f(\mathbf{x}, \mathbf{y})$ is convex in $\mathbf{x}$ then the function
    \begin{align}
        g(\mathbf{x}) := \sup_{\mathbf{y} \in \mathcal{A}} f(\mathbf{x}, \mathbf{y})
    \end{align}
    is convex in $\mathbf{x}$.
\end{theorem}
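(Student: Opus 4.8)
The plan is to verify the defining inequality of convexity directly, using the pointwise hypothesis and the order-preservation of the supremum. Fix arbitrary points $\mathbf{x}_1, \mathbf{x}_2$ in the common domain and a scalar $\theta \in [0, 1]$; the goal is to establish $g(\theta \mathbf{x}_1 + (1 - \theta) \mathbf{x}_2) \leq \theta g(\mathbf{x}_1) + (1 - \theta) g(\mathbf{x}_2)$.

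First I would exploit the convexity hypothesis one index at a time: for each fixed $\mathbf{y} \in \mathcal{A}$, the map $f(\cdot, \mathbf{y})$ is convex, so $f(\theta \mathbf{x}_1 + (1 - \theta)\mathbf{x}_2, \mathbf{y}) \leq \theta f(\mathbf{x}_1, \mathbf{y}) + (1 - \theta) f(\mathbf{x}_2, \mathbf{y})$. Next I bound each term on the right by its corresponding supremum, since $f(\mathbf{x}_1, \mathbf{y}) \leq g(\mathbf{x}_1)$ and $f(\mathbf{x}_2, \mathbf{y}) \leq g(\mathbf{x}_2)$ hold by the definition of $g$. Because $\theta$ and $1 - \theta$ are nonnegative, this chain yields $f(\theta \mathbf{x}_1 + (1 - \theta)\mathbf{x}_2, \mathbf{y}) \leq \theta g(\mathbf{x}_1) + (1 - \theta) g(\mathbf{x}_2)$, a bound that is uniform over $\mathbf{y} \in \mathcal{A}$. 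Finally, since the right-hand side is independent of $\mathbf{y}$, taking the supremum over $\mathbf{y} \in \mathcal{A}$ on the left preserves the inequality and produces exactly the convexity inequality for $g$.

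There is no substantial technical obstacle here; the argument is short and the only care needed is notational. The one subtlety worth flagging is the extended-real-valued case: if $g(\mathbf{x}_1)$ or $g(\mathbf{x}_2)$ equals $+\infty$ the inequality holds trivially, so one restricts attention to the effective domain where $g$ is finite, and the supremum is interpreted in the extended reals throughout. An equally clean alternative route, which I would mention as a remark, is the epigraph characterization: one checks that $\mathrm{epi}(g) = \bigcap_{\mathbf{y} \in \mathcal{A}} \mathrm{epi}\big(f(\cdot, \mathbf{y})\big)$, observes that each $\mathrm{epi}(f(\cdot, \mathbf{y}))$ is convex by the hypothesis, and concludes that their arbitrary intersection is convex; since a function is convex precisely when its epigraph is convex, this immediately gives the claim. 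Either path is elementary, and the result is standard \citep{boyd:book04}; I would present the direct inequality argument as the main proof.
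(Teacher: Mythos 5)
Your proof is correct. The paper does not supply its own proof of this statement --- it is quoted as a standard background fact from \cite{boyd:book04} (p.~81) in the preliminaries of the appendix --- so there is nothing to compare against line by line; your direct verification of the convexity inequality (apply convexity of $f(\cdot,\mathbf{y})$ for each fixed $\mathbf{y}$, bound each term by the corresponding supremum using nonnegativity of $\theta$ and $1-\theta$, then take the supremum over $\mathbf{y}$ of the $\mathbf{y}$-uniform bound) is the standard argument and is complete, including the correct handling of the extended-real-valued case. For what it is worth, the cited source itself leans on the epigraph characterization you mention in your closing remark ($\mathrm{epi}(g)$ as the intersection of the convex sets $\mathrm{epi}(f(\cdot,\mathbf{y}))$), so either of your two routes matches the textbook treatment; the direct inequality version you chose as the main proof has the minor advantage of not requiring the epigraph--convexity equivalence as a prerequisite.
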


\begin{theorem}[\cite{boyd:book04} p. 81]
    \label{thm:pointwise_infimum_over_concave}
    If for each $\mathbf{y} \in \mathcal{A}$, $f(\mathbf{x}, \mathbf{y})$ is concave in $\mathbf{x}$ then the function
    \begin{align}
        g(\mathbf{x}) := \inf_{\mathbf{y} \in \mathcal{A}} f(\mathbf{x}, \mathbf{y})
    \end{align}
    is concave in $\mathbf{x}$.
\end{theorem}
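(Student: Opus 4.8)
The plan is to establish concavity of $g$ directly from the definition, mirroring the argument behind the preceding \thmref{thm:pointwise_supremum_over_convex} but with all inequalities reversed. The cleanest route, however, is to reduce the claim to that already-stated theorem by a single sign flip, so I would present the reduction as the primary argument and record the direct verification as an alternative.

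For the reduction, I would define $\tilde{f}(\mathbf{x}, \mathbf{y}) := -f(\mathbf{x}, \mathbf{y})$. Since $f(\cdot, \mathbf{y})$ is concave in $\mathbf{x}$ for each fixed $\mathbf{y} \in \mathcal{A}$, the function $\tilde{f}(\cdot, \mathbf{y})$ is convex in $\mathbf{x}$ for each such $\mathbf{y}$. Applying \thmref{thm:pointwise_supremum_over_convex} to the family $\{\tilde{f}(\cdot, \mathbf{y})\}_{\mathbf{y} \in \mathcal{A}}$ shows that $\tilde{g}(\mathbf{x}) := \sup_{\mathbf{y} \in \mathcal{A}} \tilde{f}(\mathbf{x}, \mathbf{y})$ is convex in $\mathbf{x}$. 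The final step is the elementary identity $\sup_{\mathbf{y} \in \mathcal{A}} (-f(\mathbf{x}, \mathbf{y})) = -\inf_{\mathbf{y} \in \mathcal{A}} f(\mathbf{x}, \mathbf{y})$, so that $\tilde{g}(\mathbf{x}) = -g(\mathbf{x})$; since a function is concave precisely when its negative is convex, convexity of $\tilde{g}$ gives concavity of $g$. As a self-contained alternative, I would fix $\mathbf{x}_1, \mathbf{x}_2$ and $\theta \in [0,1]$: for each $\mathbf{y} \in \mathcal{A}$, concavity of $f(\cdot, \mathbf{y})$ gives $f(\theta \mathbf{x}_1 + (1-\theta)\mathbf{x}_2, \mathbf{y}) \geq \theta f(\mathbf{x}_1, \mathbf{y}) + (1-\theta) f(\mathbf{x}_2, \mathbf{y})$, and since $f(\mathbf{x}_i, \mathbf{y}) \geq g(\mathbf{x}_i)$ by definition of the infimum, the right-hand side is bounded below by $\theta g(\mathbf{x}_1) + (1-\theta) g(\mathbf{x}_2)$. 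Taking the infimum over $\mathbf{y} \in \mathcal{A}$ on the left preserves this lower bound and yields the concavity inequality for $g$.

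The main obstacle — indeed essentially the only subtlety — is the treatment of extended real values: the infimum defining $g$ may equal $-\infty$ at some points, so I would state the inequalities in the extended reals and note that the concavity inequality persists under the usual arithmetic conventions, with restriction to the effective domain $\{\mathbf{x} : g(\mathbf{x}) > -\infty\}$ recovering the standard finite-valued statement. Beyond this bookkeeping the argument is immediate and requires no structural hypotheses on the index set $\mathcal{A}$, which is precisely why the result applies to the Lagrange dual objective arising in the LCQP analysis.
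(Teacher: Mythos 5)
Your proof is correct, and both routes you give are the standard ones. Note that the paper itself supplies no proof of this statement: it is quoted as a background fact from \citet{boyd:book04} (p.~81), alongside its supremum counterpart, so there is no in-paper argument to diverge from. Your sign-flip reduction to the supremum theorem and your direct verification (bounding $f(\theta\mathbf{x}_1+(1-\theta)\mathbf{x}_2,\mathbf{y})$ below by $\theta g(\mathbf{x}_1)+(1-\theta)g(\mathbf{x}_2)$ for every $\mathbf{y}$ and then taking the infimum) are both sound, and your attention to the extended-real case $g(\mathbf{x})=-\infty$ is exactly the bookkeeping needed where the paper applies this result, since the Lagrangian infimum defining the dual function can be unbounded below.
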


\begin{definition}[Convex Subgradient: \cite{boyd:book04} and \cite{shalevshwartz:ftml11}]
    Consider a convex function $f: \mathbb{R}^{n} \to [-\infty, \infty]$ and a point $\overline{\mathbf{x}}$ with $f(\overline{\mathbf{x}})$ finite.
    For a vector $\mathbf{v} \in \mathbf{R}^{n}$, one says that $\mathbf{v}$ is a (convex) subgradient of $f$ at $\overline{\mathbf{x}}$, written $\mathbf{v} \in \partial f(\overline{\mathbf{x}})$, iff
    \begin{align}
        f(\mathbf{x}) \geq f(\overline{\mathbf{x}}) + <\mathbf{v}, \mathbf{x} - \overline{\mathbf{x}}>, \quad \forall \mathbf{x} \in \mathbf{R}^{n}.
    \end{align}
\end{definition}

\begin{definition}[Closedness: \citenoun{bertsekas:book09}]
    If the epigraph of a function $f : \mathbb{R}^{n} \to [-\infty, \infty]$ is a closed set, we say that $f$ is a closed function.
\end{definition}

\begin{definition}[Lower Semicontinuity: \cite{bertsekas:book09}]
    The function $f : \mathbb{R}^{n} \to [-\infty, \infty]$ is \textit{lower semicontinuous} (lsc) at a point $\overline{\mathbf{x}} \in \mathbb{R}^{n}$ if
    \begin{align}
        f(\overline{\mathbf{x}}) \leq \liminf_{k \to \infty} f(\mathbf{x}_{k}),
    \end{align}
    for every sequence $\{\mathbf{x}_{k}\} \subset \mathbb{R}^{n}$ with $\mathbf{x}_{k} \to \overline{\mathbf{x}}$.
    We say $f$ is \textit{lsc} if it is lsc at each $\overline{\mathbf{x}}$ in its domain.
\end{definition}

\begin{theorem}[Closedness and Semicontinuity: \cite{bertsekas:book09} Proposition 1.1.2.]
    For a function $f: \mathbb{R}^{n} \to [-\infty, \infty]$, the following are equivalent:
    \begin{enumerate}
        \item The level set $V_{\gamma} = \{\mathbf{x} \, \vert \, f(\mathbf{x}) \leq \gamma \}$ is closed for every scalar $\gamma$.
        \item $f$ is lsc.
        \item $f$ is closed.
    \end{enumerate}
\end{theorem}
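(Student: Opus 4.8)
The plan is to establish the three equivalences by proving the cycle of implications $(1) \Rightarrow (2) \Rightarrow (3) \Rightarrow (1)$, working directly from the sequential definitions of the level sets $V_\gamma$, lower semicontinuity, and the epigraph $\operatorname{epi}(f) = \{(x,w) \in \mathbb{R}^n \times \mathbb{R} : f(x) \le w\}$. Throughout I would keep careful track of the extended-real values $\pm\infty$, since these are precisely the spots where an otherwise-routine set-theoretic argument can break.

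For $(1) \Rightarrow (2)$, I would assume every $V_\gamma$ is closed and verify lower semicontinuity pointwise. Fixing $\bar{x}$ and a sequence $x_k \to \bar{x}$, set $\gamma := \liminf_{k\to\infty} f(x_k)$. If $\gamma = +\infty$ the inequality $f(\bar{x}) \le \gamma$ is immediate. If $\gamma$ is finite, pass to a subsequence with $f(x_{k_j}) \to \gamma$; then for each $\epsilon > 0$ we have $f(x_{k_j}) \le \gamma + \epsilon$ eventually, so $x_{k_j} \in V_{\gamma + \epsilon}$ for large $j$, and closedness of $V_{\gamma+\epsilon}$ together with $x_{k_j} \to \bar{x}$ forces $\bar{x} \in V_{\gamma+\epsilon}$, i.e. $f(\bar{x}) \le \gamma + \epsilon$; letting $\epsilon \downarrow 0$ gives $f(\bar{x}) \le \gamma$. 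The remaining case $\gamma = -\infty$ is handled identically using an arbitrary real level $M$ in place of $\gamma + \epsilon$ and letting $M \to -\infty$, yielding $f(\bar{x}) = -\infty$.

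For $(2) \Rightarrow (3)$, I would prove $\operatorname{epi}(f)$ closed by a sequential argument: taking $(x_k, w_k) \in \operatorname{epi}(f)$ with $(x_k,w_k) \to (\bar{x}, \bar{w})$, we have $f(x_k) \le w_k$, so lower semicontinuity gives $f(\bar{x}) \le \liminf_k f(x_k) \le \liminf_k w_k = \bar{w}$, whence $(\bar{x},\bar{w}) \in \operatorname{epi}(f)$. For $(3) \Rightarrow (1)$, I would note that for any real $\gamma$ the equivalence $(x,\gamma) \in \operatorname{epi}(f) \iff f(x) \le \gamma \iff x \in V_\gamma$ yields the slice identity $V_\gamma \times \{\gamma\} = \operatorname{epi}(f) \cap (\mathbb{R}^n \times \{\gamma\})$; as an intersection of two closed sets this is closed in $\mathbb{R}^{n+1}$, and therefore its projection $V_\gamma$ is closed in $\mathbb{R}^n$.

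The main obstacle — modest, for such a standard result — is disciplined bookkeeping of the extended values rather than any genuine difficulty. The $(1)\Rightarrow(2)$ step needs the subsequence-and-$\epsilon$ device precisely because the $\liminf$ is approached but not necessarily attained, and because $\gamma$ may be $-\infty$. I would also flag that the epigraph is taken with the second coordinate $w$ ranging over $\mathbb{R}$, so a point where $f = +\infty$ contributes no pair to $\operatorname{epi}(f)$ while a point where $f = -\infty$ contributes an entire vertical line; checking that the slice identity in $(3)\Rightarrow(1)$ respects both of these degenerate cases is the one place worth stating explicitly.
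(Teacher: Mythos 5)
Your proof is correct, and there is nothing in the paper to compare it against: the statement is quoted verbatim as background material in the appendix (citing Bertsekas, Proposition 1.1.2) and is never proved there, while your cyclic argument $(1)\Rightarrow(2)\Rightarrow(3)\Rightarrow(1)$, with the subsequence-and-$\epsilon$ device and the separate $\gamma=\pm\infty$ cases, is exactly the standard proof found in the cited reference. The only wording worth tightening is ``projection'' in the step $(3)\Rightarrow(1)$: projections of closed sets are not closed in general, and your argument is valid only because $V_\gamma \times \{\gamma\}$ lies in the fixed slice $\mathbb{R}^n \times \{\gamma\}$, so closedness of $V_\gamma$ follows from the homeomorphism $x \mapsto (x,\gamma)$ (equivalently, from a direct sequential check) rather than from any general projection principle.
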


The following definition and theorem are from \cite{rockafellar:book97} and they generalize the notion of subgradients to non-convex functions and the chain rule of differentiation, respectively.
For complete statements see \cite{rockafellar:book97} \cite{rockafellar:book97}.

\begin{definition}[Regular Subgradient: \cite{rockafellar:book97} Definition 8.3]
    Consider a function $ f: \mathbb{R}^{n} \to [-\infty, \infty] $ and a point $\overline{\mathbf{x}}$ with $f(\overline{\mathbf{x}})$ finite.
    For a vector $\mathbf{v} \in \mathbf{R}^{n}$, one says that $\mathbf{v}$ is a regular subgradient of $f$ at $\overline{\mathbf{x}}$, written $\mathbf{v} \in \hat{\partial} f(\overline{\mathbf{x}})$, iff
    \begin{align}
        f(\mathbf{x}) \geq f(\overline{\mathbf{x}}) + \langle \mathbf{v}, \mathbf{x} - \overline{\mathbf{x}} \rangle + \textrm{o}(\mathbf{x} - \overline{\mathbf{x}}), \quad \forall \mathbf{x} \in \mathbf{R}^{n},
    \end{align}
    where the $\textrm{o}(t)$ notation indicates a term with the property that 
    \begin{align}
        \lim_{t \to 0} \frac{\textrm{o}(t)}{t} = 0.
    \end{align}
\end{definition}

The relation of the regular subgradient defined above and the more familiar convex subgradient is the addition of the $o(\mathbf{x} - \mathbf{\overline{x}})$ term.
Evidently, a convex subgradient is a regular subgradient.

\begin{theorem}[Chain Rule for Regular Subgradients: \cite{rockafellar:book97} Theorem 10.6]
\label{theorem:subgradient_chain_rule}
Suppose $f(\mathbf{x}) = g(F(\mathbf{x}))$ for a proper, lsc function $g : \mathbb{R}^{m} \to [-\infty, \infty]$ and a smooth mapping $F : \mathbb{R}^{n} \to \mathbb{R}^{m}$.
Then at any point $\overline{\mathbf{x}} \in \textrm{dom} \, f = F^{-1} (\textrm{dom} \, g)$ one has
\begin{align}
    \hat{\partial} f(\overline{\mathbf{x}}) & \supset \nabla F(\overline{\mathbf{x}}) ^T \hat{\partial} g(F(\overline{\mathbf{x}})),
\end{align}
where $\nabla F(\overline{\mathbf{x}}) ^T$ is the Jacobian of $F$ at $\overline{\mathbf{x}}$.
\end{theorem}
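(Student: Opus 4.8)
The plan is to prove the inclusion by unwinding the definition of the regular subgradient and feeding a first-order expansion of the smooth outer map $F$ into the defining inequality for $g$. Since $\hat{\partial} f(\overline{\mathbf{x}})$ is precisely the set of vectors satisfying its defining inequality, it suffices to fix an arbitrary $\mathbf{w} \in \hat{\partial} g(\overline{\mathbf{u}})$ with $\overline{\mathbf{u}} := F(\overline{\mathbf{x}})$ and to show that $\mathbf{v} := \nabla F(\overline{\mathbf{x}})^T \mathbf{w}$ belongs to $\hat{\partial} f(\overline{\mathbf{x}})$; because $\mathbf{w}$ is then arbitrary, this establishes $\nabla F(\overline{\mathbf{x}})^T \hat{\partial} g(F(\overline{\mathbf{x}})) \subset \hat{\partial} f(\overline{\mathbf{x}})$, which is the claimed $\supset$. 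By hypothesis $\overline{\mathbf{x}} \in \textrm{dom}\, f = F^{-1}(\textrm{dom}\, g)$, so $f(\overline{\mathbf{x}}) = g(\overline{\mathbf{u}})$ is finite and both defining inequalities are meaningful.

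First I would record the two analytic ingredients. Smoothness of $F$ gives the Taylor expansion $F(\mathbf{x}) - F(\overline{\mathbf{x}}) = \nabla F(\overline{\mathbf{x}})(\mathbf{x} - \overline{\mathbf{x}}) + \textrm{o}(\mathbf{x} - \overline{\mathbf{x}})$, while continuity of $\nabla F$ furnishes a local Lipschitz bound $\|F(\mathbf{x}) - F(\overline{\mathbf{x}})\| \le L \|\mathbf{x} - \overline{\mathbf{x}}\|$ for $\mathbf{x}$ near $\overline{\mathbf{x}}$. Since $\mathbf{w} \in \hat{\partial} g(\overline{\mathbf{u}})$, writing $\mathbf{u} := F(\mathbf{x})$ we have $g(\mathbf{u}) \ge g(\overline{\mathbf{u}}) + \langle \mathbf{w}, \mathbf{u} - \overline{\mathbf{u}}\rangle + \textrm{o}(\mathbf{u} - \overline{\mathbf{u}})$.

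Next I would substitute $\mathbf{u} = F(\mathbf{x})$ and re-express everything in terms of $\mathbf{x} - \overline{\mathbf{x}}$. Using $f(\mathbf{x}) = g(F(\mathbf{x}))$ and $f(\overline{\mathbf{x}}) = g(\overline{\mathbf{u}})$, the linear term transforms as $\langle \mathbf{w}, \mathbf{u} - \overline{\mathbf{u}}\rangle = \langle \mathbf{w}, \nabla F(\overline{\mathbf{x}})(\mathbf{x} - \overline{\mathbf{x}})\rangle + \langle \mathbf{w}, \textrm{o}(\mathbf{x} - \overline{\mathbf{x}})\rangle = \langle \mathbf{v}, \mathbf{x} - \overline{\mathbf{x}}\rangle + \textrm{o}(\mathbf{x} - \overline{\mathbf{x}})$, where the adjoint identity $\langle \mathbf{w}, \nabla F(\overline{\mathbf{x}})\,\mathbf{d}\rangle = \langle \nabla F(\overline{\mathbf{x}})^T \mathbf{w}, \mathbf{d}\rangle$ produces $\mathbf{v}$ and Cauchy--Schwarz absorbs $\langle \mathbf{w}, \textrm{o}(\cdot)\rangle$ into $\textrm{o}(\mathbf{x} - \overline{\mathbf{x}})$ for fixed $\mathbf{w}$. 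This gives $f(\mathbf{x}) \ge f(\overline{\mathbf{x}}) + \langle \mathbf{v}, \mathbf{x} - \overline{\mathbf{x}}\rangle + \textrm{o}(\mathbf{x} - \overline{\mathbf{x}}) + \textrm{o}(\mathbf{u} - \overline{\mathbf{u}})$, which is the desired inequality as soon as the trailing remainder is itself $\textrm{o}(\mathbf{x} - \overline{\mathbf{x}})$.

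The hard part will be exactly that conversion of remainders, $\textrm{o}(\mathbf{u} - \overline{\mathbf{u}}) = \textrm{o}(\mathbf{x} - \overline{\mathbf{x}})$, since $\mathbf{u}$ depends nonlinearly on $\mathbf{x}$. I would argue it through the quotient form: for $\mathbf{u} \ne \overline{\mathbf{u}}$ one writes $\frac{|\textrm{o}(\mathbf{u} - \overline{\mathbf{u}})|}{\|\mathbf{x} - \overline{\mathbf{x}}\|} = \frac{|\textrm{o}(\mathbf{u} - \overline{\mathbf{u}})|}{\|\mathbf{u} - \overline{\mathbf{u}}\|} \cdot \frac{\|\mathbf{u} - \overline{\mathbf{u}}\|}{\|\mathbf{x} - \overline{\mathbf{x}}\|}$, where the first factor tends to $0$ as $\mathbf{x} \to \overline{\mathbf{x}}$ (by continuity of $F$, $\mathbf{u} \to \overline{\mathbf{u}}$) and the second is bounded by $L$ via the local Lipschitz estimate; the degenerate case $\mathbf{u} = \overline{\mathbf{u}}$, $\mathbf{x} \ne \overline{\mathbf{x}}$ is harmless because the remainder vanishes there by convention. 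Collapsing the two $\textrm{o}$ terms then yields $\mathbf{v} \in \hat{\partial} f(\overline{\mathbf{x}})$, completing the inclusion. I would note that lower semicontinuity and properness of $g$ serve only to guarantee $g$, and hence $f$, is finite at the base point for this one-sided statement; they become genuinely essential only for the reverse inclusion and the attendant constraint qualifications, which are not asserted here.
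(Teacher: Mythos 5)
Your proof is correct. The paper itself gives no proof of this statement---it is imported verbatim from \cite{rockafellar:book97} (Theorem 10.6) as a background tool for the proof of Theorem 11---and your direct verification from the definition is exactly the standard argument for this inclusion, including the only delicate step, the remainder conversion $\textrm{o}(\mathbf{u} - \overline{\mathbf{u}}) = \textrm{o}(\mathbf{x} - \overline{\mathbf{x}})$ via the quotient bound with the local Lipschitz constant of $F$ and the degenerate case $F(\mathbf{x}) = F(\overline{\mathbf{x}})$, which you handle properly. Your closing remark is also accurate: properness of $g$ is needed only to make $f(\overline{\mathbf{x}})$ finite, and lower semicontinuity plays no role in this one-sided inclusion (it becomes essential in the full statement of the cited theorem, which also treats general and horizon subgradients and the reverse inclusion under a constraint qualification).
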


\begin{theorem}[Danskin's Theorem: \cite{danskin:siam66} and \cite{bertsekas:phd71} Proposition A.22]
    \label{thm:danskin}
    Suppose $\mathcal{Z} \subseteq \mathbb{R}^{m}$ is a compact set and $g(\mathbf{x}, \mathbf{z}): \mathbb{R}^{n} \times \mathcal{Z} \to (-\infty, \infty]$ is a function.
    Suppose $g(\cdot, \mathbf{z}): \mathbb{R}^{n} \to \mathbb{R}$ is closed proper convex function for every $\mathbf{z} \in \mathcal{Z}$.
    Further, define the function $f: \mathbb{R}^{n} \to \mathbb{R}$ such that
    \begin{align}
        f(\mathbf{x}) := \max_{\mathbf{z} \in \mathcal{Z}} g(\mathbf{x}, \mathbf{z}). \nonumber
    \end{align}
    Suppose $f$ is finite somewhere.
    Moreover, let $\mathcal{X} := \textrm{int}(\textrm{dom}f)$, i.e., the interior of the set of points in $\mathbb{R}^{n}$ such that $f$ is finite.
    Suppose $g$ is continuous on $\mathcal{X} \times \mathcal{Z}$. 
    Further, define the set of maximizing points of $g(\mathbf{x}, \cdot)$ for each $\mathbf{x}$
    \begin{align}
        Z(\mathbf{x}) = \argmax_{\mathbf{z} \in \mathcal{Z}} g(\mathbf{x}, \mathbf{z}). \nonumber
    \end{align}
    Then the following properties of $f$ hold.
    \begin{enumerate}
        \item The function $f(\mathbf{x})$ is a closed proper convex function.
        \item For every $\mathbf{x} \in \mathcal{X}$,
        \begin{align}
            \partial f(\mathbf{x}) = \textrm{conv}\left \{ 
                \partial_{\mathbf{x}} g(\mathbf{x}, \mathbf{z}) \, \vert \, \mathbf{z} \in Z(\mathbf{x})
            \right \}.
        \end{align}
    \end{enumerate}
\end{theorem}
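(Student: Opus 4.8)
The plan is to treat the two conclusions separately: the structural claim (Part 1) reduces to machinery already available, while the subdifferential formula (Part 2) is the genuine content and I would route it through the one-sided directional derivative. For Part 1, convexity of $f$ is immediate from \thmref{thm:pointwise_supremum_over_convex}, since $f(\mathbf{x}) = \sup_{\mathbf{z} \in \mathcal{Z}} g(\mathbf{x}, \mathbf{z})$ is a pointwise supremum of the convex functions $g(\cdot, \mathbf{z})$. Closedness I would obtain from the epigraphical identity $\textrm{epi}(f) = \bigcap_{\mathbf{z} \in \mathcal{Z}} \textrm{epi}(g(\cdot, \mathbf{z}))$: each $g(\cdot, \mathbf{z})$ is closed, so each epigraph is closed, an arbitrary intersection of closed sets is closed, and the equivalence of closedness with lower semicontinuity (\cite{bertsekas:book09}) then makes $f$ closed. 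Properness follows because $f > -\infty$ everywhere (each $g(\cdot, \mathbf{z})$ is proper) while $f$ is finite somewhere by hypothesis.

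The heart of Part 2 is the interchange formula $f'(\mathbf{x}; \mathbf{d}) = \max_{\mathbf{z} \in Z(\mathbf{x})} g'_{\mathbf{x}}(\mathbf{x}, \mathbf{z}; \mathbf{d})$, where $g'_{\mathbf{x}}$ denotes the directional derivative of $g(\cdot, \mathbf{z})$. The $\geq$ direction is elementary: for fixed $\mathbf{z}^{*} \in Z(\mathbf{x})$ one has $f(\mathbf{x} + t\mathbf{d}) \geq g(\mathbf{x} + t\mathbf{d}, \mathbf{z}^{*})$ with $f(\mathbf{x}) = g(\mathbf{x}, \mathbf{z}^{*})$, so dividing by $t$ and letting $t \downarrow 0$ gives $f'(\mathbf{x}; \mathbf{d}) \geq g'_{\mathbf{x}}(\mathbf{x}, \mathbf{z}^{*}; \mathbf{d})$, and I maximize over $\mathbf{z}^{*}$. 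The $\leq$ direction is delicate: along any $t_{k} \downarrow 0$ I pick maximizers $\mathbf{z}_{k} \in Z(\mathbf{x} + t_{k}\mathbf{d})$, extract $\mathbf{z}_{k} \to \bar{\mathbf{z}}$ by compactness of $\mathcal{Z}$, and verify $\bar{\mathbf{z}} \in Z(\mathbf{x})$ using joint continuity of $g$. Then using $f(\mathbf{x}) \geq g(\mathbf{x}, \mathbf{z}_{k})$ together with monotonicity of convex difference quotients, I bound $\tfrac{f(\mathbf{x}+t_{k}\mathbf{d}) - f(\mathbf{x})}{t_{k}}$ above by $\tfrac{g(\mathbf{x}+\tau\mathbf{d}, \mathbf{z}_{k}) - g(\mathbf{x}, \mathbf{z}_{k})}{\tau}$ for a fixed $\tau \geq t_{k}$, let $k \to \infty$ (passing $\mathbf{z}_{k} \to \bar{\mathbf{z}}$ by continuity), and finally $\tau \downarrow 0$ to reach $g'_{\mathbf{x}}(\mathbf{x}, \bar{\mathbf{z}}; \mathbf{d}) \leq \max_{\mathbf{z} \in Z(\mathbf{x})} g'_{\mathbf{x}}(\mathbf{x}, \mathbf{z}; \mathbf{d})$.

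With the interchange formula in hand I would convert directional derivatives into support functions. On $\mathcal{X}$ each $\partial_{\mathbf{x}} g(\mathbf{x}, \mathbf{z})$ is nonempty, convex, and compact, and $g'_{\mathbf{x}}(\mathbf{x}, \mathbf{z}; \mathbf{d}) = \sup_{\mathbf{v} \in \partial_{\mathbf{x}} g(\mathbf{x}, \mathbf{z})} \langle \mathbf{v}, \mathbf{d}\rangle$, whence $f'(\mathbf{x}; \mathbf{d}) = \sup_{\mathbf{v} \in S} \langle \mathbf{v}, \mathbf{d}\rangle$ with $S := \bigcup_{\mathbf{z} \in Z(\mathbf{x})} \partial_{\mathbf{x}} g(\mathbf{x}, \mathbf{z})$. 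Thus $f'(\mathbf{x}; \cdot)$ is the support function of $S$, hence of $\overline{\textrm{conv}}(S)$, and since $f$ is convex and finite near $\mathbf{x}$, $\partial f(\mathbf{x})$ is precisely the closed convex set whose support function is $f'(\mathbf{x}; \cdot)$; therefore $\partial f(\mathbf{x}) = \overline{\textrm{conv}}(S)$. To drop the closure and match the stated $\textrm{conv}$, I would show $S$ is compact: $Z(\mathbf{x})$ is closed in the compact $\mathcal{Z}$ (by continuity of $g$), the subdifferentials are uniformly bounded near $\mathbf{x}$ by local Lipschitzness, and the graph $\{(\mathbf{z}, \mathbf{v}) : \mathbf{z} \in Z(\mathbf{x}),\, \mathbf{v} \in \partial_{\mathbf{x}} g(\mathbf{x}, \mathbf{z})\}$ is closed; being closed and bounded it is compact, and $S$ is its continuous projection, hence compact. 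In finite dimensions the convex hull of a compact set is compact (Carathéodory), so $\overline{\textrm{conv}}(S) = \textrm{conv}(S)$.

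The main obstacle I anticipate is the $\leq$ half of the interchange formula: it is exactly where compactness of $\mathcal{Z}$ and joint continuity of $g$ are indispensable, and it demands the careful two-parameter limit (first $k \to \infty$ at fixed $\tau$, then $\tau \downarrow 0$) so as to exploit convex monotonicity of difference quotients without assuming any differentiability of $g$. The closedness-of-convex-hull cleanup is secondary but hinges on the uniform local boundedness of the subdifferentials, which I would derive from the local Lipschitz continuity that finite convex functions $g(\cdot, \mathbf{z})$ enjoy on the interior of their domain.
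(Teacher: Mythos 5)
The paper never actually proves this statement: it is listed as imported background in the appendix, cited to \cite{danskin:siam66} and Proposition~A.22 of \cite{bertsekas:phd71}, and consumed as a black box (via \corollaryref{corollary:danskins_unique_solution}) inside the proof of \thmref{thm:continuity_properties}. So the only meaningful comparison is with the classical proof in the cited sources, and your argument is a correct reconstruction of exactly that proof. Part 1 is handled as in the paper's own toolkit: convexity from \thmref{thm:pointwise_supremum_over_convex}, closedness from $\textrm{epi}(f)=\bigcap_{\mathbf{z}\in\mathcal{Z}}\textrm{epi}(g(\cdot,\mathbf{z}))$, properness from $f>-\infty$ plus finiteness somewhere. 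Part 2 follows the Danskin--Bertsekas route: the interchange formula $f'(\mathbf{x};\mathbf{d})=\max_{\mathbf{z}\in Z(\mathbf{x})}g'_{\mathbf{x}}(\mathbf{x},\mathbf{z};\mathbf{d})$, with the easy $\geq$ half from $f(\mathbf{x}+t\mathbf{d})\geq g(\mathbf{x}+t\mathbf{d},\mathbf{z}^{*})$ and the delicate $\leq$ half via maximizers $\mathbf{z}_{k}\in Z(\mathbf{x}+t_{k}\mathbf{d})$, compactness of $\mathcal{Z}$, joint continuity to force $\bar{\mathbf{z}}\in Z(\mathbf{x})$, and the two-parameter limit (fixed $\tau$, then $k\to\infty$, then $\tau\downarrow 0$) exploiting monotonicity of convex difference quotients; then identification of $f'(\mathbf{x};\cdot)$ as the support function of $S=\bigcup_{\mathbf{z}\in Z(\mathbf{x})}\partial_{\mathbf{x}}g(\mathbf{x},\mathbf{z})$, hence $\partial f(\mathbf{x})=\overline{\textrm{conv}}(S)$, and compactness of $S$ plus Carath\'eodory to drop the closure. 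Compactness of $\mathcal{Z}$ and continuity on $\mathcal{X}\times\mathcal{Z}$ enter exactly where they are indispensable; this matches the cited Proposition~A.22 in structure and substance.

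Two spots in your sketch deserve tightening, though both are fillable and neither is a genuine gap. First, the uniform bound on the subdifferentials over $\mathbf{z}\in Z(\mathbf{x})$ does not follow from the per-$\mathbf{z}$ local Lipschitzness you invoke, since the Lipschitz constant of $g(\cdot,\mathbf{z})$ a priori depends on $\mathbf{z}$; the clean argument uses the common majorant, namely $g(\cdot,\mathbf{z})\leq f$ on a ball $B(\mathbf{x},r)\subseteq\mathcal{X}$ together with $g(\mathbf{x},\mathbf{z})=f(\mathbf{x})$ for $\mathbf{z}\in Z(\mathbf{x})$, which gives $\Vert\mathbf{v}\Vert\leq\bigl(\sup_{B(\mathbf{x},r)}f-f(\mathbf{x})\bigr)/r$ for every $\mathbf{v}\in\partial_{\mathbf{x}}g(\mathbf{x},\mathbf{z})$, uniformly in $\mathbf{z}$. (The same inequality $g(\cdot,\mathbf{z})\leq f$, with $g(\cdot,\mathbf{z})>-\infty$ by properness, is also what makes $g(\cdot,\mathbf{z})$ finite near $\mathbf{x}\in\mathcal{X}$, which you implicitly use for nonemptiness and compactness of $\partial_{\mathbf{x}}g(\mathbf{x},\mathbf{z})$ and for the support-function representation of $g'_{\mathbf{x}}$.) Second, in the closed-graph step, passing $v_{k}\to v$ in the subgradient inequality $g(\mathbf{y},\mathbf{z}_{k})\geq g(\mathbf{x},\mathbf{z}_{k})+\langle\mathbf{v}_{k},\mathbf{y}-\mathbf{x}\rangle$ only works for $\mathbf{y}$ in a neighborhood of $\mathbf{x}$, because joint continuity is assumed only on $\mathcal{X}\times\mathcal{Z}$; you should add the (standard) remark that a local subgradient of a convex function is automatically a global one. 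With these two points made explicit, the proof is complete.
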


\begin{corollary}
    \label{corollary:danskins_unique_solution}
    Assume the conditions for Danskin's Theorem above hold.
    For every $\mathbf{x} \in \mathcal{X}$, if $Z(\mathbf{x})$ consists of a unique point, call it $\mathbf{z}^{*}$, and $g(\cdot, \mathbf{z}^{*})$ is differentiable at $\mathbf{x}$, then $f(\cdot)$ is differentiable at $\mathbf{x}$, and 
    \begin{align}
        \nabla f(\mathbf{x}) := \nabla_{\mathbf{x}} g(\mathbf{x}, \mathbf{z}^{*}).
    \end{align}
\end{corollary}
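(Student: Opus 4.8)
The plan is to derive the corollary directly from the subdifferential formula in Danskin's Theorem (\thmref{thm:danskin}) combined with the standard convex-analysis characterization of differentiability in terms of singleton subdifferentials. Since the hypotheses of Danskin's Theorem are assumed to hold, part (2) of that theorem already supplies, for the fixed point $\mathbf{x} \in \mathcal{X}$,
\[
    \partial f(\mathbf{x}) = \textrm{conv}\left\{ \partial_{\mathbf{x}} g(\mathbf{x}, \mathbf{z}) \,\middle|\, \mathbf{z} \in Z(\mathbf{x}) \right\},
\]
so the entire argument reduces to specializing this identity to the case $Z(\mathbf{x}) = \{\mathbf{z}^{*}\}$ and then reading off differentiability.

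First I would observe that when $Z(\mathbf{x})$ is the single point $\mathbf{z}^{*}$, the union inside the convex hull is over a single index, so the formula collapses to $\partial f(\mathbf{x}) = \textrm{conv}\,\bigl(\partial_{\mathbf{x}} g(\mathbf{x}, \mathbf{z}^{*})\bigr)$. Because $g(\cdot, \mathbf{z}^{*})$ is a closed proper convex function (an assumption inherited from Danskin's Theorem), its subdifferential $\partial_{\mathbf{x}} g(\mathbf{x}, \mathbf{z}^{*})$ is itself a closed convex set, so applying $\textrm{conv}$ leaves it unchanged, giving $\partial f(\mathbf{x}) = \partial_{\mathbf{x}} g(\mathbf{x}, \mathbf{z}^{*})$. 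Next I would invoke the hypothesis that $g(\cdot, \mathbf{z}^{*})$ is differentiable at $\mathbf{x}$: for a convex function, differentiability at an interior point of its domain is equivalent to the subdifferential there being the singleton consisting of the gradient, hence $\partial_{\mathbf{x}} g(\mathbf{x}, \mathbf{z}^{*}) = \{\nabla_{\mathbf{x}} g(\mathbf{x}, \mathbf{z}^{*})\}$. Chaining the two equalities yields $\partial f(\mathbf{x}) = \{\nabla_{\mathbf{x}} g(\mathbf{x}, \mathbf{z}^{*})\}$.

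Finally I would apply the converse direction of the same convex-analysis fact. By part (1) of Danskin's Theorem $f$ is a closed proper convex function, and by hypothesis $\mathbf{x} \in \mathcal{X} = \textrm{int}(\textrm{dom}\, f)$; a convex function whose subdifferential at an interior point of its domain is a single point is differentiable at that point, with gradient equal to the unique subgradient. This delivers both conclusions at once: $f$ is differentiable at $\mathbf{x}$ and $\nabla f(\mathbf{x}) = \nabla_{\mathbf{x}} g(\mathbf{x}, \mathbf{z}^{*})$.

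The only genuinely delicate point is the equivalence ``singleton subdifferential at an interior point $\iff$ differentiability,'' which is precisely where the requirement $\mathbf{x} \in \textrm{int}(\textrm{dom}\, f)$ is indispensable, since on the boundary of the domain a convex function can have a one-element subdifferential yet fail to be differentiable. I would treat this equivalence as a cited standard result (e.g., from the convex-analysis references already in use in this appendix) rather than reprove it; everything else in the argument is routine bookkeeping with the convex-hull reduction and the singleton collapse, so I do not anticipate any further obstacles.
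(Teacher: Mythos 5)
Your argument is correct, and it is the standard derivation: specialize part (2) of \thmref{thm:danskin} to the singleton $Z(\mathbf{x})=\{\mathbf{z}^{*}\}$, drop the redundant convex hull, use differentiability of $g(\cdot,\mathbf{z}^{*})$ to collapse $\partial_{\mathbf{x}}g(\mathbf{x},\mathbf{z}^{*})$ to $\{\nabla_{\mathbf{x}}g(\mathbf{x},\mathbf{z}^{*})\}$, and invoke the equivalence between a singleton subdifferential at an interior point and differentiability (noting $\mathcal{X}=\textrm{int}(\textrm{dom}\,f)\subseteq\textrm{int}(\textrm{dom}\,g(\cdot,\mathbf{z}^{*}))$, since $g(\mathbf{x},\mathbf{z})\leq f(\mathbf{x})$). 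The paper does not prove this corollary itself --- it is stated as cited background following \citenoun{danskin:siam66} and \citenoun{bertsekas:phd71} --- so there is nothing to contrast; your proof is exactly the argument those references use, and your flagging of the interior-point caveat is the right point to be careful about.
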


\begin{theorem}[\cite{bonnans:siam98} Theorem 4.2, \cite{rockafellar:rcsam74} p. 41]
    \label{thm:value_function_subdifferentiable_wrt_constraint_constants}
    Let $\mathbf{X}$ and $\mathbf{U}$ be Banach spaces.
    Let $\mathbf{K}$ be a closed convex cone in the Banach space $\mathbf{U}$.
    Let $G: \mathbf{X} \to \mathbf{U}$ be a convex mapping with respect to the cone $\mathbf{C} := -\mathbf{K}$ and $f: \mathbf{X} \to (-\infty, \infty]$ be a (possibly infinite-valued) convex function.
    Consider the following convex program and its optimal value function:
    \begin{align*}
        v_{P}(\mathbf{u}) := \min_{\mathbf{x} \in \mathbf{X}} & \, \, f(\mathbf{x}) & \textrm{(P)} \\
        \textit{s.t.} & \quad G(\mathbf{x}) + \mathbf{u} \in \mathbf{K}. \nonumber
    \end{align*}
    Moreover, consider the (Lagrangian) dual of the program:
    \begin{align*}
        v_{D}(\mathbf{u}) := \max_{\mathbf{\lambda} \in \mathbf{K}^{-}} & \, \, \min_{\mathbf{x} \in \mathbf{X}} f(\mathbf{x}) + \mathbf{\lambda}^T (G(\mathbf{x}) + \mathbf{u}) & \textrm{(D)} 
    \end{align*}
    Suppose $v_{P}(\mathbf{0})$ is finite.
    Further, suppose the feasible set of the program is nonempty for all $\mathbf{u}$ in a neighborhood of $\mathbf{0}$, i.e.,
    \begin{align}
        \label{eq:cone_constraint_qualification}
        \mathbf{0} \in \textrm{int} \{ G(\mathbf{X}) - \mathbf{K} \}.
    \end{align}
    Then, 
    \begin{enumerate}
        \item There is no primal dual gap at $u = 0$, i.e., $v_{P}(0) = v_{D}(0)$.
        \item The set, $\Lambda_{0}$, of optimal solutions to the dual problem with $\mathbf{u} = 0$ is non-empty and bounded.
        \item The optimal value function $v_{P}(\mathbf{u})$ is continuous at $\mathbf{u} = 0$ and $\partial v_{P}(\mathbf{0}) = \Lambda_{0}$.
    \end{enumerate}
\end{theorem}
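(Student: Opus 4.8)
The plan is to treat the final statement as the classical perturbation/conjugate-duality theorem that it is, and to prove it by analyzing the primal optimal value function $v_{P}$ as a convex function of the perturbation and then identifying its subdifferential with the dual solution set. First I would set up the joint objective $\Phi(\mathbf{x}, \mathbf{u}) := f(\mathbf{x}) + \delta_{\mathbf{K}}(G(\mathbf{x}) + \mathbf{u})$, where $\delta_{\mathbf{K}}$ is the indicator of the cone $\mathbf{K}$, so that $v_{P}(\mathbf{u}) = \inf_{\mathbf{x} \in \mathbf{X}} \Phi(\mathbf{x}, \mathbf{u})$. Using the $\mathbf{C}$-convexity of $G$ (with $\mathbf{C} = -\mathbf{K}$) together with the fact that $\mathbf{K}$ is a convex cone, I would verify that the set $\{(\mathbf{x}, \mathbf{u}) : G(\mathbf{x}) + \mathbf{u} \in \mathbf{K}\}$ is convex; hence $\Phi$ is jointly convex, and its marginal $v_{P}$ is convex, since the infimum of a jointly convex function over a slice of its arguments is convex. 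I would also record the monotonicity $v_{P}(\mathbf{u}) \le v_{P}(\mathbf{0})$ for every $\mathbf{u} \in \mathbf{K}$, which holds because adding $\mathbf{u} \in \mathbf{K}$ can only enlarge the feasible set.

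Next I would translate the constraint qualification \eqref{eq:cone_constraint_qualification}, namely $\mathbf{0} \in \textrm{int}\{G(\mathbf{X}) - \mathbf{K}\}$, into $\mathbf{0} \in \textrm{int}(\textrm{dom}\, v_{P})$. The point is that the set of attainable perturbations is $\textrm{dom}\, v_{P} = \mathbf{K} - G(\mathbf{X}) = -\{G(\mathbf{X}) - \mathbf{K}\}$, and since the interior is preserved under reflection through the origin, the two interior conditions coincide. Combined with finiteness of $v_{P}(\mathbf{0})$, this places $\mathbf{0}$ in the interior of the domain of a proper convex function. I would then invoke the convex-analytic fact that a proper convex function which is bounded above on a neighborhood of an interior point of its domain is continuous there; in the Banach setting the nontrivial ingredient is local boundedness above near $\mathbf{0}$, which I would extract from the openness (Robinson-stability) encoded in the constraint qualification. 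Continuity at $\mathbf{0}$ then delivers that $\partial v_{P}(\mathbf{0})$ is nonempty and weak*-bounded.

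The heart of the argument is to identify $\partial v_{P}(\mathbf{0})$ with the dual optimal set $\Lambda_{0}$. I would first establish weak duality $v_{D}(\mathbf{u}) \le v_{P}(\mathbf{u})$: for $\mathbf{\lambda} \in \mathbf{K}^{-}$ and feasible $\mathbf{x}$ one has $\langle \mathbf{\lambda}, G(\mathbf{x}) + \mathbf{u}\rangle \le 0$, so the Lagrangian $L(\mathbf{x},\mathbf{\lambda}) = f(\mathbf{x}) + \langle \mathbf{\lambda}, G(\mathbf{x}) + \mathbf{u}\rangle$ underestimates $f(\mathbf{x})$, and minimizing then maximizing gives the inequality. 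The monotonicity from the first step forces any $\mathbf{\lambda} \in \partial v_{P}(\mathbf{0})$ to satisfy $\langle \mathbf{\lambda}, \mathbf{u}\rangle \le 0$ for all $\mathbf{u} \in \mathbf{K}$, i.e. $\mathbf{\lambda} \in \mathbf{K}^{-}$. Conversely, I would show a subgradient is precisely a dual-optimal multiplier: the subgradient inequality $v_{P}(\mathbf{u}) \ge v_{P}(\mathbf{0}) + \langle \mathbf{\lambda}, \mathbf{u}\rangle$ rearranges to $\inf_{\mathbf{x}} L(\mathbf{x}, \mathbf{\lambda}) = v_{P}(\mathbf{0})$, hence $v_{D}(\mathbf{0}) \ge v_{P}(\mathbf{0})$, closing the gap and placing $\mathbf{\lambda}$ in $\Lambda_{0}$. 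Running this equivalence both ways yields claim~1 (no duality gap, $v_{P}(\mathbf{0}) = v_{D}(\mathbf{0})$), claim~2 ($\Lambda_{0} = \partial v_{P}(\mathbf{0})$ nonempty and bounded), and claim~3 (continuity of $v_{P}$ at $\mathbf{0}$ with $\partial v_{P}(\mathbf{0}) = \Lambda_{0}$).

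I expect the main obstacle to be the infinite-dimensional technicalities rather than the algebra. Upgrading ``$\mathbf{0}$ interior to the domain'' to genuine continuity of $v_{P}$ at $\mathbf{0}$ is delicate over Banach spaces, where a convex function finite on a neighborhood need not be continuous; the real work is in establishing local boundedness above, which is exactly where the constraint qualification must be used in its full metric-regularity strength. The conjugate-duality identification carries a parallel subtlety: one must rule out a gap between $v_{P}$ and its lower semicontinuous regularization at $\mathbf{0}$, so that $\partial v_{P}(\mathbf{0})$ is the subdifferential of $v_{P}$ itself and not merely of its closure. Both points — boundedness above near $\mathbf{0}$ and the absence of an lsc-regularization gap — are precisely what the generalized Slater/Robinson condition \eqref{eq:cone_constraint_qualification} is invoked to guarantee.
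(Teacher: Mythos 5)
The paper offers no proof of this statement: it is imported as a background result, with the argument deferred to the cited sources (\cite{bonnans:siam98}, Theorem 4.2; \cite{rockafellar:rcsam74}, p.~41). Your sketch reconstructs exactly the canonical perturbation-function proof from those references, and it is correctly structured: joint convexity of $\Phi(\mathbf{x},\mathbf{u}) = f(\mathbf{x}) + \delta_{\mathbf{K}}(G(\mathbf{x})+\mathbf{u})$ does follow from $\mathbf{C}$-convexity of $G$ with $\mathbf{C} = -\mathbf{K}$ (writing $G(\mathbf{x}_t)+\mathbf{u}_t$ as an element of $\mathbf{K}+\mathbf{K}\subseteq\mathbf{K}$); the identification $\partial v_{P}(\mathbf{0}) = \Lambda_{0}$ via the substitution $\mathbf{u} = -G(\mathbf{x})$ in the subgradient inequality is the standard two-sided argument, and your ordering (continuity $\Rightarrow$ $\partial v_{P}(\mathbf{0})\neq\emptyset$ $\Rightarrow$ zero gap $\Rightarrow$ $\Lambda_{0}\subseteq\partial v_{P}(\mathbf{0})$) avoids the circularity that would otherwise arise from using strong duality in the converse inclusion. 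Two points deserve sharpening. First, a small correction: $\textrm{dom}\, v_{P} = \mathbf{K} - G(\textrm{dom}\, f)$, not $\mathbf{K} - G(\mathbf{X})$, once $f$ is extended-valued; the constraint qualification must be read over $\textrm{dom}\, f$ for your interiority translation to give $\mathbf{0}\in\textrm{int}(\textrm{dom}\, v_{P})$. Second, the one step you leave schematic — upgrading interiority to local boundedness above, hence continuity of $v_{P}$ at $\mathbf{0}$ — is precisely where the cited proofs invoke the Robinson--Ursescu generalized open mapping theorem (or an equivalent Baire-category argument), and that tool requires a closed convex graph, i.e., tacitly $f$ lower semicontinuous and $G$ continuous; these hypotheses are suppressed in the paper's restatement but present in \cite{bonnans:siam98}. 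Since you correctly flag this as the crux and name the metric-regularity mechanism, the proposal matches the literature's proof in substance, with that single step asserted rather than executed.
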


\begin{theorem}[\cite{bonnans:book00} Proposition 4.3.2]
    \label{thm:bonnans-shapiro-00}
    Consider two optimization problems over a non-empty feasible set $\mathbf{\Omega}$:
    \begin{align}
        \min_{\mathbf{x} \in \mathbf{\Omega}} f_{1}(\mathbf{x})
        \quad \quad \textrm{and} \quad \quad
        \min_{\mathbf{x} \in \mathbf{\Omega}} f_{2}(\mathbf{x})
    \end{align}
    where $f_{1}, f_{2}: \mathcal{X} \to \mathbb{R}$.
    Suppose $f_{1}$ has a non-empty set $\mathbf{S}$ of optimal solutions over $\mathbf{\Omega}$.
    Suppose the second order growth condition holds for $\mathbf{S}$, i.e., there exists a neighborhood $\mathcal{N}$ of $\mathbf{S}$ and a constant $\alpha > 0$ such that
    \begin{align}
        f_{1}(\mathbf{x}) \geq f_{1}(\mathbf{S}) + \alpha (dist(\mathbf{x}, \mathbf{S}))^2,
        \quad \quad \forall \mathbf{x} \in \mathbf{\Omega} \cap \mathcal{N},
    \end{align}
    where $f_{1}(\mathbf{S}) := \textrm{inf}_{\mathbf{x} \in \mathbf{\Omega}} f_{1}(\mathbf{x})$.
    Define the difference function:
    \begin{align}
        \Delta(\mathbf{x}) := f_{2}(\mathbf{x}) - f_{1}(\mathbf{x}).
    \end{align}
    Suppose $\Delta(\mathbf{x})$ is $L$-Lipschitz continuous on $\mathbf{\Omega} \cap \mathcal{N}$.
    Let $\mathbf{x}^{*} \in \mathcal{N}$ be an $\delta$-solution to the problem of minimizing $f_{2}(\mathbf{x})$ over $\mathbf{\Omega}$. Then
    \begin{align}
        dist(\mathbf{x}^{*}, \mathbf{S}) \leq \frac{L}{\alpha} + \sqrt{\frac{\delta}{\alpha}}.
    \end{align}
\end{theorem}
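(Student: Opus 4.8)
The plan is to reduce the stability estimate to a single scalar quadratic inequality in the quantity $r := \mathrm{dist}(\mathbf{x}^{*}, \mathbf{S})$, obtained by playing the second-order growth lower bound on $f_{1}$ against an upper bound coming from the $\delta$-optimality of $\mathbf{x}^{*}$ for $f_{2}$ together with the Lipschitz control on the perturbation $\Delta = f_{2} - f_{1}$. First I would fix an arbitrary $\eta > 0$ and choose a near-projection $\bar{\mathbf{x}} \in \mathbf{S}$ with $\|\bar{\mathbf{x}} - \mathbf{x}^{*}\| \le r + \eta$; this relaxation is needed because $\mathbf{S}$ is not assumed closed, so an exact projection may fail to exist. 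Since $\mathbf{S}$ consists of minimizers of $f_{1}$ over $\mathbf{\Omega}$, we have $\bar{\mathbf{x}} \in \mathbf{\Omega}$ and $f_{1}(\bar{\mathbf{x}}) = f_{1}(\mathbf{S})$, and since $\mathcal{N}$ is a neighborhood of $\mathbf{S}$, both $\bar{\mathbf{x}}$ and (by hypothesis) $\mathbf{x}^{*}$ lie in $\mathbf{\Omega} \cap \mathcal{N}$, which is precisely where the growth and Lipschitz assumptions are in force.

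The core chain of inequalities proceeds as follows. By $\delta$-optimality of $\mathbf{x}^{*}$ and feasibility of $\bar{\mathbf{x}}$, we have $f_{2}(\mathbf{x}^{*}) \le \inf_{\mathbf{x} \in \mathbf{\Omega}} f_{2}(\mathbf{x}) + \delta \le f_{2}(\bar{\mathbf{x}}) + \delta$. Substituting $f_{2} = f_{1} + \Delta$ on both sides and using $f_{1}(\bar{\mathbf{x}}) = f_{1}(\mathbf{S})$ yields $f_{1}(\mathbf{x}^{*}) - f_{1}(\mathbf{S}) \le \Delta(\bar{\mathbf{x}}) - \Delta(\mathbf{x}^{*}) + \delta$. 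The left-hand side is bounded below by $\alpha r^{2}$ via the second-order growth condition applied at $\mathbf{x}^{*} \in \mathbf{\Omega} \cap \mathcal{N}$, while the right-hand side is bounded above using the $L$-Lipschitz continuity of $\Delta$ on $\mathbf{\Omega} \cap \mathcal{N}$, giving $\Delta(\bar{\mathbf{x}}) - \Delta(\mathbf{x}^{*}) \le L \|\bar{\mathbf{x}} - \mathbf{x}^{*}\| \le L(r + \eta)$. Combining the two bounds produces $\alpha r^{2} \le L(r + \eta) + \delta$, and letting $\eta \to 0^{+}$ gives the clean quadratic inequality $\alpha r^{2} - L r - \delta \le 0$.

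Finally I would solve this quadratic. Since $\alpha > 0$ the parabola opens upward, so the inequality forces $r$ to lie below the larger root, $r \le (L + \sqrt{L^{2} + 4\alpha\delta})/(2\alpha)$. The stated bound then follows from subadditivity of the square root, $\sqrt{L^{2} + 4\alpha\delta} \le L + 2\sqrt{\alpha\delta}$ (valid since $(L + 2\sqrt{\alpha\delta})^{2} = L^{2} + 4L\sqrt{\alpha\delta} + 4\alpha\delta \ge L^{2} + 4\alpha\delta$), which collapses the bound to $L/\alpha + \sqrt{\alpha\delta}/\alpha = L/\alpha + \sqrt{\delta/\alpha}$, exactly as claimed.

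The routine parts are the inequality chain and the quadratic solve; the one genuinely delicate point is the domain bookkeeping. The argument is valid only because every point at which I invoke a hypothesis—both $\mathbf{x}^{*}$ and the near-projection $\bar{\mathbf{x}}$—stays inside $\mathbf{\Omega} \cap \mathcal{N}$; the growth condition asserts nothing outside $\mathcal{N}$, so it must not be applied at arbitrary feasible points, and likewise the Lipschitz estimate for $\Delta$ is only guaranteed on $\mathbf{\Omega} \cap \mathcal{N}$. The non-closedness of $\mathbf{S}$ is the main technical wrinkle, cleanly handled by the $\eta$-relaxation of the projection followed by the limit $\eta \to 0^{+}$. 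I expect this domain-tracking step, rather than any of the algebra, to be where care is required.
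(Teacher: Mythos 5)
Your proof is correct. Note that the paper never proves this statement---it is quoted verbatim as background from \cite{bonnans:book00}---and your argument is exactly the standard one for that proposition: the $\eta$-relaxed near-projection $\bar{\mathbf{x}} \in \mathbf{S}$, the chain $\alpha r^{2} \leq f_{1}(\mathbf{x}^{*}) - f_{1}(\mathbf{S}) \leq \Delta(\bar{\mathbf{x}}) - \Delta(\mathbf{x}^{*}) + \delta \leq L(r+\eta) + \delta$, and the root bound via $\sqrt{L^{2} + 4\alpha\delta} \leq L + 2\sqrt{\alpha\delta}$, with the domain bookkeeping (both $\mathbf{x}^{*}$ and $\bar{\mathbf{x}}$ kept inside $\mathbf{\Omega} \cap \mathcal{N}$, where alone the growth and Lipschitz hypotheses hold) handled correctly.
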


\subsubsection{Proofs}
We provide proofs of theorems presented in the main paper and restate them here for completeness.

\newtheorem*{theorem11}{\textbf{Theorem 11}}
\begin{theorem11}
    Suppose for any setting of $\mathbf{w}_{nn} \in \mathbb{R}^{n_g}$ there is a feasible solution to NeuPSL inference \eqref{eq:regularized_lcqp_primal}.
    Further, suppose $\epsilon > 0$, $\mathbf{w}_{sy} \in \mathbb{R}_{+}^{r}$, and $\mathbf{w}_{nn} \in \mathbb{R}^{n_g}$.
    Then:
    \begin{itemize}[leftmargin=*,noitemsep,topsep=0pt]
        \item The minimizer of \eqref{eq:regularized_lcqp_primal}, $\mathbf{y}^*(\mathbf{w}_{sy}, \mathbf{w}_{nn})$, is a $O(1 / \epsilon)$ Lipschitz continuous function of $\mathbf{w}_{sy}$.
        \item $V(\mathbf{w}_{sy}, \mathbf{b}(\mathbf{x}_{sy}, \mathbf{g}_{nn}(\mathbf{x}_{nn}, \mathbf{w}_{nn})))$, is concave over $\mathbf{w}_{sy}$ and convex over $\mathbf{b}(\mathbf{x}_{sy}, \mathbf{g}_{nn}(\mathbf{x}_{nn}, \mathbf{w}_{nn}))$.
        \item $V(\mathbf{w}_{sy}, \mathbf{b}(\mathbf{x}_{sy}, \mathbf{g}_{nn}(\mathbf{x}_{nn}, \mathbf{w}_{nn})))$ is differentiable with respect to $\mathbf{w}_{sy}$. Moreover,
        {\small
        \begin{align*}
            \nabla_{\mathbf{w}_{sy}} V(\mathbf{w}_{sy}, & \mathbf{b}(\mathbf{x}_{sy}, \mathbf{g}_{nn}(\mathbf{x}_{nn}, \mathbf{w}_{nn}))) = \mathbf{\Phi}(\mathbf{y}^{*}(\mathbf{w}_{sy}, \mathbf{w}_{nn}), \mathbf{x}_{sy}, \mathbf{g}_{nn}(\mathbf{x}_{nn}, \mathbf{w}_{nn})).
        \end{align*}
        }%
        Furthermore, $\nabla_{\mathbf{w}_{sy}} V(\mathbf{w}_{sy}, \mathbf{b}(\mathbf{x}_{sy}, \mathbf{g}_{nn}(\mathbf{x}_{nn}, \mathbf{w}_{nn})))$ is Lipschitz continuous over $\mathbf{w}_{sy}$.
        \item If there is a feasible point $\nu$ strictly satisfying the $i'th$ inequality constraint of \eqref{eq:regularized_lcqp_primal}, i.e., $\mathbf{A}[i] \mathbf{\nu} + \mathbf{b}(\mathbf{x}_{sy}, \mathbf{g}_{nn}(\mathbf{x}_{nn}, \mathbf{w}_{nn}))[i] < 0$, then $V(\mathbf{w}_{sy}, \mathbf{b}(\mathbf{x}_{sy}, \mathbf{g}_{nn}(\mathbf{x}_{nn}, \mathbf{w}_{nn})))$ is subdifferentiable with respect to the $i'th$ constraint constant $\mathbf{b}(\mathbf{x}_{sy}, \mathbf{g}_{nn}(\mathbf{x}_{nn}, \mathbf{w}_{nn}))[i]$. Moreover, 
        {\small
        \begin{align*}
            \partial_{\mathbf{b}[i]} V(\mathbf{w}_{sy}, & \mathbf{b}(\mathbf{x}_{sy}, \mathbf{g}_{nn}(\mathbf{x}_{nn}, \mathbf{w}_{nn}))) 
            = \{\mathbf{\mu}^{*}[i] \, \vert \, \mathbf{\mu}^{*} \in \argmin_{\mathbf{\mu} \in \mathbb{R}_{\geq 0}^{2 \cdot n_{\mathbf{y}} + m + q}}
            h(\mathbf{\mu}; \mathbf{w}_{sy}, \mathbf{b}(\mathbf{x}_{sy}, \mathbf{g}_{nn}(\mathbf{x}_{nn}, \mathbf{w}_{nn}))) \}.
        \end{align*}
        }%
        Furthermore, if $\mathbf{g}_{nn}(\mathbf{x}_{nn}, \mathbf{w}_{nn})$ is a smooth function of $\mathbf{w}_{nn}$, then so is $\mathbf{b}(\mathbf{x}_{sy}, \mathbf{g}_{nn}(\mathbf{x}_{nn}, \mathbf{w}_{nn}))$, and the set of regular subgradients of $V(\mathbf{w}_{sy}, \mathbf{b}(\mathbf{x}_{sy}, \mathbf{g}_{nn}(\mathbf{x}_{nn}, \mathbf{w}_{nn})))$ is:
        {\small
        \begin{align}
            \hat{\partial}_{\mathbf{w}_{nn}} & V(\mathbf{w}_{sy}, \mathbf{b}(\mathbf{x}_{sy}, \mathbf{g}_{nn}(\mathbf{x}_{nn}, \mathbf{w}_{nn}))) 
            \\ & 
            \supset \nabla_{\mathbf{w}_{nn}} \mathbf{b}(\mathbf{x}_{sy}, \mathbf{g}_{nn}(\mathbf{x}_{nn}, \mathbf{w}_{nn}))^T \partial_{\mathbf{b}} V(\mathbf{w}_{sy}, \mathbf{b}(\mathbf{x}_{sy}, \mathbf{g}_{nn}(\mathbf{x}_{nn}, \mathbf{w}_{nn}))) 
            \nonumber
            .
        \end{align}
        }%
    \end{itemize}
\end{theorem11}

\begin{proof}[\textbf{Proof of \thmref{thm:continuity_properties}}]
    \label{proof:continuity_properties}
    We first show the minimizer of the LCQP formulation of NeuPSL inference, $\mathbf{\nu}^*$, with $\epsilon > 0$, $\mathbf{w}_{sy} \in \mathbb{R}_{+}^{r}$, and $\mathbf{w}_{nn} \in \mathbb{R}^{n_g}$ is a Lipschitz continuous function of $\mathbf{w}_{sy}$.
    Suppose $\epsilon > 0$ and $\mathbf{w}_{nn} \in \mathbb{R}^{n_g}$ is given.
    To show continuity over  $\mathbf{w}_{sy} \in \mathbb{R}^{r}_{+}$, first note the matrix $(\mathbf{D} + \epsilon \mathbf{I})$ is positive definite and the primal inference problem \eqref{eq:dual_lcqp_inference} is an $\epsilon$-strongly convex LCQP with a unique minimizer denoted by $\nu^{*}(\mathbf{w}_{sy}, \mathbf{w}_{nn})$.
    We leverage the Lipschitz stability result for optimal values of constrained problems from \cite{bonnans:book00} and presented here in \thmref{thm:bonnans-shapiro-00}.
    Define the primal objective as an explicit function of the weights:
    
    \begin{align}
         f(\mathbf{\nu}, \mathbf{w}_{sy}, \mathbf{w}_{nn}) := \mathbf{\nu}^T (\mathbf{D}(\mathbf{w}_{sy}) + \epsilon \mathbf{I}) \mathbf{\nu} + \mathbf{c}^T(\mathbf{w}_{sy}) \mathbf{\nu}
    \end{align}
    Note that the solution \(\mathbf{\nu}^* =
    \begin{bmatrix}
        \mathbf{s}_{S}^* \\
        \mathbf{s}_{L}^* \\
        \mathbf{y}^*
    \end{bmatrix}\) will always be bounded, since from \eqref{eq:lcqp.3} in LCQP we always have for all \(j \in I_{S} \cup I_{L}\),
    \begin{align}
        0 \le s_j^* &= \max ( \mathbf{a}_{\phi_k, y}^T \mathbf{y}^* + \mathbf{a}_{\phi_k, \mathbf{x}_{sy}}^T \mathbf{x}_{sy} + \mathbf{a}_{\phi_k, \mathbf{g}_{nn}}^T \mathbf{g}_{nn}(\mathbf{x}_{nn}, \mathbf{w}_{nn}) +  b_{\phi_k}, 0) \\
        &\le \|\mathbf{a}_{\phi_k, y}\| + |\mathbf{a}_{\phi_k, \mathbf{x}_{sy}}^T \mathbf{x}_{sy} + \mathbf{a}_{\phi_k, \mathbf{g}_{nn}}^T \mathbf{g}_{nn}(\mathbf{x}_{nn}, \mathbf{w}_{nn}) +  b_{\phi_k}|. 
    \end{align}
    Thus, setting these trivial upper bounds for \(s_j\) will not change the solution of the problem. 
    We can henceforth consider the problem in a bounded domain \(\| \mathbf{\nu} \| \le C\) where \(C\) does not depend on \(\mathbf{w}\)'s.
    
    Let $\mathbf{w}_{1, sy}, \mathbf{w}_{2, sy} \in \mathbb{R}_{+}^{r}$ and $\mathbf{w}_{nn} \in \mathcal{W}_{nn}$ be arbitrary.
    As $\epsilon > 0$, $f(\mathbf{\nu}, \mathbf{w}_{1, sy}, \mathbf{w}_{nn})$ is strongly convex in $\mathbf{\nu}$ and it therefore satisfies the second-order growth condition in $\mathbf{\nu}$. 
    Define the difference function:
    \begin{align}
            \Delta_{\mathbf{w}_{sy}}(\mathbf{\nu}) &:= f(\mathbf{\nu}, \mathbf{w}_{2, sy}, \mathbf{w}_{nn}) - f(\mathbf{\nu}, \mathbf{w}_{1, sy}, \mathbf{w}_{nn})\\
            &= \mathbf{\nu}^T (\mathbf{D}(\mathbf{w}_{2, sy}) + \epsilon \mathbf{I}) \mathbf{\nu} + \mathbf{c}^T(\mathbf{w}_{2, sy}) \mathbf{\nu} - \big( \mathbf{\nu}^T (\mathbf{D}(\mathbf{w}_{1, sy}) + \epsilon \mathbf{I}) \mathbf{\nu} + \mathbf{c}^T(\mathbf{w}_{1, sy}) \mathbf{\nu} \big) \\
            &= \mathbf{\nu}^T (\mathbf{D}(\mathbf{w}_{2, sy}) - \mathbf{D}(\mathbf{w}_{1, sy})) \mathbf{\nu} + (\mathbf{c}(\mathbf{w}_{2, sy}) - \mathbf{c}(\mathbf{w}_{1, sy}))^T \mathbf{\nu}. 
    \end{align}
    The difference function $\Delta_{\mathbf{w}_{sy}}(\mathbf{\nu})$ over $\mathcal{N}$ has a finitely bounded gradient:
        \begin{align} 
        \Vert \nabla \Delta_{\mathbf{w}_{sy}}(\mathbf{\nu}) \Vert_{2}
        &= \Big \Vert 
        2 (\mathbf{D}(\mathbf{w}_{2, sy}) - \mathbf{D}(\mathbf{w}_{1, sy})) \mathbf{\nu} + \mathbf{c}(\mathbf{w}_{2, sy}) - \mathbf{c}(\mathbf{w}_{1, sy})
        \Big \Vert_{2} \\
        &\le \Vert \mathbf{c}(\mathbf{w}_{2, sy}) - \mathbf{c}(\mathbf{w}_{1, sy}) \Vert_{2} + 2 \Vert (\mathbf{D}(\mathbf{w}_{2, sy}) - \mathbf{D}(\mathbf{w}_{1, sy})) \mathbf{\nu} \Vert_{2} \\
        &\le \Vert \mathbf{w}_{2, sy} - \mathbf{w}_{1, sy} \Vert_{2} +
        2\Vert \mathbf{w}_{2, sy} - \mathbf{w}_{1, sy} \Vert_{2} \; \| \mathbf{\nu} \|_{2} \\
        &\le \Vert \mathbf{w}_{2, sy} - \mathbf{w}_{1, sy} \Vert_{2} (1 + 2C) =: 
        L_{\mathcal{N}}(\mathbf{w}_{1, sy}, \mathbf{w}_{2, sy})
        .
    \end{align}
    Thus, the distance function, $\Delta_{\mathbf{w}_{sy}}(\mathbf{\nu})$ is \(L_{\mathcal{N}}(\mathbf{w}_{1, sy}, \mathbf{w}_{2, sy})\)-Lipschitz continuous over $\mathcal{N}$.
    Therefore, by \cite{bonnans:book00} (\thmref{thm:bonnans-shapiro-00}), the distance between $\mathbf{\nu}^{*}(\mathbf{w}_{1, sy}, \mathbf{w}_{nn})$ and $\mathbf{\nu}^{*}(\mathbf{w}_{2, sy}, \mathbf{w}_{nn})$ is bounded above:
    \begin{align}
        \Vert \mathbf{\nu}^{*}(\mathbf{w}_{2, sy}, \mathbf{w}_{nn}) - \mathbf{\nu}^{*}(\mathbf{w}_{1, sy}, \mathbf{w}_{nn}) \Vert_{2} \leq \frac{L_{\mathcal{N}}(\mathbf{w}_{1, sy}, \mathbf{w}_{2, sy})}{\epsilon} = \frac{(1 + 2C)}{\epsilon} \Vert \mathbf{w}_{2, sy} - \mathbf{w}_{1, sy} \Vert_{2}.
    \end{align}
    Therefore, the function $\mathbf{\nu}^{*}(\mathbf{w}_{sy}, \mathbf{w}_{nn})$ is 
\(O(1/\epsilon)\)-Lipschitz continuous in $\mathbf{w}_{sy}$ for any $\mathbf{w}_{nn}$.

    Next, we prove curvature properties of the value-function with respect to the weights.
    Observe NeuPSL inference is an infimum over a set of functions that are concave (affine) in $\mathbf{w}_{sy}$.
    Therefore, by \thmref{thm:pointwise_infimum_over_concave}, we have that $V(\mathbf{w}_{sy}, \mathbf{b}(\mathbf{x}_{sy}, \mathbf{g}_{nn}(\mathbf{x}_{nn}, \mathbf{w}_{nn})))$ is concave in $\mathbf{w}_{sy}$.

    We use a similar argument to show $V(\mathbf{w}_{sy}, \mathbf{b}(\mathbf{x}_{sy}, \mathbf{g}_{nn}(\mathbf{x}_{nn}, \mathbf{w}_{nn})))$ is convex in the constraint constants, $\mathbf{b}(\mathbf{x}_{sy}, \mathbf{g}_{nn}(\mathbf{x}_{nn}, \mathbf{w}_{nn}))$.
    Assuming for any setting of the neural weights, $\mathbf{w}_{nn} \in \mathbb{R}^{n_g}$, there is a feasible solution to the NeuPSL inference problem, then \eqref{eq:regularized_lcqp_primal} satisfies the conditions for Slater's constraint qualification.
    Therefore, strong duality holds, i.e., $V(\mathbf{w}_{sy}, \mathbf{b}(\mathbf{x}_{sy}, \mathbf{g}_{nn}(\mathbf{x}_{nn}, \mathbf{w}_{nn})))$ is equal to the optimal value of the dual inference problem \eqref{eq:dual_lcqp_appendix}.
    Observe that the dual NeuPSL inference problem is a supremum over a set of functions convex (affine) in $\mathbf{b}(\mathbf{x}_{sy}, \mathbf{g}_{nn}(\mathbf{x}_{nn}, \mathbf{w}_{nn}))$.
    Therefore, by \thmref{thm:pointwise_supremum_over_convex}, we have that $V(\mathbf{w}_{sy}, \mathbf{b}(\mathbf{x}_{sy}, \mathbf{g}_{nn}(\mathbf{x}_{nn}, \mathbf{w}_{nn})))$ is convex in $\mathbf{b}(\mathbf{x}_{sy}, \mathbf{g}_{nn}(\mathbf{x}_{nn}, \mathbf{w}_{nn}))$.

    We can additionally prove convexity in \(\mathbf{b}\) from first principles. 
    For simplicity, fix other parameters, and write the objective and the value function as \(Q(\nu)\) and \(V(\mathbf{b})\), respectively.
    Let us first consider the domain where the optimization is bounded and the optimal solution exists.
    Given $\mathbf{b}_1$ and $\mathbf{b}_2$, let the corresponding optimal solutions of \eqref{eq:regularized_lcqp_primal_appendix} parameterized by $\mathbf{b}_1$ and $\mathbf{b}_2$ be $\nu_1$ and $\nu_2$. Take any $\alpha \in [0,1]$, note that $\alpha \nu_1  + (1-\alpha) \nu_2$ is feasible for the optimization problem parameterized by $\mathbf{b} = \alpha \mathbf{b}_1 + (1-\alpha) \mathbf{b}_2$. Because we take the inf over all $\nu$s, the optimal $\nu$ for this $\mathbf{b}$ might be even smaller. Thus, we have  (for convex quadratic objective $Q$) that
    \begin{equation}
    \begin{aligned}
    V(\alpha \mathbf{b}_1 + (1-\alpha) \mathbf{b}_2) & \le  Q(\alpha \nu_1 + (1-\alpha) \nu_2)  \\
    & \le \alpha Q(\nu_1) + (1-\alpha) Q(\nu_2) \\
    & = \alpha V(\mathbf{b}_1) + (1-\alpha) V(\mathbf{b}_2),
    \end{aligned}
    \end{equation}
    which shows that \(V\) is convex in \(\mathbf{b}\).
    To establish the convexity when \(V(\mathbf{b})\) takes extended real-values (\(\mathbb{R} \cup \{-\infty\}\)) to allow for unbounded optimization problems, it suffices to consider sequences \(\{\nu_{i}^k\}_{k=1}^{\infty}\) for \(\mathbf{b}_i\) (\(i=1,2\), \(\mathbf{b}_1 \ne \mathbf{b}_2\)) as follows:
    
    (1) If \(V(\mathbf{b}_i)\) is finite, let \(\nu_{i}^k = \nu_i\) for all \(k\), where \(\nu_i\) is the optimal solution.
    
    (2) If \(V(\mathbf{b}_i) = -\infty\), there exists sequence \(\{\nu_{i}^k\}_{k=1}^{\infty}\) such that \(Q(\nu_{i}^k) \to -\infty\) as \(k \to \infty\).

    Now, for any \(0 < \alpha < 1\), observe:

    Case 1: Both \(V(\mathbf{b}_1)\) and \(V(\mathbf{b}_2)\) are finite. We can reuse the argument above.

Case 2: At least one of \(V(\mathbf{b}_1)\) and \(V(\mathbf{b}_2)\) is \(-\infty\).
    By convexity of \(Q\), \(Q(\alpha \nu_1^k  + (1-\alpha) \nu_2^k) \le \alpha Q(\nu_1^k) + (1-\alpha) Q(\nu_2^k)\). Therefore, we have \(Q(\alpha \nu_1^k  + (1-\alpha) \nu_2^k) \to -\infty\) as \(k \to \infty\) when \(0 < \alpha < 1\). Note that for all \(k\), \(\alpha \nu_1^k  + (1-\alpha) \nu_2^k\) is feasible for the optimization problem parameterized by $\mathbf{b} = \alpha \mathbf{b}_1 + (1-\alpha) \mathbf{b}_2$. It follows that \(V(\alpha \mathbf{b}_1 + (1-\alpha) \mathbf{b}_2) = -\infty\).

    Therefore, convexity holds when \(V(\mathbf{b})\) takes extended real-values (\(\mathbb{R} \cup \{-\infty\}\)). 

    Next, we prove (sub)differentiability properties of the value-function.
    Suppose $\epsilon > 0$.
    First, we show the optimal value function, $V(\mathbf{w}_{sy}, \mathbf{b}(\mathbf{x}_{sy}, \mathbf{g}_{nn}(\mathbf{x}_{nn}, \mathbf{w}_{nn})))$, is differentiable with respect to the symbolic weights.
    Then we show subdifferentiability properties of the optimal value function with respect to the constraint constants.
    Finally, we apply the Lipschitz continuity of the minimzer result to show the gradient of the optimal value function is Lipschitz continuous with respect to $\mathbf{w}_{sy}$.

    Starting with differentiability with respect to the symbolic weights, $\mathbf{w}_{sy}$, note, the optimal value function of the regularized LCQP formulation of NeuPSL inference, \eqref{eq:regularized_lcqp_primal}, is equivalently expressed as the following maximization over a continuous function in the primal target variables, $\mathbf{y}$, the slack variables, $\mathbf{s}_{S}$ and $\mathbf{s}_{L}$, and the symbolic weights, $\mathbf{w}_{sy}$:
    \begin{align}
        & V(\mathbf{w}_{sy}, \mathbf{b}(\mathbf{x}_{sy}, \mathbf{g}_{nn}(\mathbf{x}_{nn}, \mathbf{w}_{nn}))) \\
        & \quad = - \Bigg ( \max_{\mathbf{y}, \mathbf{s_{H}}, \mathbf{s_L}} \,
        - \Big ( \begin{bmatrix}
            \mathbf{s}_{S} \\ \mathbf{s}_{L} \\ \mathbf{y}
        \end{bmatrix}^T
        \begin{bmatrix}
            \mathbf{W}_{S} + \epsilon I & 0 & 0\\
            0 & \epsilon I & 0 \\
            0 & 0 & \epsilon I \\
        \end{bmatrix}
        \begin{bmatrix}
            \mathbf{s}_{S} \\
            \mathbf{s}_{L} \\
            \mathbf{y}  
        \end{bmatrix}
        + 
        \begin{bmatrix}
           0 \\ \mathbf{w}_{L} \\ 0
        \end{bmatrix}^T
        \begin{bmatrix}
            \mathbf{s}_{S} \\
            \mathbf{s}_{L} \\
            \mathbf{y}
        \end{bmatrix}
        \Big ) \Bigg ) \nonumber \\
        & \quad \quad \quad \quad \textrm{s.t.} \quad         
        \mathbf{A}         
        \begin{bmatrix}
            \mathbf{s}_{S} \\
            \mathbf{s}_{L} \\
            \mathbf{y}
        \end{bmatrix} + \mathbf{b}(\mathbf{x}_{sy}, \mathbf{g}_{nn}(\mathbf{x}_{nn}, \mathbf{w}_{nn}) \leq 0 \nonumber,
    \end{align}
    where the matrix $\mathbf{W}_{s}$ and vector $\mathbf{w}_{L}$ are functions of the symbolic parameters $\mathbf{w}_{sy}$ as defined in \eqref{eq:symbolic_weight_matrix_and_vector}.
    Moreover, the objective above is and convex (affine) in $\mathbf{w}_{sy}$.
    Additionally, note that the decision variables can be constrained to a compact domain without breaking the equivalence of the formulation.
    Specifically, the target variables are constrained to the box $[0, 1]^{\mathbf{n}_{y}}$, while the slack variables are nonnegative and have a trivial upper bound derived from \eqref{eq:lcqp.3}:,
    \begin{align}
        0 \le s_j^* &= \max ( \mathbf{a}_{\phi_k, y}^T \mathbf{y}^* + \mathbf{a}_{\phi_k, \mathbf{x}_{sy}}^T \mathbf{x}_{sy} + \mathbf{a}_{\phi_k, \mathbf{g}_{nn}}^T \mathbf{g}_{nn}(\mathbf{x}_{nn}, \mathbf{w}_{nn}) +  b_{\phi_k}, 0) \nonumber \\
        &\le \|\mathbf{a}_{\phi_k, y}\| + |\mathbf{a}_{\phi_k, \mathbf{x}_{sy}}^T \mathbf{x}_{sy} + \mathbf{a}_{\phi_k, \mathbf{g}_{nn}}^T \mathbf{g}_{nn}(\mathbf{x}_{nn}, \mathbf{w}_{nn}) +  b_{\phi_k}|, 
    \end{align}
    for all \(j \in I_{S} \cup I_{L}\).
    Therefore, the negative optimal value function satisfies the conditions for Danskin's theorem \cite{danskin:siam66} (stated in \appref{appendix:continuity_of_inference_preliminaries}).
    Moreover, as there is a single unique solution to the inference problem when $\epsilon > 0$, and the quadratic objective in \eqref{eq:regularized_lcqp_primal} is differentiable for all $\mathbf{w}_{sy} \in \mathbb{R}^{r}_{+}$, we can apply \corollaryref{corollary:danskins_unique_solution}.
    The optimal value function is therefore concave and differentiable with respect to the symbolic weights with
    \begin{align}
        \nabla_{\mathbf{w}_{sy}} V(\mathbf{w}_{sy}, \mathbf{b}(\mathbf{x}_{sy}, \mathbf{g}_{nn}(\mathbf{x}_{nn}, \mathbf{w}_{nn})) = \mathbf{\Phi(\mathbf{y}^{*}, \mathbf{x}_{sy}, \mathbf{g}_{nn}(\mathbf{x}_{nn}, \mathbf{w}_{nn}))}.
    \end{align}

    Next, we show subdifferentiability of the optimal value-function with respect to the constraint constants, $\mathbf{b}(\mathbf{x}_{sy}, \mathbf{g}_{nn}(\mathbf{x}_{nn}, \mathbf{w}_{nn}))$.
    Suppose at a setting of the neural weights $\mathbf{w}_{nn} \in \mathbb{R}^{n_g}$ there is a feasible point $\nu$ for the NeuPSL inference problem. 
    Moreover, suppose $\nu$ strictly satisfies the $i'th$ inequality constraint of \eqref{eq:regularized_lcqp_primal}, i.e., $\mathbf{A}[i] \mathbf{\nu} + \mathbf{b}(\mathbf{x}_{sy}, \mathbf{g}_{nn}(\mathbf{x}_{nn}, \mathbf{w}_{nn}))[i] < 0$.
    Observe that the following strongly convex conic program is equivalent to the LCQP formulation of NeuPSL inference, \eqref{eq:regularized_lcqp_primal}:
    \begin{align}
        \min_{\mathbf{\nu} \in \mathbb{R}^{n_{\mathbf{y}} + m_{S} + m_{L}}} & \, 
            \mathbf{\nu}^T (\mathbf{D}(\mathbf{w}_{sy}) + \epsilon \mathbf{I}) \mathbf{\nu} + \mathbf{c}(\mathbf{w}_{sy})^T \mathbf{\nu} + P_{\Omega \setminus i}(\mathbf{\nu}) \label{eq:regularized_lcqp_cone_primal}\\
            \textrm{s.t.} \quad         
            & \mathbf{A}[i] \mathbf{\nu} + \mathbf{b}(\mathbf{x}_{sy}, \mathbf{g}_{nn}(\mathbf{x}_{nn}, \mathbf{w}_{nn}))[i] \in \mathbb{R}_{\leq 0} \nonumber,
    \end{align}
    where $P_{\Omega \setminus i}(\mathbf{\nu}): \mathbb{R}^{n_{\mathbf{y}} + m_{S} + m_{L}} \to \{0, \infty\}$ is the indicator function identifying feasibility w.r.t. all the constraints of the LCQP formulation of NeuPSL inference in \eqref{eq:regularized_lcqp_primal} except the $i'th$ constraint: $\mathbf{A}[i] \mathbf{\nu} + \mathbf{b}(\mathbf{x}_{sy}, \mathbf{g}_{nn}(\mathbf{x}_{nn}, \mathbf{w}_{nn}))[i] \leq 0$.
    In other words, in the conic formulation above only the $i'th$ constraint is explicit.
    Note that $\mathbb{R}_{\leq 0}$ is a closed convex cone in $\mathbb{R}$.
    Moreover, both the objective in the program and the mapping $G(\mathbf{\nu}) := \mathbf{A}[i] \mathbf{\nu} + \mathbf{b}(\mathbf{x}_{sy}, \mathbf{g}_{nn}(\mathbf{x}_{nn}, \mathbf{w}_{nn}))[i]$ are convex.
    Lastly, note the constraint qualification \eqref{eq:cone_constraint_qualification} is similar to Slater's condition in the case of \eqref{eq:regularized_lcqp_cone_primal} which is satisfied by the supposition there exists a feasible $\nu$ that strictly satisfies the $i'th$ inequality constraint of \eqref{eq:regularized_lcqp_primal}.
    Therefore, \eqref{eq:regularized_lcqp_cone_primal} satisfies the conditions of \thmref{thm:value_function_subdifferentiable_wrt_constraint_constants}.
    Thus, the value function is continuous in the constraint constant $\mathbf{b}(\mathbf{x}_{sy}, \mathbf{g}_{nn}(\mathbf{x}_{nn}, \mathbf{w}_{nn}))[i]$ at $\mathbf{w}_{nn}$ and 
    \begin{align}
        & \partial_{\mathbf{b}[i]} V(\mathbf{w}_{sy}, \mathbf{b}(\mathbf{x}_{sy}, \mathbf{g}_{nn}(\mathbf{x}_{nn}, \mathbf{w}_{nn}))) \\ 
        & \quad = \{\mathbf{\mu}^{*}[i] \, \vert \, \mathbf{\mu}^{*} \in \argmin_{\mathbf{\mu} \in \mathbb{R}_{\geq 0}^{2 \cdot n_{\mathbf{y}} + m + q}}
        h(\mathbf{\mu}; \mathbf{w}_{sy}, \mathbf{b}(\mathbf{x}_{sy}, \mathbf{g}_{nn}(\mathbf{x}_{nn}, \mathbf{w}_{nn}))) \}. \nonumber
    \end{align}
    Moreover, when $\mathbf{b}(\mathbf{x}_{sy}, \mathbf{g}_{nn}(\mathbf{x}_{nn}, \mathbf{w}_{nn}))$ is a smooth function of the neural weights $\mathbf{w}_{nn}$, then we can apply the chain rule for regular subgradients, \thmref{theorem:subgradient_chain_rule}, to get 
    \begin{align}
        & \hat{\partial}_{\mathbf{w}_{nn}} V(\mathbf{w}_{sy}, \mathbf{b}(\mathbf{x}_{sy}, \mathbf{g}_{nn}(\mathbf{x}_{nn}, \mathbf{w}_{nn})) \\ 
        & \quad  \supset \nabla \mathbf{b}(\mathbf{x}_{sy}, \mathbf{g}_{nn}(\mathbf{x}_{nn}, \mathbf{w}_{nn}) ^T \partial_{\mathbf{b}} V(\mathbf{w}_{sy}, \mathbf{b}(\mathbf{x}_{sy}, \mathbf{g}_{nn}(\mathbf{x}_{nn}, \mathbf{w}_{nn})). \nonumber
    \end{align}

    To prove the optimal value function is Lipschitz smooth over $\mathbf{w}_{sy}$, it is equivalent to show it is continuously differentiable and that all gradients have bounded magnitude.
    To show the value function is continuously differentiable, we first apply the result asserting the minimizer is unique and a continuous function of the symbolic parameters $\mathbf{w}_{sy}$.
    Therefore, the optimal value function gradient is a composition of continuous functions, hence continuous in $\mathbf{w}_{sy}$.
    The fact that the value function has a bounded gradient magnitude follows from the fact that the decision variables $\mathbf{y}$ have a compact domain over which the gradient is finite; hence a trivial and finite upper bound exists on the gradient magnitude.
\end{proof}

\section{Extended Empirical Analysis}
\label{appendix:experiments}

In this section, we provide additional information on the empirical analysis.
The subsequent subsections will examine the modular datasets and the hyperparameters employed for each experiment.
Additional model details, including neural model architectures and symbolic model constraints, can be found at \url{https://github.com/linqs/dickens-arxiv24}.

\subsection{Modular Datasets}
\label{appendix:modular-datasets}
\begin{itemize}
    \item \textbf{4Forums and CreateDebate}:
    Stance-4Forums and Stance-CreateDebate are two datasets containing dialogues from online debate sites: \url{4forums.com} and \url{createdebate.com}, respectively.
    In this paper, we study stance classification, i.e., the task of identifying the stance of a speaker in a debate as being for or against.

    \item \textbf{Epinions}:
    Epinions is a trust network with $2,000$ individuals connected by $8,675$ directed edges representing whether they know each other and whether they trust each other \cite{richardson:iswc03}. 
    We study link prediction, i.e., we predict if two individuals trust each other.

    In each of the $5$ data splits, the entire network is available, and the prediction performance is measured on $\frac{1}{8}$ of the trust labels. 
    The remaining set of labels are available for training.
    We use The NeuPSL model from \citenoun{bach:jmlr17}.
    The data and NeuPSL model are available at \url{https://github.com/linqs/psl-examples/tree/main/epinions}.
    \item \textbf{Citeseer and Cora}:
    Citeseer and Cora are citation networks introduced by \citenoun{sen:aim08}.
    For Citeseer, $3,312$ documents are connected by $4,732$ edges representing citation links. 
    For Cora, $2,708$ documents are connected by $5,429$ edges representing citation links.
    We study node classification, i.e., we classify the documents into one of $6$ topics for Citeseer and $7$ topics for Cora.

    For each of the $10$ folds, we randomly sample $5\%$ of the node labels for training $5\%$ of the node labels for validation and $1,000$ for testing.
    The models for modular learning performance experiments are extended versions from \citenoun{bach:jmlr17} \cite{bach:jmlr17}.
    Specifically, a copy of each rule is made that is specialized for the topic.
    Moreover, topic propagation across citation links is considered for papers with differing topics. 
    For instance, the possibility of a citation from a paper with topic $'\pslarg{A}'$ could imply a paper is more or less likely to be topic $'\pslarg{B}'$.
    The extended models are available at \url{https://github.com/linqs/dickens-arxiv24/tree/main/modular_learning/psl-extended-examples}.
    The models for learning prediction performance experiments are from \citenoun{pryor:ijcai23}.
    The data and models are available at: \url{https://github.com/linqs/dickens-arxiv24/tree/main/citation/models/symbolic}.
    \item \textbf{DDI}:
    Drug-drug interaction (DDI) is a network of $315$ drugs and $4,293$ interactions derived from the DrugBank database \citep{wishart:nar06}.
    The edges in the drug network represent interactions and seven different similarity metrics.
    In this paper, we perform link prediction, i.e., we infer unknown drug-drug interactions.
    
    The $5$ data splits and the NeuPSL model we evaluate in this paper originated from \citenoun{sridhar:bio16}. 
    The data and NeuPSL models are available at: \url{https://github.com/linqs/psl-examples/tree/main/drug-drug-interaction}.
    \item \textbf{Yelp}:
    Yelp is a network of $34,454$ users and $3,605$ items connected by $99,049$ edges representing ratings.
    The task is to predict missing ratings, i.e., regression, which could be used in a recommendation system.
    
    In each of the $5$ folds, $80\%$ of the ratings are randomly sampled and available for training, and the remaining $20\%$ is held out for testing.
    We use The NeuPSL model from \citenoun{kouki:recsys15}.
    The data and NeuPSL model are available at: \url{https://github.com/linqs/psl-examples/tree/main/yelp}.
\end{itemize}

\subsection{Hyperparameters}
\label{appendix:experiments-hyperparameters}

The hyperparameter ranges were decided upon based on the results presented in \cite{pryor:ijcai23}, \cite{dickens:icml24}, and \cite{dickens:make24}.
For the complete set of hyperparameter settings, please refer to the original papers or visit \textit{https://github.com/linqs/dickens-arxiv24}.

\begin{table}[ht]
    \footnotesize
    \centering
    \caption{Hyperparameter ranges and final values for the NeSy-EBM learning experiments.}
    \label{tab:appendix-hyperparameters-learning}
    \begin{tabular}{c|c|c||c|c}
        \toprule
            & \textbf{Algorithm} & \textbf{Parameter} & \textbf{Range} & \textbf{Final Value} \\
        \midrule
        \midrule
            \multirow{5}{*}{\emph{MNIST-Add1}} & \multirow{1}{*}{\textbf{Energy}} 
            & \textbf{Neural Learning Rate} & $\{10^{-3}, 10^{-4}, 10^{-5} \}$ & $10^{-4}$ \\
        \cline{2-5}
            & \multirow{2}{*}{\textbf{Bilevel}}
            & \textbf{Energy Loss Coefficient} & $\{10^{-1}, 1, 10 \}$ & $10$ \\
            & & \textbf{Neural Learning Rate} & $\{10^{-3}, 10^{-4}, 10^{-5} \}$ & $10^{-4}$ \\
        \cline{2-5}
            & \multirow{2}{*}{\textbf{Policy}}
            & \textbf{Energy Loss Coefficient} & $\{10^{-1}, 1, 10 \}$ & $10$ \\
            & & \textbf{Neural Learning Rate} & $\{10^{-3}, 10^{-4}, 10^{-5} \}$ & $10^{-4}$ \\
        \midrule
            \multirow{5}{*}{\emph{MNIST-Add2}} & \multirow{1}{*}{\textbf{Energy}} 
            & \textbf{Neural Learning Rate} & $\{10^{-3}, 10^{-4}, 10^{-5} \}$ & $10^{-4}$ \\
        \cline{2-5}
            & \multirow{2}{*}{\textbf{Bilevel}}
            & \textbf{Energy Loss Coefficient} & $\{10^{-1}, 1, 10 \}$ & $10$ \\
            & & \textbf{Neural Learning Rate} & $\{10^{-3}, 10^{-4}, 10^{-5} \}$ & $10^{-4}$ \\
        \cline{2-5}
            & \multirow{2}{*}{\textbf{Policy}}
            & \textbf{Energy Loss Coefficient} & $\{10^{-1}, 1, 10 \}$ & $10$ \\
            & & \textbf{Neural Learning Rate} & $\{10^{-3}, 10^{-4}, 10^{-5} \}$ & $10^{-4}$ \\
        \midrule
            \multirow{8}{*}{\emph{Visual-Sudoku}} & \multirow{1}{*}{\textbf{Energy}} 
            & \textbf{Neural Learning Rate} & $\{10^{-3}, 10^{-4}, 10^{-5} \}$ & $10^{-4}$ \\
            & & \textbf{Alpha} & $\{0.1, 0.5, 0.9\}$ & $0.1$ \\
        \cline{2-5}
            & \multirow{2}{*}{\textbf{Bilevel}}
            & \textbf{Energy Loss Coefficient} & $\{10^{-1}, 1, 10 \}$ & $10$ \\
            & & \textbf{Neural Learning Rate} & $\{10^{-3}, 10^{-4}, 10^{-5} \}$ & $10^{-3}$ \\
            & & \textbf{Alpha} & $\{0.1, 0.5, 0.9\}$ & $0.1$ \\
        \cline{2-5}
            & \multirow{2}{*}{\textbf{Policy}}
            & \textbf{Energy Loss Coefficient} & $\{10^{-1}, 1, 10 \}$ & $10$ \\
            & & \textbf{Neural Learning Rate} & $\{10^{-3}, 10^{-4}, 10^{-5} \}$ & $10^{-3}$ \\
            & & \textbf{Alpha} & $\{0.1, 0.5, 0.9\}$ & $0.1$ \\
        \midrule
            \multirow{6}{*}{\emph{Path-Finding}} & \multirow{1}{*}{\textbf{Energy}} 
            & \textbf{Neural Learning Rate} & $\{10^{-3}, 10^{-4}, 10^{-5} \}$ & $10^{-3}$ \\
        \cline{2-5}
            & \multirow{2}{*}{\textbf{Bilevel}} 
            & \textbf{Energy Loss Coefficient} & $\{10^{-1}, 1\}$ & $1$ \\
            & & \textbf{Neural Learning Rate} & $\{5^{-4}, 10^{-4}, 10^{-5} \}$ & $5^{-4}$ \\
        \cline{2-5}
            & \multirow{3}{*}{\textbf{Policy}}
            & \textbf{Energy Loss Coefficient} & $\{10^{-1}, 1\}$ & $1$ \\
            & & \textbf{Neural Learning Rate} & $\{5^{-4}, 10^{-4}, 10^{-5}\}$ & $5^{-4}$ \\
            & & \textbf{Alpha} & $\{0.1, 0.5, 0.9\}$ & $0.1$ \\
        \midrule
            \multirow{8}{*}{\emph{Citeseer}} & \multirow{2}{*}{\textbf{Energy}}
            & \textbf{Neural Learning Rate} & $\{10^{-1}, 10^{-2}, 10^{-3} \}$ & $10^{-3}$ \\
            & & \textbf{Step Size} & $\{10^{-1}, 10^{-2}, 10^{-3} \}$ & $10^{-3}$ \\
        \cline{2-5}
            & \multirow{3}{*}{\textbf{Bilevel}} 
            & \textbf{Energy Loss Coefficient} & $\{0, 10^{-1}, 1, 10\}$ & $1$ \\
            & & \textbf{Neural Learning Rate} & $\{10^{-1}, 10^{-2}, 10^{-3} \}$ & $10^{-3}$ \\
            & & \textbf{Step Size} & $\{10^{-1}, 10^{-2}, 10^{-3} \}$ & $10^{-3}$ \\
        \cline{2-5}
            & \multirow{3}{*}{\textbf{Policy}}
            & \textbf{Energy Loss Coefficient} & $\{0, 10^{-1}, 1, 10\}$ & $1$ \\
            & & \textbf{Neural Learning Rate} & $\{10^{-1}, 10^{-2}, 10^{-3} \}$ & $10^{-3}$ \\
            & & \textbf{Alpha} & $\{0.1, 0.5, 0.9\}$ & $0.1$ \\
        \midrule
            \multirow{8}{*}{\emph{Citeseer}} & \multirow{2}{*}{\textbf{Energy}}
            & \textbf{Neural Learning Rate} & $\{10^{-1}, 10^{-2}, 10^{-3} \}$ & $10^{-3}$ \\
            & & \textbf{Step Size} & $\{10^{-1}, 10^{-2}, 10^{-3} \}$ & $10^{-3}$ \\
        \cline{2-5}
            & \multirow{3}{*}{\textbf{Bilevel}} 
            & \textbf{Energy Loss Coefficient} & $\{0, 10^{-1}, 1, 10\}$ & $1$ \\
            & & \textbf{Neural Learning Rate} & $\{10^{-1}, 10^{-2}, 10^{-3} \}$ & $10^{-3}$ \\
            & & \textbf{Step Size} & $\{10^{-1}, 10^{-2}, 10^{-3} \}$ & $10^{-3}$ \\
        \cline{2-5}
            & \multirow{3}{*}{\textbf{Policy}}
            & \textbf{Energy Loss Coefficient} & $\{0, 10^{-1}, 1, 10\}$ & $1$ \\
            & & \textbf{Neural Learning Rate} & $\{10^{-1}, 10^{-2}, 10^{-3} \}$ & $10^{-3}$ \\
            & & \textbf{Alpha} & $\{0.1, 0.5, 0.9\}$ & $0.1$ \\
        \bottomrule
    \end{tabular}
\end{table}

\vskip 0.2in
\bibliographystyle{SageH}
\bibliography{nesy25}

\end{document}